\title{No-Regret Learning in Two-Echelon Supply Chain \\with Unknown Demand Distribution}
\author{%
  Mengxiao Zhang \\
  University of Southern California \\  \texttt{mengxiao.zhang@usc.edu} \\ 
  \and
  Shi Chen \\
  University of Washington\\
  \texttt{shichen@uw.edu} \\
  \and
  Haipeng Luo \\
  University of Southern California \\  \texttt{haipengl@usc.edu}\\
  \and
  Yingfei Wang \\
  University of Washington\\
  \texttt{yingfei@uw.edu}
       \\
}
\newif\ifspacehack
\renewcommand{\tilde}{\widetilde}
\renewcommand{\hat}{\widehat}
\newtheorem{theorem}{Theorem}[section]
\newtheorem{lemma}[theorem]{Lemma}
\newcommand{\calA}{{\mathcal{A}}}
\newcommand{\calX}{{\mathcal{X}}}
\newcommand{\calF}{{\mathcal{F}}}
\newcommand{\calI}{{\mathcal{I}}}
\newcommand{\calD}{{\mathcal{D}}}
\newcommand{\calT}{{\mathcal{T}}}
\newcommand{\calN}{{\mathcal{N}}}
\newcommand{\Reg}{\text{\rm Reg}}
\newcommand{\contract}{\text{c}}
\newcommand{\dlow}{d}
\newcommand{\dup}{D}
\newcommand{\philow}{\gamma}
\newcommand{\phiup}{\Gamma}
\DeclareMathOperator*{\argmin}{argmin}
\newcommand{\inner}[1]{ \langle {#1} \rangle }
\newcommand{\wh}{\widehat}
\newcommand{\wt}{\widetilde}
\newcommand{\bench}{\mathring{H}}
\newcommand{\benchs}{\mathring{s}}
\newcommand{\trues}{\wh{s}}
\newcommand{\inters}{\wt{s}}
\newcommand{\order}{\ensuremath{\mathcal{O}}}
\newcommand{\otil}{\ensuremath{\tilde{\mathcal{O}}}}
\renewcommand{\tilde}{\widetilde}
\renewcommand{\hat}{\widehat}
\def \P {\mathcal{P}}
\def \ln {\log}
\theoremstyle{definition}
\newtheorem{assumption}{Assumption}
\newtheorem{property}{Property}
\definecolor{wine_red}{RGB}{228,48,64}
\definecolor{DSgray}{cmyk}{0,1,0,0}
\newcommand{\mznote}[1]{{\Authornote{Mengxiao}{#1}}}
\newcommand{\pref}[1]{\prettyref{#1}}
\newcommand{\savehyperref}[2]{\texorpdfstring{\hyperref[#1]{#2}}{#2}}
\def \epsilon {\varepsilon}
\begin{document}

\maketitle

\begin{abstract}
Supply chain management (SCM) has been recognized as an important discipline with applications to many industries, where the two-echelon stochastic inventory model, involving one downstream retailer and one upstream supplier, plays a fundamental role for developing firms' SCM strategies.
In this work, we aim at designing online learning algorithms for this problem with an unknown demand distribution, which brings distinct features as compared to classic online optimization problems. Specifically, we consider the two-echelon supply chain model introduced in~\citep{MS99:Cachon_Zipkin} under two different settings: the \emph{centralized} setting, where a planner decides both agents' strategy simultaneously, and the \emph{decentralized} setting, where two agents decide their strategy independently and selfishly. We design algorithms that achieve favorable guarantees for both regret and convergence to the optimal inventory decision in both settings, and additionally for individual regret in the decentralized setting.
Our algorithms are based on Online Gradient Descent and Online Newton Step, together with several new ingredients specifically designed for our problem. We also implement our algorithms and show their empirical effectiveness.

\end{abstract}

\section{Introduction}\label{sec: intro}
A supply chain is two or more parties linked by a flow of goods, information, and funds, before a product can be finally delivered to outside customers. When multiple decision makers are involved, behavior that is locally rational can be inefficient from a global perspective. Supply chain management (SCM) research then focuses on methods for improving system efficiencies, so as to
 ``efficiently integrate suppliers, manufacturers, warehouses, and stores $\cdots$ in order to minimize system-wide costs while satisfying service level requirements''~\citep{simchi2008designing}. In the vast body of SCM literature, the mathematical model of a two-echelon stochastic inventory system with a known demand distribution plays a fundamental role for analyzing firms' SCM strategies and has been well studied over the past decades~\citep{clark1960optimal,federgruen1984computational,chen1994lower,MS99:Cachon_Zipkin}.

In the classic two-echelon stochastic inventory planning problem, two agents, Agent 1 (the retailer, referred to as \emph{he}) and Agent 2 (the supplier, referred to as \emph{she}), will go through a process of $T$ rounds. 
Following the sequence of events in the SCM literature~\citep{MS99:Cachon_Zipkin}, Agent 1 first observes an external demand $d_t\sim \calD$ and utilizes his available inventory (products in stock) to satisfy customers' demand; as a result, Agent 1 suffers either an inventory holding cost (for excess inventory) or a backorder cost (for excess demand). 
Then, Agent 1 decides his desired inventory level for round $t+1$ and orders from Agent 2. 
Next, Agent 2 handles the order from Agent 1, suffers inventory holding costs or backorder costs, decides her base-stock level for round $t+1$, and places an order from an external source (assumed to have infinite inventory). 
The two agents' orders will arrive at the beginning of the next round. The optimal policy with known demand distributions is known as the \emph{base-stock} policy for both agents \citep{clark1960optimal,federgruen1984computational,chen1994lower}.
Specifically, a base-stock policy keeps a fixed base-stock level $s$ over all time periods, meaning that if the inventory level (on-hand inventory minus the backlogged ordered) at the beginning of a period is below $s$, an order will be placed to bring the inventory level to $s$; otherwise, no order is placed. 

There are recently works extending the classic inventory control problem with known demand distribution to the one with \emph{unknown} distribution~\citep{levi2007provably,MOR2009nonparametric,huh2011adaptive,levi2015data,zhang2018perishable,chen2020optimal,MOR21:nonparametric,chen2020dynamic,SSRN2021}. However,  these works consider the single-agent case, instead of the two-echelon case.
In this work, we aim at extending the classic two-echelon stochastic inventory planning problem to an online setup with an \emph{unknown} demand distribution $\calD$. In addition, we consider the \emph{nonperishable} setting in which any leftover inventory will be carried over to the next round; as a result, the inventory level at the beginning of the next round can not be lower than the inventory level at the end of current round. The performance is measured by i) regret, the difference between their total loss and that of the best base-stock policy in hindsight; ii) last-iterate convergence to the best base-stock policy for both agents.

It is important to note that Agent 2 only observes orders from Agent 1 and does not necessarily receive the same demand information as Agent 1 does. 
In addition, in our problem formulation, Agent 1's inventory will be impacted by Agent 2's shortages. Specifically, when Agent 2 does not have enough inventory to fill Agent 1's order, we assume that Agent 2 cannot expedite to meet the shortfall, and this shortfall will cause a partial shipment to Agent 1, which implies that Agent 1 may not achieve his desired inventory level at the beginning of each round. 
This model with a known demand distribution is first examined in~\citep{MS99:Cachon_Zipkin}.   

We consider two different decision-making settings: \emph{centralized} and \emph{decentralized} settings. The centralized setting takes the perspective of a central planner who decides both agents' desired inventory level at each round in order to minimize the total loss of the entire supply chain. A more interesting and realistic setting concerns a decentralized structure in which the two agents independently decide their own desired inventory level at each round to minimize their own costs, which often results in poor performance of the supply chain (i.e., the optimal base-stock level for each agent may not be the one that achieves minimal overall loss). To achieve the optimal supply chain performance under the decentralized setting, as discussed in previous works (i.e. ~\citep{cachon2003supply}), some mechanism concerns contractual arrangement or corporate rules, such as rules for sharing the holding costs and backorder costs, accounting methods, and/or operational constraints. A \emph{contract} transfers the loss between the two agents such that each agent's objective is aligned with the supply chain's objective. However, as far as we know, this is only discussed under known demand distribution. Thus, we extend the results to the online setting with an unknown demand distribution and design learning algorithms to achieve the optimal supply chain performance.

\subsection{Techniques and Results}\label{sec: tech-results}

\textbf{Techniques.} Our problem has three salient features that are different from the classic stochastic online convex optimization problem. 
First, as will be shown in~\pref{sec: coupling},  the overall loss function is not convex with respect to both agents' inventory decisions, meaning that we can not directly apply online convex optimization algorithms to this problem. 
Second, due to the multi-echelon nature of the supply chain, Agent 2's input information is dependent on the information generated by Agent 1, which can be non-stochastic. 
Third, in the nonperishable setting, each agent's inventory level at the beginning of the next round \emph{can not} be lower than the inventory level at the end of the current round, which implies that the desired inventory level may not be always achievable. 

To address the first challenge, we introduce an augmented loss function upon which is convex and we are able to perform online convex optimization algorithms.
To address the second and the third challenge, our algorithm for both agents has the low-switching property, which only updates the strategy $\order(\log T)$ times. 
This makes Agent 2's input information almost the same as the realized demand at each round. 
For Agent 1, as he can always observe the true realized demand at each round in both centralized and decentralized setting, he makes his inventory decision based on the empirical demand distribution, which is updated $\order(\log T)$ times during the process. 
For Agent 2, in the centralized and the decentralized setting, our algorithm is a variant of Online Gradient Descent (OGD) and Online Newton Step (ONS)~\citep{ML07'log-exp-concave}, respectively. 
Both of the algorithms have the important low-switching property, which only updates the strategy $\order(\log T)$ times while at the same time achieving $\otil(\sqrt{T})$ and $\order(\log^2 T)$ regret respectively. 
We remark that our variant of ONS algorithm achieves $\order(\log^2 T)$ regret, even when the loss function is not strongly convex but satisfies a certain property. 

\textbf{Our results.} 
In the centralized setting, we design an algorithm which achieves $\otil(\sqrt{T})$ regret and last-iterate convergence to the optimal base-stock policy with rate $\otil(1/\sqrt{T})$ for Agent $1$ and $\otil(T^{-1/4})$ for Agent $2$. 
In the decentralized setting, we design a novel contract mechanism and also learning algorithms for both agents, which lead to convergence to both agents' global optimal base-stock policy with the same rate as the centralized setting. 
In addition, our algorithm guarantees that Agent $1$ has $\otil(T^{3/4})$ individual regret and Agent $2$ has $\order(\log^3 T)$ individual regret. 
Moreover, the regret with respect to the overall loss is bounded by $\otil(\sqrt{T})$, which is the same as the one in the centralized setting.~\pref{tab: summary} shows a summary of our results. 
We also implement our algorithms, and the empirical results validate the effectiveness of our algorithms (see~\pref{app: coupling-exp}). To the best of our knowledge, our work is the first one considering the two-echelon stochastic inventory planning problem in the online setup with unknown demand distribution.

\renewcommand{\arraystretch}{1.4}
\begin{table*}[t]
\caption{Summary of our results. ``Centralized'' and ``Decentralized'' represent the centralized and decentralized settings, respectively. The definitions of $\Reg_T$, $\Reg_{T,1}$ and $\Reg_{T,2}$ are introduced in~\pref{sec: coupling}. ``Convergence for Agent 1'' and ``Convergence for Agent 2'' represent the convergence rate to Agent 1 and Agent 2's optimal inventory level, respectively.\vspace{2mm}
}
\centering
\resizebox{0.95\textwidth}{!}{
\begin{tabular}{|c|c|c|c|c|c|}
\hline
Setting       & $\Reg_T$          & $\Reg_{T,1}$     & $\Reg_{T,2}$      & Convergence for Agent 1            & Convergence for Agent 2          \\ \hline
Centralized   & $\otil(\sqrt{T})$ & N/A              & N/A               & $\otil(1/\sqrt{T})$ & $\otil(T^{-1/4})$ \\ \hline
Decentralized & $\otil(\sqrt{T})$ & $\otil(T^{3/4})$ & $\otil(\log^3 T)$ & $\otil(1/\sqrt{T})$ & $\otil(T^{-1/4})$ \\ \hline
\end{tabular}}
\vspace{0mm}
\label{tab: summary}
\end{table*}

\subsection{Related Works}\label{sec: related works}
There is a vast body of SCM literature on achieving the optimal supply chain performance in the decentralized setting~\citep{lariviere1999supply,tsay1999modeling,cachon2003supply,chen2003information} concerning coordination with contract design and information sharing. 
In this body of literature, there is a line of works based on multi-echelon decentralized inventory models, which are closely related to our study, including ~\citep{lee1999decentralized, MS99:Cachon_Zipkin, lee2000value, porteus2000responsibility, watson2005decentralized, shang2009coordination}. 
However, these works all assume that the demand distribution is known (at least to the downstream agent). 

More recently, there has been growing interest in single-agent inventory control problems with unknown demand distribution~\citep{levi2007provably,MOR2009nonparametric,huh2011adaptive,levi2015data,zhang2018perishable,chen2020optimal,MOR21:nonparametric,chen2020dynamic,SSRN2021}. 
In particular,~\citep{MOR2009nonparametric} achieves $\order(\log T)$ regret in the perishable setting and $\otil(\sqrt{T})$ regret in the nonperishable setting using online gradient descent method.~\citep{SSRN2021} further extends the results to the feature-based setting.
The nonparametric approach of this line of works is fundamentally different from the conventional inventory control models in which the inventory manager knows the demand distribution (see, e.g.,~\citep{zipkin2000foundations,snyder2019fundamentals} for comprehensive reviews of the conventional inventory models); however, unlike the conventional inventory theory which has been extended from the single-echelon problems to multi-echelon problems, little has been done for the multi-echelon problems under unknown demand distributions, and we aim to fill in this gap. 

The other relevant line of works is online convex optimization.~\citep{ICML03'OCO} shows that OGD algorithm achieves $\order(\sqrt{T})$ expected regret bound for general convex functions. 
If the loss functions are exp-concave,~\citep{ML07'log-exp-concave} shows that ONS achieves $\order(\log T)$ expected regret bound. Both algorithms change their decision at every round.
On the other hand,
\citet{COLT21:lazyoco} proposes a lazy version of OGD, which changes its decision only $\order(\log T)$ times and still achieves $\otil(\sqrt{T})$ (or $\order(\log^2 T)$) regret when the loss functions are stochastically generated and convex (or strongly convex). 
In our problem, it turns out to be crucial to apply an algorithm with a small number of switches, and our algorithm generalizes the idea of~\citep{COLT21:lazyoco} to the ONS algorithm to achieve $\order(\log T)$ switches and $\otil(1)$ regret for a larger class of functions including strong convex functions. 
\section{Preliminary}\label{sec:problem_setup}

\textbf{Notations.} For a positive integer $n$, denote $[n]$ to be the set $\{1,2,\ldots,n\}$. For conciseness, we hide polynomial dependence on the problem-dependent constants in the $\order(\cdot)$ notation and only show the dependence on the horizon $T$. $\otil(\cdot)$ further hides the poly-logarithmic dependency on $T$. Define $(x)^+\triangleq \max\{x,0\}$ and $(x)^-\triangleq \max\{-x,0\}$. $\|x\|$ denotes the Euclidean norm of $x$.

Throughout this work, we make the following two mild assumptions on the demand distribution $\calD$. These mild assumptions are also made in~\citep{chen2020optimal}. 
\begin{assumption}\label{assum:bounded}
The demand distribution $\calD$ is supported on $[\dlow, \dup]$ where $\dup> \dlow>0$.
\end{assumption}
\begin{assumption}\label{assum:bounded-density}
The image of the density function of $\calD$, $\phi(\cdot)$, lies in $[\philow, \phiup]$ where $\phiup>\philow >0$.
\end{assumption}

Under the above demand assumptions, we consider the following model in the two-echelon inventory planning problem, which is first considered in~\citep{MS99:Cachon_Zipkin}. Our goal is to find the best base-stock policy. 

We first introduce the cost function under a fixed base-stock policy. 
In this model, we assume that Agent 2's inventory shortage will cause delayed (by one round) shipment and shortfalls at Agent 1 while Agent 2's orders will always be satisfied as we assume that the external source has infinite inventory. 
In addition, for unfilled demand for Agent 1, there is a backorder cost shared by the two agents, $\alpha p_1$ for Agent 1 and $(1-\alpha)p_1$ for Agent 2, where $\alpha$ is the negotiated cost sharing parameters via contractual arrangements. 
The inventory holding cost per unit for Agent 1 and Agent 2 is $h_1$ and $h_2$ respectively. 

Now we are ready to define the loss function for Agent 1 and Agent 2 respectively. Specifically, Agent 1's loss function is formulated as follows. Define
\begin{align*}
    G_1(y)=h_1\mathbb{E}_{x\sim \calD}[(y-x)^+]+\alpha p_1\mathbb{E}_{x\sim \calD}[(y-x)^-],
\end{align*}
which is Agent 1's expected sum of the holding and backorder costs per round with \emph{unlimited supply} under base-stock level $y$.
Since the actual supply to Agent 1 is limited by Agent 2's available inventory, according to~\citep{MS99:Cachon_Zipkin}, Agent 1's expected sum of the holding and backorder costs per period is defined as
\begin{align*}
H_1(s_1,s_2)\triangleq\Phi(s_2)G_1(s_1)+\int_{s_2}^{D}G_1(s_1+s_2-x)\phi(x)dx,
\end{align*}
where $\Phi(\cdot)$ is the cumulative density function of $\calD$. 
The first term is Agent 1's costs when Agent 2 has sufficient inventory to satisfy Agent 1's order (i.e., Agent 1's inventory level can be brought up to $s_1$), while the second term is the cost when Agent 2 does not have enough inventory to satisfy Agent 1's order, meaning that Agent 2's shortfall is $x-s_2$ and Agent 1's inventory can only be brought up to $s_1+s_2-x$.

For Agent 2, define 
\begin{align*}
    G_2(y)=(1-\alpha)p_1\mathbb{E}_{x\sim \calD}\left[(y-x)^-\right],
\end{align*}
which is the expected backorder cost per period incurred by Agent 2 due to Agent 1's shortages. 
Then, the expected backorder cost incurred by Agent 2 is
\begin{align*}
\Phi(s_2)G_2(s_1)+\int_{s_2}^{D}G_2(s_1+s_2-x)\phi(x)dx.
\end{align*}
The first term is the backorder cost incurred by Agent 2 due to Agent 1's shortfalls when the Agent 1's inventory level is $s_1$, while the second term is the backorder cost incurred by Agent 2 when Agent 1's inventory level is $s_1-(x-s_2)<s_1$. 
As can be seen, Agent 2's shortages $(x-s_2)$ will cause insufficient supply to Agent 1, which, in turn, will be detrimental to Agent 2 herself when Agent 1 is out of stock due to the insufficient supply. 
Therefore, Agent 2's loss function is the sum of the expected backorder cost and the expected holding cost, which is defined as 
\begin{align*}
    H_2(s_1,s_2)&\triangleq h_2\mathbb{E}_{x\sim \calD}[(s_2-x)^+]+\Phi(s_2)G_2(s_1)+\int_{s_2}^{D}G_2(s_1+s_2-x)\phi(x)dx.
\end{align*}
We also define the sum of both agents loss as $H(s_1,s_2)\triangleq H_1(s_1,s_2)+H_2(s_1,s_2)$ and $G(s)\triangleq G_1(s)+G_2(s)$.

\paragraph{Online Inventory Control} In this work, we study this conventional model in an online learning setting that proceeds in $T$ rounds.
Before the game starts, both Agent $1$ and Agent $2$ order an initial inventory level $s_{1,1}$ and $s_{1,2}$.  Then, for each round $t\in[T]$: 
\begin{itemize}
    \item at the start of round $t$, both agents' orders arrive. The current inventory level for Agent $1$ and Agent $2$ reaches to $\wh{s}_{t,1}$ and $\wh{s}_{t,2}$;
    \item external demand $d_t$ occurs at Agent 1's level where $d_t$ is drawn from the unknown demand distribution $\calD$. In this step, Agent 1 suffers from some inventory holding cost or backorder cost. Define the inventory level for Agent 1 after demand as $\inters_{t,1}$. This value can be negative as we assume backlogged orders;
    \item Agent $1$ decides his desired inventory level at the next round $s_{t+1,1}$, which leads to a demand for Agent 2: $o_{t}=(s_{t+1,1}-\inters_{t,1})^+$;
    \item Agent 2 receives the demand $o_{t}$ from Agent 1, and the inventory level after demand is $\inters_{t,2}$. Note that, in general, Agent 2 only knows $o_{t}$ instead of the real demand $d_t$. Agent 2 then suffers some inventory holding cost or backorder cost;
    \item Agent 2 decides her desired inventory level for the next round $s_{t+1,2}$ and orders $o_{t}'=(s_{t+1,2}-\inters_{t,2})^+$ from some external source.
\end{itemize}

We remark that the dynamic of the inventory for Agent 1 and Agent 2 are different. As we assume that the external source has infinite inventory, Agent 2's order can always be satisfied and we have the following dynamic for $\inters_{t,2}$ and $\wh{s}_{t+1,2}$: 
\begin{align}\label{eqn: s_2_dynamic}
    \inters_{t,2}=\wh{s}_{t,2}-o_{t}, \;\;\wh{s}_{t+1,2}=\inters_{t,2}+o_{t}'.
\end{align}
However, as Agent 2 may have delayed shipment when she does not have enough inventory, Agent 1's dynamic is defined as follows.
Define the delayed shipment of Agent 2 as $(o_{t-1}-\wh{s}_{t-1,2})^+$, which will arrive after Agent 1 has served the demand $d_t$. This means that 
\begin{align*}
    &\wt{s}_{t,1}=\wh{s}_{t,1}-d_t+(o_{t-1}-\wh{s}_{t-1,2})^+,\\
    &\wh{s}_{t+1,1}=\wt{s}_{t,1}+\min\{\wh{s}_{t,2},o_{t}\}.
\end{align*}
The specific costs suffered by the two agents in each step, as well as their objectives will be discussed in detail in~\pref{sec: decentralized-non-decoupling-model}.

\section{Main Results}\label{sec: coupling}

\subsection{Centralized Setting} \label{sec: central-planner-coupling}
In this section, we start from considering the centralized setting of our model where there is a central planner who decides both agents' strategy simultaneously. Define the loss suffered by the learner at round $t$ as follows: 
\begin{align*}
    \wt{H}_t = h_1(\trues_{t,1}-d_t)^++p_1(\trues_{t,1}-d_t)^-+h_2(\trues_{t,2}-o_t)^+,
\end{align*}
and the benchmark as the expected loss suffered by the best base-stock policy: $H(s_1^*,s_2^*)$ where $(s_1^*,s_2^*)=\argmin_{s_1,s_2}H(s_1,s_2)$. 
The regret is defined as the difference between the sum of the learners' total loss and the loss of the best base-stock policy, which is formally written as follows: 
\begin{align*}
\mathbb{E}\left[\Reg_{T}\right]=\mathbb{E}\left[\sum_{t=1}^T\left(\wt{H}_t-H(s_1^*,s_2^*)\right)\right].
\end{align*}

\begin{algorithm}[t]
   \caption{Central Planner for Coupling Model}
   \label{alg:central-lazy-three}
    \textbf{Input:} An instance of stochastic OGD $\calA$ (\pref{alg:ogd-third-party}).
    
   \textbf{Initialize:}  Arbitrary empirical cumulative density function $\wh{\Phi}_0(\cdot)$. Epoch length $L_1=1$. $\tau = 1$. 
   
   \For{$m=1,2,\ldots$}{
        \nl Define epoch $I_m=\{\tau,\tau+1,\ldots,\tau+L_m-1\}$.
    
        \nl Set $s_{m,1}=\wh{\Phi}_{m-1}^{-1}(\frac{h_2+p_1}{h_1+p_1})$. \label{line: s-1-calc-couple}
        
        \nl Receive $s_{m,2}$ from $\calA$.
        
        \While{$\tau\in I_m$}{
            \nl Decide the desired inventory level for both agents: $s_{m,1}$ for Agent $1$ and $s_{m,2}$ for Agent $2$.
            
            \nl Receive the realized demand $d_\tau$, $\tau \leftarrow \tau + 1$.
        }
        \nl Collect $\calD_m=\{d_{t'}\}_{t'\in I_m}$; define $\wh{\Phi}_{m}(x)=\frac{1}{L_m}\sum_{\tau\in I_m}\mathbb{I}\{d_\tau\leq x\}$ and also the inverse function $\wh{\Phi}_m^{-1}(z)=\min\{x:\wh{\Phi}(x)\geq z\}$; send $\wh{\Phi}_m(x)$ and $\calD_m$ to $\calA$; and set $L_{m+1}=2L_m$.
    }
\end{algorithm}

Compared with the standard online convex optimization problem~\citep{ICML03'OCO}, in which at each round the loss suffered in each round is a convex function of the current decision, our problem has two main difficulties. 
First, in our problem, the loss of the algorithm in each round depends not only on the current decided order-up-to level $s_{t,1}$ and $s_{t,2}$, but also the past decisions $s_{\tau,1}$ and $s_{\tau,2}$ for $\tau\in[t]$ as we consider the non-perishable setting, meaning that the ordered inventories can not be discarded. 
Second, even under fixed base-stock policy, the loss function $H(s_1,s_2)$ is \emph{not jointly convex} (also not convex in $s_2$).
In the following, we show how we handle these two difficulties respectively.

To deal with the first difficulty, our first key observation is that if both agents' decision $s_{t,1}$ and $s_{t,2}$ are changing very infrequently, then the loss of the algorithm at each round is almost equivalent to $\wh{H}_t(s_1,s_2)$, which is defined as:
\begin{align}\label{eqn: hhat}
    \wh{H}_t(s_{1},s_{2})\triangleq h_1(\mathring{s}_{t,1}-d_t)^++p_1(\mathring{s}_{t,1}-d_t)^-+h_2(s_{2}-d_t)^+,
\end{align}
where $\mathring{s}_{t,1}=s_{1}$ if $s_{2}>d_{t-1}$ and $\mathring{s}_{t,1}=s_{1}+s_{2}-d_{t-1}$ if $s_{2}\leq d_{t-1}$. 
Note that this loss function is a stochastic function and is only dependent on the current decision variables $(s_{1},s_{2})$.
 
To see why the loss function at round $t$ can be almost written as $\wh{H}_t(s_{t,1},s_{t,2})$ if both agents' decisions do not change very frequently, we first point out the two differences between $\wt{H}_t$ and $\wh{H}_{t}(s_1,s_2)$. First, as agents can not discard the inventories that have been ordered, the true inventory level $\wh{s}_{t,1}$ at the beginning of round $t$ may not be the desired inventory level $s_{t,1}$ when Agent $2$ does not have an inventory shortage, or $s_{t,1}+s_{t,2}-d_{t-1}$ when Agent $2$ has an inventory shortage. 
Similarly, Agent 2's true inventory level $\wh{s}_{t,2}$ may not be her desired inventory level $s_{t,2}$. Recall that $\wh{s}_{t,1}$ and $\wh{s}_{t,2}$ are used in defining $\wt{H}_t$.
Second, in $\wh{H}_{t}(s_{t,1},s_{t,2})$, the demand of Agent $2$ equals to $d_t$, while in the definition of $\wt{H}_{t}$, the demand for Agent $2$ is the order amount $o_t$ from Agent 1. 

Fortunately, these two differences can both be properly handled by a low-switching algorithm.
Specifically, suppose that both agents' desired inventory levels are kept the same: $s_{t,1}=s_1'$ and $s_{t,2}=s_2'$ for all $t$ in some time period $[t_0,t_0+L]$. Then, we can show that
\begin{align}
    &\wh{s}_{t,1}=
    \begin{cases}
        s_{1}', &\mbox{if $s_{2}'>d_{t-1}$},\\
        s_{1}'+s_{2}'-d_{t-1}, &\mbox{otherwise,}
    \end{cases}\label{eqn: low-switch-1}\\
    &\wh{s}_{t,2}=s_{2}',\label{eqn: low-switch-2}\\
    &o_t=d_t\label{eqn: low-switch-3},
\end{align}
except for at most $\Theta(1)$ rounds at the beginning of the period $[t_0, t_0+L]$
, making $\wt{H}_t = \wh{H}_t(s_{t,1},s_{t,2})$ for all the rest of the rounds. This is because~\pref{eqn: low-switch-1} does not hold only when $\wt{s}_{t-1,1}>s_{t,1}=s_1'$, which can only happen for $\Theta(1)$ rounds as the demand at each round is strictly larger than $0$ according to~\pref{assum:bounded}. Then, as $o_t=(s_{t+1,1}-\wt{s}_{t,1})^+=(s_{t+1,1}-\wh{s}_{t,1}+d_t)^+$, when $\wh{s}_{t,1}=s_1'=s_{t+1,1}$, we know that $o_t=d_t$. Similarly, as $\wh{s}_{t,2}\ne s_{t,2}$ only happens when $\wt{s}_{t-1,2}>s_{t,2}=s_2'$, and $o_t=d_t$ is strictly positive after $\Theta(1)$ rounds, we know that $\wh{s}_{t,2}=s_{t,2}=s_2'$ after another $\Theta(1)$ rounds. 
This argument is formally summarized below and proven in~\pref{app: central-planner-coupling}.

\begin{lemma}\label{lem: real-demand-coupling}
In round $t_0$, 
suppose that Agent 1 and Agent 2's desired inventory level for the following $L$ rounds is $s_1'$ and $s_2'$. Then, for some $t_1=\Theta(1)$, it holds that for all $t\in [t_0+t_1,t_0+L]$, $\wh{s}_{t,2}=s_2'$, $o_{t} = d_t$. In addition,
$\wh{s}_{t,1}=s_1'$ if $s_2'>d_{t-1}$ and $\wh{s}_{t,1}=s_1'+s_2'-d_{t-1}$ otherwise. Consequently, it holds that $\wt{H}_t=\wh{H}_t(s_1',s_2')$ during $t\in[t_0+t_1, t_0+L]$.
\end{lemma}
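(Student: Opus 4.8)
The plan is to track the inventory dynamics over the window $[t_0, t_0+L]$ under the assumption that both desired levels are frozen at $(s_1', s_2')$, and show that after a constant number of rounds the "lazy" identities~\eqref{eqn: low-switch-1}--\eqref{eqn: low-switch-3} kick in. I would handle Agent 2 first, since her dynamics~\eqref{eqn: s_2_dynamic} are decoupled from Agent 1: from $\inters_{t,2} = \wh{s}_{t,2} - o_t$ and $\wh{s}_{t+1,2} = \inters_{t,2} + o_t'$ with $o_t' = (s_{t+1,2} - \inters_{t,2})^+ = (s_2' - \inters_{t,2})^+$, one sees $\wh{s}_{t+1,2} = \max\{s_2', \inters_{t,2}\} = \max\{s_2', \wh{s}_{t,2} - o_t\}$. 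So $\wh{s}_{t,2}$ can only exceed $s_2'$ if it started too high, and since once $o_t > 0$ strictly (which I will get from Agent 1's analysis and Assumption~\ref{assum:bounded}) the level strictly decreases until it hits $s_2'$, after $\Theta(1)$ rounds $\wh{s}_{t,2} = s_2'$ and stays there.

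Next I would handle Agent 1. Using $\wt{s}_{t,1} = \wh{s}_{t,1} - d_t + (o_{t-1} - \wh{s}_{t-1,2})^+$ and $o_{t-1} = (s_{t,1} - \wt{s}_{t-1,1})^+ = (s_1' - \wt{s}_{t-1,1})^+$, together with $\wh{s}_{t+1,1} = \wt{s}_{t,1} + \min\{\wh{s}_{t,2}, o_t\}$: once we are in the regime where $\wh{s}_{t,2} = s_2'$ and $o_{t-1} \le s_2'$ (so the delayed-shipment term $(o_{t-1} - \wh{s}_{t-1,2})^+$ vanishes and $\min\{\wh{s}_{t,2}, o_t\} = o_t$ when $o_t \le s_2'$), the recursion for Agent 1 simplifies. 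The key point, as sketched in the text, is that $\wh{s}_{t,1} > s_1'$ requires $\wt{s}_{t-1,1} > s_{t,1} = s_1'$; because every demand $d_t \ge \dlow > 0$ and orders are $o_t = d_t$ in steady state, the quantity $\wt{s}_{t,1}$ is pushed below $s_1'$ within $\Theta(1)$ rounds, after which $\wh{s}_{t,1}$ snaps to $s_1'$ (if $s_2' > d_{t-1}$, i.e.\ no shortfall) or to $s_1' + s_2' - d_{t-1}$ (if $s_2' \le d_{t-1}$, Agent 2 short by $d_{t-1} - s_2'$ which arrives delayed). Then $o_t = (s_{t+1,1} - \wt{s}_{t,1})^+ = (s_1' - \wh{s}_{t,1} + d_t)^+ = d_t$ in the no-shortfall case, and I need to check the shortfall case also yields $o_t = d_t$ (it does, since then $\wh{s}_{t,1} + $ delayed shipment $= s_1'$ effectively). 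Finally, combining: on $[t_0+t_1, t_0+L]$ with $t_1 = \Theta(1)$, plugging $\wh{s}_{t,1} = \mathring{s}_{t,1}$, $\wh{s}_{t,2} = s_2'$, $o_t = d_t$ into the definition of $\wt{H}_t$ recovers exactly $\wh{H}_t(s_1', s_2')$ from~\eqref{eqn: hhat}.

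The interlocking step is the main obstacle: Agent 2's steady state requires $o_t > 0$, which comes from Agent 1's orders, but Agent 1's steady state in turn requires $\wh{s}_{t,2} = s_2'$ (to kill the delayed-shipment term). I would resolve this by a careful ordering of the constant-round bookkeeping: first argue that regardless of Agent 2's state, Agent 1's orders $o_t = (s_1' - \wt{s}_{t-1,1})^+$ become strictly positive within $\Theta(1)$ rounds (because $\wt{s}_{t,1}$ decreases by at least $\dlow$ minus whatever delayed shipment arrives, and delayed shipments are bounded, so $\wt{s}_{t-1,1}$ eventually drops below $s_1'$ — this needs a short monotonicity/telescoping argument using boundedness of all quantities by $\dup$ and the frozen targets); then feed that into Agent 2's recursion to get $\wh{s}_{t,2} = s_2'$ after $\Theta(1)$ more rounds; then feed that back to clean up Agent 1's recursion for the final $\Theta(1)$ rounds. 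Each phase costs only a constant number of rounds because each relevant quantity moves by a fixed amount bounded away from zero per round (driven by $d_t \ge \dlow$) while staying in a bounded range (by $\dup$ and Assumption~\ref{assum:bounded}), so the total is $t_1 = \Theta(1)$. The rest is a routine substitution into the loss definitions.
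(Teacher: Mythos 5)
Your plan is essentially the paper's proof: establish that $o_t = d_t$ within $\Theta(1)$ rounds (independently of Agent 2's state), then use this strictly positive order stream to bring $\wh{s}_{t,2}$ down to $s_2'$, and finally substitute back into the arrival dynamics to recover the stated formula for $\wh{s}_{t,1}$. Two comments. Your opening assertion that Agent~2's dynamics are ``decoupled from Agent~1'' is not right --- the recursion $\wh{s}_{t+1,2} = \max\{s_2', \wh{s}_{t,2} - o_t\}$ is driven by $o_t$, which comes from Agent~1 --- and you implicitly retract this by later calling the interlocking ``the main obstacle''; the ordering that actually works is the one in your final paragraph, and it matches the paper's. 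More substantively, you worry that incoming delayed shipments could offset the per-round drop of $\wt{s}_{t,1}$ and propose a telescoping/boundedness argument, but no such argument is needed: combining $\wh{s}_{t+1,1} = \wt{s}_{t,1} + \min\{\wh{s}_{t,2}, o_t\}$ with $\wt{s}_{t+1,1} = \wh{s}_{t+1,1} - d_{t+1} + (o_t - \wh{s}_{t,2})^+$ and using $\min\{\wh{s}_{t,2}, o_t\} + (o_t - \wh{s}_{t,2})^+ = o_t$ gives the exact recursion $\wt{s}_{t+1,1} = \wt{s}_{t,1} + o_t - d_{t+1}$, which does not depend on Agent~2's state at all. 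In the transient where $o_t = 0$, $\wt{s}_{t,1}$ drops by exactly $d_{t+1} \ge \dlow$ per round, and the moment $\wt{s}_{t,1} < s_1'$ the recursion snaps to $\wt{s}_{t+1,1} = s_1' - d_{t+1}$ and hence $o_{t+1} = d_{t+1}$. This is precisely what the paper relies on (implicitly, in the remark that Agent~1 receives Agent~2's unsatisfied order before placing the next one); making it explicit would tighten your argument and eliminate the need for the boundedness-of-delayed-shipments detour.
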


In addition, as proven in~\pref{lem: stationary-ot-dt} in the appendix, it holds that $\mathbb{E}[\wh{H}_t(s_1,s_2)]=H(s_1,s_2)$. This reduces our problem to optimizing over the stochastic loss $\wh{H}_t(s_1,s_2)$ with infrequent changes.

Next, we show how we handle the second difficulty, which is the issue of non-convexity of our loss function. 
Our second key observation is that with direct calculation, one can show that the optimal base-stock policy of Agent 1 has the close form: $s_1^*=\Phi^{-1}(\frac{h_2+p_1}{h_1+p_1})$ and that $H(s_1^*,s_2)$ is now convex in $s_2$; see \pref{lem: coupling-optimal} for a formal proof.
Ideally, if we set Agent 1's desired inventory level to be $s_1^*$, then we are able to apply gradient descent method to learn the best base-stock policy for Agent 2. However, we do not have the knowledge of the true demand distribution. Therefore, our solution is to construct an empirical cumulative density function $\wh{\Phi}_L(\cdot)$ for the demand distribution during the learning process where $\wh{\Phi}_L(\cdot)$ is constructed by using $L$ i.i.d. demand samples. Then, let Agent 1's desired inventory level be $s_{L,1}=\wh{\Phi}_L^{-1}(\frac{h_2+p_1}{h_1+p_1})$.

However, the expected loss function $H(s_{L,1},s_2)$ may still not be convex in $s_2$ due to the approximation error of the empirical cumulative density function. To handle this issue, we introduce the following \emph{augmented} loss function:
\begin{align}\label{eqn: aug}
    H_{L}'(s_{L,1},s_{2})\triangleq H(s_{L,1},s_{2})+(h_1+p_1)C_1\sqrt{\frac{\ln(TD/\delta)}{L}}\int_{0}^{s_2}\Phi(x)dx,
\end{align}
where $C_1>0$ is some universal constant specified in~\pref{lem: concentration lemma s-1}. 
We show in~\pref{lem: cvx-property} that $H_L'(s_{L,1},s_2)$ is indeed convex in $s_2$ with high probability.

Combining the above augmented loss function design with the idea of having a low-switching algorithm, we design our centralized algorithm~\pref{alg:central-lazy-three} as follows. The algorithm goes in epochs with exponentially increasing lengths, meaning that the number of epochs is only $\order(\log T)$. At the beginning of the $m$-th epoch $I_m$, both agents decide a fixed desired inventory level for this epoch. 
Specifically, Agent $1$ chooses his level as $s_{m,1}=\wh{\Phi}_{m-1}^{-1}(\frac{h_2+p_1}{h_1+p_1})$ (\pref{line: s-1-calc-couple}) where $\wh{\Phi}_{m-1}(\cdot)$ is the empirical cumulative density function constructed by the observed demand samples within the previous epoch $I_{m-1}$. 
Standard concentration (\pref{lem: concentration lemma s-1}) shows that $s_{m,1}$ converges to $s_1^*$. 
As discussed before, with this choice of $s_{m,1}$, the loss function $H(s_{m,1},s_2)$ may still not convex in $s_2$. According to~\pref{eqn: aug}, with a slight abuse of notation, we introduce the augmented loss function for Agent 2 at epoch $m$:
\begin{align}\label{eqn: aug-epoch-m}
    H_{m}'(s_{m,1},s_{2})\triangleq H(s_{m,1},s_{2})+(h_1+p_1)C_1\sqrt{\frac{\ln(TD/\delta)}{L_{m-1}}}\int_{0}^{s_2}\Phi(x)dx,
\end{align}
where $L_m$ is the length of epoch $I_m$ and $C_1$ is the same as the one in~\pref{eqn: aug}. 
As proven in~\pref{lem: cvx-property}, with high probability, $H_m'(s_{m,1},s_2)$ is convex in $s_2$, which enables us to apply stochastic OGD to minimize this (unknown) loss function via demands received in the previous epoch and output the average iterate as the desired inventory level for Agent 2. 
The full pseudo code is shown in~\pref{alg:ogd-third-party}.

\begin{algorithm}[t]
   \caption{Centralized Algorithm for Agent 2}
   \label{alg:ogd-third-party}
    \textbf{Input:} A set of demand value $\calD=\{d_1,\ldots,d_L\}$, empirical cumulative density function $\wh{\Phi}_L(x)=\frac{1}{L}\sum_{i=1}^L\mathbb{I}\{d_i\leq x\}$, learning rate $\eta>0$ and failure probability $\delta$.
    
   \textbf{Initialize:} 
   Set $s_{1,2}\leq D-\frac{h_2}{\phiup(h_2+p_1)}=s_{\max}$ arbitrarily.
   
   Set $s_1 = \wh{\Phi}_L^{-1}\left(\frac{h_2+p_1}{h_1+p_1}\right)$.
   
   \For{$t=1,2,\ldots,L$}{
        $s_{t+1,2} = \min\{s_{\max},\max\{0,\left(s_{t,2}-\eta\cdot m_t\right)\}\}$, where $m_t=\mathbb{I}\{s_{t,2}\leq d_{t-1}\}\left[(h_1+p_1)\mathbb{I}\{\trues_{t,1}\geq d_t\}-p_1\right]+h_2\mathbb{I}\{s_{t,2}\geq d_{t}\}+C_1(h_1+p_1)\sqrt{\frac{\ln(T\dup/\delta)}{L}}\cdot\wh{\Phi}_L(s_{t,2})$ and $\trues_{t,1}=s_1$ if $d_{t-1}\leq s_{t,2}$ and $\trues_{t,1}=s_1+s_{t,2}-d_{t-1}$ otherwise.
   }
   \Return $\bar{s}_{L,2}=\frac{1}{L}\sum_{\tau=1}^Ls_{\tau,2}$.
\end{algorithm}

This concludes our algorithm design for the centralized setting of our model. Note that as the number of epoch is $\order(\log T)$ and  both agents pick a fixed desired inventory level within each epoch, there are at most $\Theta(\log T)$ number of rounds such that~\pref{eqn: low-switch-1},~\pref{eqn: low-switch-2} and~\pref{eqn: low-switch-3} do not hold. In addition, as the epoch length $L_{m-1}$ gets longer, $H_m'(s_{m,1},s_2)$ will get closer to the true loss function $H(s_{m,1},s_2)$. Combined with the fact that $s_{m,1}$ is converging to $s_1^*$ when $m$ grows, the output of~\pref{alg:ogd-third-party}, which is the average iterate of stochastic OGD, will converge to $s_2^*$ as well. 
Moreover, it can be shown that~\pref{alg:central-lazy-three} achieves $\otil(\sqrt{T})$ regret.
See the formal statement below, the proof in~\pref{app: central-planner-coupling}, and empirical results in~\pref{app: coupling-exp}.

\begin{theorem}\label{thm: central-non-decoupling}
\pref{alg:central-lazy-three} guarantees that with probability at least $1-2\delta$, the strategy converges to the optimal base-stock policy with the following rate:
\begin{align*}
    &|s_{M,1}-s_1^*|\leq \order\left(\sqrt{\log(T/\delta)/T}\right), \\
    &|s_{M,2}-s_2^*|\leq \order\left(T^{-1/4}\log^{1/4}(T/\delta)\right),
\end{align*}
with $M=\order(\log T)$ the number of epochs. Picking $\delta=1/T^2$, \pref{alg:central-lazy-three} guarantees that $\mathbb{E}[\Reg_{T}]\leq \otil(\sqrt{T})$.
\end{theorem}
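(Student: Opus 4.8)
The plan is to condition throughout on a good event $\calE$ of probability at least $1-2\delta$ on which (i) the empirical-CDF concentration of \pref{lem: concentration lemma s-1} holds simultaneously for all $M=\order(\log T)$ epochs, (ii) the convexity statement of \pref{lem: cvx-property} holds for every epoch, and (iii) the high-probability stochastic-OGD guarantee used below holds for every epoch; $\calE$ is obtained by a union bound over the $\order(\log T)$ epochs. On $\calE$, item (i) together with $\phi\ge\philow$ (\pref{assum:bounded-density}) gives $|s_{m,1}-s_1^*|\le C\sqrt{\ln(T\dup/\delta)/L_{m-1}}$ for every $m\ge 2$, and since the doubling schedule $L_{m}=2L_{m-1}$ makes $L_{M-1}=\Theta(T)$, this is exactly the first displayed bound $|s_{M,1}-s_1^*|\le\order(\sqrt{\log(T/\delta)/T})$. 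It remains to bound $|s_{M,2}-s_2^*|$ and $\E[\Reg_T]$.

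\textbf{Convergence of Agent 2.} In epoch $m$, the subroutine $\calA$ runs stochastic projected OGD on $[0,s_{\max}]$ for $L\defeq L_{m-1}$ steps with step size $\eta=\Theta(1/\sqrt L)$ on the function $H_m'(s_{m,1},\cdot)$, which is convex on $\calE$ by (ii). The first step is to check, by a direct computation of the subgradient of $\wh H_t$ in $s_2$, that the increment $m_t$ in \pref{alg:ogd-third-party} is (modulo the stale-sample issue flagged below) an unbiased estimator of a subgradient of $H_m'(s_{m,1},\cdot)$ at $s_{t,2}$ up to an $\order(c_m)$ additive error, with $c_m\defeq (h_1+p_1)C_1\sqrt{\ln(T\dup/\delta)/L}$; this uses \pref{lem: stationary-ot-dt} to identify the expected surrogate with $H(s_{m,1},\cdot)$, the close-form choice $s_1=\wh\Phi_{m-1}^{-1}(\tfrac{h_2+p_1}{h_1+p_1})$, and the fact that the convexifying term in $H_m'$ is tuned to restore convexity despite $s_{m,1}\ne s_1^*$. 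Since $\|m_t\|=\order(1)$, the standard stochastic-OGD bound (as in \citet{ICML03'OCO}, with a Freedman/Azuma inequality for the high-probability version) gives, on $\calE$,
\begin{align*}
H_m'\!\left(s_{m,1},\bar s_{L,2}\right)-\min_{s\in[0,s_{\max}]}H_m'\!\left(s_{m,1},s\right)\ \le\ \otil\!\left(1/\sqrt L\right).
\end{align*}
Using $H_m'(s_{m,1},\cdot)\ge H(s_{m,1},\cdot)$ on $[0,s_{\max}]$, $H_m'(s_{m,1},s_2^*)\le H(s_{m,1},s_2^*)+\order(c_m\dup)$ (with $s_2^*\in[0,s_{\max}]$ by the choice of $s_{\max}$; see \pref{lem: coupling-optimal}), and the $\order(1)$-Lipschitzness of $s_1\mapsto H(s_1,s)$, I would convert this into $H(s_1^*,\bar s_{L,2})-H(s_1^*,s_2^*)\le\otil(1/\sqrt L)$. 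Finally, differentiating twice gives $\partial_{s_2}^2H(s_1^*,s_2)=(h_1+p_1)\int_{s_2}^{\dup}\phi(s_1^*+s_2-x)\phi(x)\,dx\ge 2\mu$ for a constant $\mu>0$ on $[0,s_{\max}]$ (the lower bound uses $\phi\ge\philow$, $s_1^*>\dlow$, and $s_{\max}<\dup$), so the function gap converts to $|\bar s_{L,2}-s_2^*|^2\le\otil(1/\sqrt L)$; since $s_{m,2}=\bar s_{L_{m-1},2}$ this reads $|s_{m,2}-s_2^*|\le\otil(L_{m-1}^{-1/4})$, and taking $m=M$, $L_{M-1}=\Theta(T)$ gives the second displayed bound.

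\textbf{Regret.} Decomposing $\Reg_T$ over epochs, \pref{lem: real-demand-coupling} lets me replace $\wt H_t$ by $\wh H_t(s_{m,1},s_{m,2})$ in every round of epoch $I_m$ except $t_1=\Theta(1)$ of them, each costing $\order(1)$; summed over epochs these contribute $\order(\log T)$. For the remaining rounds, $d_{t-1},d_t$ are fresh i.i.d.\ samples independent of $(s_{m,1},s_{m,2})$ (which were fixed using only data from epochs $<m$), so \pref{lem: stationary-ot-dt} gives $\E[\wh H_t(s_{m,1},s_{m,2})\mid\mathcal H_m]=H(s_{m,1},s_{m,2})$ with $\mathcal H_m$ the history before epoch $m$, whence
\begin{align*}
\E[\Reg_T]\ \le\ \sum_{m}L_m\,\E\!\left[H(s_{m,1},s_{m,2})-H(s_1^*,s_2^*)\right]+\order(\log T).
\end{align*}
Bounding $H(s_{m,1},s_{m,2})-H(s_1^*,s_2^*)\le(h_1+p_1)|s_{m,1}-s_1^*|+\big(H(s_1^*,s_{m,2})-H(s_1^*,s_2^*)\big)$ and plugging in the two per-epoch estimates above (both $\otil(1/\sqrt{L_{m-1}})$ on $\calE$, and at most $\order(1)$ times $\Pr[\calE^c]\le 2\delta$ off it), with $\delta=1/T^2$, gives $\E[\Reg_T]\le\sum_m L_m\cdot\otil(1/\sqrt{L_{m-1}})+\otil(1)=\sum_m\otil(\sqrt{L_m})+\otil(1)=\otil(\sqrt T)$, the last step using $L_m=2L_{m-1}$ and $\sum_m\sqrt{L_m}=\order(\sqrt{L_M})=\order(\sqrt T)$.

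\textbf{Main obstacle.} I expect the crux to be the OGD step: (a) showing that $m_t$ is, up to the $\order(c_m)$ bias, an unbiased subgradient estimator of the convexified loss $H_m'(s_{m,1},\cdot)$ — the point where the surrogate $\wh H_t$, \pref{lem: stationary-ot-dt}, the close-form $s_{m,1}$, and the convexifying term must be combined just so — and (b) coping with the fact that the sample $d_{t-1}$ appearing in $m_t$ has already entered $s_{t,2}$. For (b) the natural fix is to condition on $\sigma(d_1,\dots,d_{t-1})$ and absorb the resulting lower-order deviations into the $\otil(1/\sqrt L)$ term, exploiting that with $\eta=\Theta(1/\sqrt L)$ the OGD iterate moves only $\order(\eta)$ per step. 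The remaining pieces — the low-switching coupling of \pref{lem: real-demand-coupling} and the geometric-epoch telescoping — are routine bookkeeping once these two points are settled.
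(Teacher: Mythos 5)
Your plan matches the paper's proof essentially step for step: Agent 1 via the empirical-CDF concentration (\pref{lem: concentration lemma s-1}), Agent 2 via stochastic OGD on the convexified loss $H_m'$ followed by the strong convexity of $H(s_1^*,\cdot)$, and the regret by telescoping over the doubling epochs with \pref{lem: real-demand-coupling} handling the $\order(\log T)$ transient rounds and with $\delta=1/T^2$ killing the off-event contribution. The one place you deviate from the paper is how you extract the high-probability pointwise rate for $s_{M,2}$: you propose a high-probability OGD regret bound, then Jensen on the average iterate, then strong convexity; the paper (\pref{lem: s-2-estimation} via \pref{lem: high-prob-concentration-lemma}) keeps the OGD regret in expectation, applies strong convexity per-iterate to control $\E\bigl[\sum_t|s_{t,2}-s_2^*|^2\bigr]$, and then uses Cauchy--Schwarz and Azuma on $\sum_t|s_{t,2}-s_2^*|$ — an equivalent route of comparable difficulty, so this is only a minor technical variation. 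One remark on your ``main obstacle'' (b): the stale-sample coupling you flag is a genuine subtlety that the paper passes over when asserting $\E[m_t]=\nabla_{s_2}H_L'(\cdot,s_{t,2})$ without naming the filtration, but your proposed repair as written is imprecise — conditioning on $\sigma(d_1,\dots,d_{t-1})$ leaves $d_{t-1}$ \emph{fixed} inside $m_t$ and so does not by itself remove the bias. The version that does work, and which you gesture at with the $\order(\eta)$-drift remark, is to compare $m_t$ evaluated at $s_{t,2}$ with $m_t$ evaluated at the $\calF_{t-2}$-measurable $s_{t-1,2}$: the step-size-$\eta$ move changes the indicator terms only with probability $\order(\phiup\eta)$ and changes $\wh\Phi_L(\cdot)$ by $\order(\eta)$ in expectation, so the per-step bias is $\order(\eta)$ and the accumulated correction is $\order(L\eta)=\order(\sqrt L)$, which is absorbable into the OGD regret; you should state the repair in this form.
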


\subsection{Decentralized Setting with Contracts} \label{sec: decentralized-non-decoupling-model}
In this section, we consider how both agents learn the optimal base-stock policy $(s_1^*,s_2^*)$ in the decentralized setting where each agent decides their desired inventory level independently. As shown by~\citep{MS99:Cachon_Zipkin}, in the offline setting with \emph{known} demand distribution, to guarantee that each agent's own optimal inventory level matches the overall optimal level, a \emph{contract} is needed to reallocate the inventory holding and backorder costs between the two agents through linear payments. This contract mechanism is widely used in SCM~\citep{MS99:Cachon_Zipkin,lee1999decentralized}. Specifically, we design a contract between the two agents, which sets $\alpha=1$, meaning that Agent $1$ is responsible for all penalty costs due to his shortages, and decides a coefficient $\omega$, which is the cost that Agent $2$ needs to compensate Agent $1$ for each unsatisfied order requested by Agent $1$. 

Therefore, we define the loss suffered by Agent 1 and Agent 2 at round $t$ as follows:
\begin{align*}
    &\wt{H}_{t,1}^{\contract} \triangleq h_1(\trues_{t,1}-d_t)^++p_1(\trues_{t,1}-d_t)^--\omega_t(\trues_{t,2}-o_t)^-,\\
    &\wt{H}_{t,2}^{\contract} \triangleq h_2(\trues_{t,2}-o_t)^++\omega_t(\trues_{t,2}-o_t)^-,
\end{align*}
where $\omega_t$ is the contract coefficient agreed by both agents at round $t$. The benchmark is the loss suffered by the best base-stock policy for each agent defined as follows: 
\begin{align*}
    \bench_{t,1}^{\contract}(\benchs_1^*)&\triangleq h_1(\benchs_{t,1}^*-d_t)^++p_1(\benchs_{t,1}^*-d_t)^--\omega_t(\wh{s}_{t,2}-d_t)^-,\\
    \bench_{t,2}^{\contract}(\benchs_2^*)&\triangleq h_2(\benchs_2^*-o_t)^++\omega_t(\benchs_2^*-o_t)^-,
\end{align*}
where $\benchs_1^*=\argmin_{s_1}\mathbb{E}\left[\sum_{t=1}^T\bench_{t,1}^{\contract}(s_1)\right]$, $\benchs_2^*=\argmin_{s_2}\mathbb{E}\left[\sum_{t=1}^T\bench_{t,2}^{\contract}(s_2)\right]$ and $\benchs_{t,1}^*$ is Agent 1's inventory level at the beginning of round $t$ if he uses the base-stock policy $\benchs_1^*$. 
Note that when Agent $1$ keeps a fixed base-stock policy, it holds that $o_t=d_t$ for all $t\in[T]$.
The expected regret for each agent is defined as 
\begin{align*}
    &\mathbb{E}\left[\Reg_{T,1}\right]\triangleq\mathbb{E}\left[\sum_{t=1}^T\wt{H}_{t,1}^{\contract}-\sum_{t=1}^T\bench_{t,1}^{\contract}(\benchs_1^*)\right],\\
    &\mathbb{E}\left[\Reg_{T,2}\right]\triangleq\mathbb{E}\left[\sum_{t=1}^T\wt{H}^{\contract}_{t,2}-\sum_{t=1}^T\bench^{\contract}_{t,2}(\benchs_{2}^*)\right].
\end{align*}

Now we introduce the design of our algorithm in the decentralized setting. 
Similar to the centralized setting, in order to make sure that the loss function for each of the agent is almost only dependent on the current desired inventory level, the algorithm we design still satisfies that both agents do not update their desired inventory level very frequently, making~\pref{eqn: low-switch-1},~\pref{eqn: low-switch-2} and~\pref{eqn: low-switch-3} hold almost all the time. 
First, we introduce Agent $1$'s algorithm. 
As Agent $1$ can still observe the true demand at each round, he is able to apply the same process as shown in~\pref{alg:central-lazy-three}. Specifically, Agent $1$ still breaks the total horizon into $\order(\log T)$ epochs with exponentially increasing length and chooses his desired inventory level to be $\wh{\Phi}_{m-1}^{-1}(\frac{h_2+p_1}{h_1+p_1})$ at each epoch $I_m$ to converge to $s_1^*$.

Next, we consider the design of Agent $2$'s algorithm. 
Although Agent $2$ can also run a variant of~\pref{alg:ogd-third-party} as Agent $1$ only changes his desired level $\order(\log T)$ times, making $o_t=d_t$ except for $\order(\log T)$ rounds, Agent $2$ will suffer a $\otil(\sqrt{T})$ regret due to the approximation error of the cumulative density function. 
To achieve a better regret bound for Agent $2$, note that if Agent $2$ applies a low-switching algorithm and in addition, $\omega_t=\omega, o_t=d_t$ for all $t\in [T]$, then Agent $2$'s loss can almost be written as $\wh{H}_{t,2}^{\contract,\omega}(s)$ defined as follows:
\begin{align}\label{eqn: loss-decouple-agent-2}
    \wh{H}_{t,2}^{\contract,\omega}(s) \triangleq h_2\left(s-d_t\right)^++\omega\left(s-d_t\right)^-,
\end{align}
as we know that there are only few rounds such that $\wh{s}_{t,2}\neq s$  according to~\pref{lem: real-demand-coupling}. In addition, direct calculation shows that $\argmin_{s}\mathbb{E}_{d_t\sim \calD}[\wh{H}_{t,2}^{\contract,\omega}(s)]=\Phi^{-1}(\frac{\omega}{\omega+h_2})$.

Now, we focus on regret minimization with respect to $\wh{H}_{t,2}^{\contract}$. 
Our key observation here is that this loss function is not only convex, but also satisfy the so-called \emph{Bernstein Stochastic Gradients} property that allows faster learning.
Specifically, we prove in~\pref{lem: decouple-central-BB} that $\wh{H}_{t,2}^{\contract,\omega}$ satisfies the following property (with a specific choice of $B>0$).\footnote{We show in~\pref{lem: decouple-central-BB-discrete} that $\wh{H}_{t,2}^{^{\contract,\omega}}(s)$ satisfies \pref{prot:B-B} even when the demand is discrete.\label{foot:1}} 

\begin{property}\label{prot:B-B}
Let $\calF$ be a distribution over a class of convex functions $f:\calX\mapsto \mathbb{R}^d$. We say $\calF$ satisfies $B$-Bernstein condition with $B>0$ if for all $x\in\calX$, we have 
\begin{align*}
    (x-x^*)^\top\mathbb{E}_{f\sim \calF}\left[\nabla f(x)\nabla f(x)^\top\right](x-x^*)\leq B(x-x^*)^\top \mathbb{E}_{f\sim \calF}\left[\nabla f(x)\right],
\end{align*}
where $x^*=\argmin_{x\in \calX}\mathbb{E}_{f\sim \calF}[f(x)]$.
\end{property}

As shown by~\citet{van2016metagrad}, there exist learning algorithms that achieve $\order(\log T)$ regret bound when facing a sequence of loss functions $f_t$, each drawn independently from $\calF$ that satisfies~\pref{prot:B-B}. As a simplification, we show that the classic ONS algorithm (\pref{alg:ons} shown in~\pref{app: algorithm ons}) with a proper learning rate is already able to achieve $\order(\log T)$ regret in this case; see~\pref{thm:ONS-B-B}.

However, classic ONS changes its decision at each round. Taking inspiration from~\citep{COLT21:lazyoco}, we indeed succeed in designing a low-switching variant of ONS that changes its decision only $\otil(1)$ times while still ensuring $\otil(1)$ regret under \pref{prot:B-B}.
Specifically, our algorithm (\pref{alg:ons-lazy} shown in~\pref{app: algorithm ons}) divides the total horizon into $\order(\log T)$ epochs with exponentially increasing lengths.
While in each round it still performs the ONS update, the actual decision is only updated at the beginning of each epoch, which is set to the average of all previous ONS decisions.
It is clear that our algorithm only switches its decision $\order(\log T)$ times.
More importantly, we show that the price for the regret is only an extra $\order(\log T)$ factor, leading to an overall $\order(\log^2 T)$ regret;
see \pref{thm: low-switching-regret} for the formal statement.\footnote{In fact, one can show that a lazy version of OGD also achieves similar guarantees.
However, this heavily relies on a continuous demand distribution, while ONS works even for discrete demands (see \pref{foot:1}). Note that a discrete demand distribution is common in applications since demands usually come in a batch.} 
In additional to enjoying $\order(\log^2 T)$ regret, our algorithm in fact also ensures the last-iterate convergence to the global optimal solution of the expected loss function.
This is due to the strong convexity of $\mathbb{E}[\wh{H}^{\contract,\omega}_{t,2}(s)]$ and the fact that the last-iterate is the average of ONS updates in the previous epochs. 
We summarize the results in~\pref{lem: high-prob-concentration-lemma} and the full proof is presented in~\pref{app: lazy}. 
This means that if Agent 1 changes his desired inventory level $\order(\log T)$ times and Agent 2 applies~\pref{alg:ons-lazy} on her loss with $\omega_t=\omega$ for all $t\in[T]$, she will suffer $\order(\log^2 T)$ regret and converge to base-stock policy $\Phi^{-1}(\frac{\omega}{\omega+h_2})$. Therefore, if $\omega_t=\omega^*=\frac{h_2\Phi(s_2^*)}{1-\Phi(s_2^*)}$ for all $t\in [T]$, then applying~\pref{alg:ons-lazy}, Agent 2 will converge to $s_2^*$. 

\setcounter{AlgoLine}{0}
\begin{algorithm}[t]
   \caption{Protocol among Agent $1$, Agent $2$ and contract maker}
   \label{alg:ons-lazy-three-non-decoupling}
    \textbf{Input:} Failure probability $\delta$. Initial epoch length $L_1= 16\rho\triangleq 256(h_2+p_1)^4h_2^{-4}C_3^4\log^4(T/\delta)$, where $C_3>0$ is a constant defined in~\pref{eqn:aux-1}.
    
   \textbf{Initialize:} $s_{0,2}$ and $\wh{\Phi}_0(\cdot)$ arbitrarily. $\tau = 1$. 
   
   \For{$m=1,2,\ldots$}{
        \nl Define epoch $I_m=\{\tau,\tau+1,\ldots,\tau+L_m-1\}$.
        
        \nl Agent $1$ and Agent $2$ receive the contract coefficient $\omega_m$ from contract maker using~\pref{alg:ogd-third-party}. \label{line: contract}
        
        \nl Agent $1$ chooses $s_{m,1}=\wh{\Phi}_{m-1}^{-1}(\frac{h_2+p_1}{h_1+p_1})$. Initialize an instance $\calA$ of~\pref{alg:ons-lazy} with $\eta=\max\{\omega_m^2,h_2^2\}/(\gamma(h_2+\omega_m))$, $\epsilon=1/T$ and initial decision $s_{\tau-1,2}$.\label{line: agent-1-couple}
        
        \While{$\tau\in I_m$}{
            
            \nl Agent $1$ receives demand $d_\tau$.
            
            \nl Agent $2$ receives her ordered products from the outer resource and Agent $1$ receives his unsatisfied demand at $\tau-1$ and his inventory level goes to $\inters_{\tau,1}$.
            
            \nl Agent $1$ decides his desired inventory level at $\tau+1$ to be $s_{m,1}$, sends the order $o_\tau=\max\{s_{m,1}-\inters_{\tau,1},0\}$ to Agent $2$, and suffers loss $\wt{H}_{\tau,1}^{\contract}$. 

            \nl Agent $2$ sends the loss function $f_\tau(x)=h_2\mathbb{I}\{x\geq o_{\tau}\}+\omega_m\mathbb{I}\{x\leq o_{\tau}\}$ to $\calA$, and suffers loss $\wt{H}_{\tau,2}^{\contract}$.  \label{line:ONS1} 
            
            \nl Agent $2$ sets her own desired inventory level to be $s_{\tau,2}$, which is the output of $\calA$. \label{line:ONS2}
            
        }
        \nl Agent $1$ collects $\calD_m=\{d_{t'}\}_{t'\in I_m}$, computes $\wh{\Phi}_{m}(x)=\frac{1}{L_m}\sum_{\tau\in I_m}\mathbb{I}\{d_\tau\leq x\}$ and also the inverse function $    \wh{\Phi}_m^{-1}(z)=\min\{x:\wh{\Phi}_m(x)\geq z\}$. Set $L_{m+1}=2L_m$.
    }
\end{algorithm}

Thus, it remains to figure out how to learn $\omega^*$ for the contract maker. We design an algorithm  which updates the contract coefficient $\omega$ at the beginning of each epoch $I_m$ of Agent 1. With a slight abuse of notation, let $\omega_m$ be the contract during epoch $I_m$.
As the contract maker can observe the realized demand as well as both agents cost parameters, at the beginning of epoch $I_m$, 
we run \pref{alg:ogd-third-party} to obtain an (imaginary) inventory level $s_{m,2}'$ for Agent $2$ given $L_{m-1}$ demand samples collected during epoch $I_{m-1}$,
and then calculate $\omega_m$ following $\frac{h_2\Phi(s_2^*)}{1-\Phi(s_2^*)}$ with $s_2^*$ replaced by $s_{m,2}'$ and $\Phi$ replaced by the empirical cumulative density function. The algorithm is shown in~\pref{alg:ogd-third-party-contract} and deferred to~\pref{app: contract-maker}. The following lemma shows that given enough samples from the demand distribution, with high probability,~\pref{alg:ogd-third-party-contract} outputs a contract coefficient $\omega$ that is very close to the ideal coefficient $\omega^*$. The proof can be found in~\pref{app: beta-estimation}.
\begin{lemma}\label{lem: beta-estimation}
Let $L \geq \rho$ where $\rho$ is defined in~\pref{alg:ons-lazy-three-non-decoupling}.
Given $L$ i.i.d samples $\{d_i\}_{i=1}^L$ from the demand distribution $\calD$, with probability at least $1-\delta$, \pref{alg:ogd-third-party-contract} guarantees that i) $|\omega-\omega^*|\leq \order(L^{-{1/4}}\log(T/\delta))$, where $\omega^*=\frac{h_2\Phi(s_2^*)}{1-\Phi(s_2^*)}$; and ii) $\omega\in [0,h_2+p_1+\order(L^{-1/4}\log(T/\delta))]$.
\end{lemma}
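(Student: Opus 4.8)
\pref{alg:ogd-third-party-contract} outputs $\omega = g(\wh{\Phi}_L(s'))$, where $g(z) = h_2 z/(1-z)$, $\wh{\Phi}_L$ is the empirical CDF of the $L$ samples, and $s' = \bar{s}_{L,2}$ is the average iterate returned by running \pref{alg:ogd-third-party} on those same samples; the target is $\omega^* = g(\Phi(s_2^*))$. The plan is to bound two error sources separately — the optimization error $|s' - s_2^*|$ and the statistical error $\sup_x |\wh{\Phi}_L(x) - \Phi(x)|$ — and then propagate them through the map $g$, which is Lipschitz on any range of arguments bounded away from $1$.

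For the optimization error I would invoke the convergence guarantee of \pref{alg:ogd-third-party} that underlies \pref{thm: central-non-decoupling}: with probability at least $1 - \delta/2$, $|s' - s_2^*| \le \order(L^{-1/4}\log^{1/4}(T/\delta))$. Concretely this combines (i) the stochastic OGD regret bound for the augmented loss $H_L'(s_1,\cdot)$, convex by \pref{lem: cvx-property}; (ii) the fact that $H_L'(s_1,\cdot)$ is uniformly $\order(\sqrt{\log(T/\delta)/L})$-close to $H(s_1^*,\cdot)$, since both the augmentation term and the deviation of $s_1 = \wh{\Phi}_L^{-1}(\frac{h_2+p_1}{h_1+p_1})$ from $s_1^*$ (\pref{lem: concentration lemma s-1}, together with Lipschitzness of $H$ in its first argument) are each $\order(\sqrt{\log(T/\delta)/L})$; and (iii) quadratic growth of the convex function $H(s_1^*,\cdot)$ (\pref{lem: coupling-optimal}, using \pref{assum:bounded-density}) around its minimizer $s_2^*$, which upgrades the resulting $\order(\sqrt{\log(T/\delta)/L})$ value gap into the claimed distance bound. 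For the statistical error, uniform concentration of empirical CDFs (\pref{lem: concentration lemma s-1}) gives $\sup_x |\wh{\Phi}_L(x) - \Phi(x)| \le C_1\sqrt{\log(T/\delta)/L}$ with probability at least $1 - \delta/2$; since this bound is simultaneous over all $x$, it applies at the data-dependent point $x = s'$, so no independence between $s'$ and $\wh{\Phi}_L$ is needed.

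On the intersection of the two events, $|\omega - \omega^*| \le |g(\wh{\Phi}_L(s')) - g(\Phi(s'))| + |g(\Phi(s')) - g(\Phi(s_2^*))|$. By \pref{assum:bounded-density}, $|\Phi(s') - \Phi(s_2^*)| \le \Gamma|s' - s_2^*| = \order(L^{-1/4}\log^{1/4}(T/\delta))$, and $|\wh{\Phi}_L(s') - \Phi(s')| \le C_1\sqrt{\log(T/\delta)/L}$. It remains to bound $g'(z) = h_2/(1-z)^2$ on the relevant range, i.e., to certify that $\Phi(s')$, $\Phi(s_2^*)$ and $\wh{\Phi}_L(s')$ all stay uniformly away from $1$: since \pref{alg:ogd-third-party} projects onto $[0, s_{\max}]$ with $s_{\max} = D - h_2/(\Gamma(h_2+p_1))$, we get $s', s_2^* \le s_{\max}$ and, by the density lower bound, $1 - \Phi(s_{\max}) \ge \gamma(D - s_{\max}) = \gamma h_2/(\Gamma(h_2+p_1)) > 0$; for $\wh{\Phi}_L(s')$, the burn-in $L \ge \rho$ makes the statistical deviation smaller than half of this gap, so it too is bounded away from $1$. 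Hence $g'$ is at most a problem-dependent constant on the relevant range, the first term is $\order(\sqrt{\log(T/\delta)/L})$, and adding the two terms — using $L^{-1/2}\log^{1/2} \le L^{-1/4}\log$ and $\log^{1/4} \le \log$ — together with a union bound over the two events yields part (i): $|\omega - \omega^*| \le \order(L^{-1/4}\log(T/\delta))$.

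For part (ii), $\wh{\Phi}_L(s') \in [0,1)$ (the strictness again from $L \ge \rho$), so $\omega = h_2\wh{\Phi}_L(s')/(1 - \wh{\Phi}_L(s')) \ge 0$; the upper bound then follows from part (i) once we show $\omega^* \le h_2 + p_1$. I would get this from first-order stationarity of $s_2^*$: differentiating $H(s_1^*,\cdot)$ gives $h_2\Phi(s_2^*) = \int_{s_2^*}^{D} (p_1 - (h_1+p_1)\Phi(s_1^* + s_2^* - x))\phi(x)\,dx \le p_1(1 - \Phi(s_2^*))$ (using $\Phi \ge 0$ and $\Phi(s_1^*) = \frac{h_2+p_1}{h_1+p_1}$ from \pref{lem: coupling-optimal}), hence $\Phi(s_2^*) \le \frac{p_1}{h_2+p_1}$ and $\omega^* = \frac{h_2\Phi(s_2^*)}{1 - \Phi(s_2^*)} \le p_1 \le h_2 + p_1$. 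The one delicate point in the whole argument is the uniform control of $g'$ near $z = 1$ in the third paragraph — ruling out that $\Phi(s')$ or its empirical estimate drifts toward $1$ — which is exactly what the projection radius $s_{\max}$, the density lower bound (\pref{assum:bounded-density}), and the burn-in $L \ge \rho$ are there to guarantee; everything else is routine error propagation.
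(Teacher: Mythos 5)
Your argument is correct and follows essentially the same route as the paper's proof: (1) bound $|\bar{s}_{L,2}-s_2^*|$ via the OGD convergence guarantee (\pref{lem: s-2-estimation}), (2) bound $|\wh{\Phi}_L(\cdot)-\Phi(\cdot)|$ uniformly via DKW, (3) propagate through the map $z\mapsto h_2z/(1-z)$ using that all CDF values involved stay bounded away from $1$, and (4) for part (ii) use first-order stationarity of $s_2^*$ to get $\Phi(s_2^*)\leq p_1/(h_2+p_1)$ and hence $\omega^*\leq p_1\leq h_2+p_1$. The only organizational difference is how you certify the Lipschitz constant of $g$: the paper directly bounds $|\wh{\Phi}_L(\bar{s}_{L,2})-\Phi(s_2^*)|$ and then shows $1-\wh{\Phi}_L(\bar{s}_{L,2})\geq h_2/(2(h_2+p_1))$ using the burn-in $L\geq\rho$, whereas you invoke the projection radius $s_{\max}$ to get a deterministic bound $\Phi(s'),\Phi(s_2^*)\leq\Phi(s_{\max})\leq 1-\gamma h_2/(\Gamma(h_2+p_1))$ and then argue the statistical deviation of $\wh{\Phi}_L(s')$ is small enough by the burn-in; both are valid, and the constants differ only by problem-dependent factors that do not affect the stated order. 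Your observation that $\omega^*\leq p_1$ is slightly tighter than the paper's $\omega^*\leq h_2+p_1$, though both suffice for the claim.
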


Given this lemma, we now provide an overview of our algorithm (\pref{alg:ons-lazy-three-non-decoupling}).
It proceeds in epochs with exponentially increasing lengths again. 
At the beginning of epoch $I_m$, both agents receive a contract coefficient $\omega_m$ calculated via~\pref{alg:ogd-third-party-contract} (\pref{line: contract}). 
Then Agent 1 decides his desired inventory level to be $\wh{\Phi}_{m-1}^{-1}(\frac{h_2+p_1}{h_1+p_1})$ based on his observed demand in epoch $I_{m-1}$. 
Agent 2 then initializes an instance of~\pref{alg:ons-lazy} with the decision from the last round of the previous epoch as the initial decision (\pref{line: agent-1-couple}), and uses this instance to decide her inventory level for this epoch (\pref{line:ONS1} and \pref{line:ONS2}).
The following theorem shows that~\pref{alg:ons-lazy-three-non-decoupling} guarantees the convergence to the offline optimal solution as well as sublinear regret for both agents. 
The proof is deferred to~\pref{app: decentralized-non-decoupling}.
\begin{theorem}\label{thm: decentralized-non-decoupling}
\pref{alg:ons-lazy-three-non-decoupling} guarantees that with probability at least $1-3\delta$, 
\begin{align*}
    &|s_{M,1}-s_1^*|\leq \order\left(\sqrt{\log(T/\delta)/T}\right),\\
    &|s_{M,2}-s_2^*|\leq \order\left(T^{-{1/4}}\log(T/\delta)\right),
\end{align*}
where $M=\order(\log T)$ the total number of epochs. Picking $\delta=1/T^2$, \pref{alg:ons-lazy-three-non-decoupling} guarantees that $\mathbb{E}[\Reg_{T,1}]\leq \otil(T^{3/4})$ and $\mathbb{E}[\Reg_{T,2}]\leq \order(\log^3 T)$.
\end{theorem}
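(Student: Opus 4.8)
The plan is to handle the three components — Agent~1, Agent~2, and the contract maker — separately, sharing a single reduction: because Agent~1 keeps a fixed base‑stock level $s_{m,1}$ on all of epoch $I_m$ and Agent~2 (running the lazy \pref{alg:ons-lazy}) changes her level only $\order(\log L_m)$ times inside $I_m$, a decentralized analogue of \pref{lem: real-demand-coupling} holds: unrolling the inventory dynamics shows that for all but $\order(\log L_m)$ rounds $t\in I_m$ one has $o_t=d_t$, $\trues_{t,2}=s_{t,2}$, and $\trues_{t,1}=s_{m,1}-\xi_t$ with $\xi_t\triangleq(\trues_{t-1,2}-d_{t-1})^-$; moreover, since a base‑stock policy orders exactly $d_t$ after warm‑up, the benchmark inventory path of Agent~1 satisfies $\benchs_{t,1}^*=\benchs_1^*-\xi_t$ with the \emph{same} $\xi_t$ on the same rounds. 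Over all $M=\order(\log T)$ epochs this leaves $\order(\log^2 T)$ exceptional rounds, each contributing $\order(1)$ to any regret. The convergence bounds are then immediate: $|s_{M,1}-s_1^*|=\order(\sqrt{\log(T/\delta)/L_{M-1}})=\order(\sqrt{\log(T/\delta)/T})$ by \pref{lem: concentration lemma s-1}, exactly as in \pref{thm: central-non-decoupling}; and with $s_2^*(\omega)\triangleq\Phi^{-1}(\omega/(\omega+h_2))$ we split $|s_{M,2}-s_2^*|\le|s_{M,2}-s_2^*(\omega_M)|+|s_2^*(\omega_M)-s_2^*|$, where the first term is $\otil(1/\sqrt{L_M})=\otil(1/\sqrt T)$ by the last‑iterate guarantee of \pref{lem: high-prob-concentration-lemma} (using that $\E[\wh{H}_{t,2}^{\contract,\omega_M}]$ is $(h_2+\omega_M)\philow$‑strongly convex) and the second is $\order(|\omega_M-\omega^*|)=\order(L_{M-1}^{-1/4}\log(T/\delta))=\order(T^{-1/4}\log(T/\delta))$ by \pref{lem: beta-estimation} and Lipschitzness of $s_2^*(\cdot)$; the latter dominates.

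For $\Reg_{T,2}$, on the non‑exceptional rounds $\wt{H}_{t,2}^{\contract}=\wh{H}_{t,2}^{\contract,\omega_m}(s_{t,2})$ and $\bench_{t,2}^{\contract}(\benchs_2^*)=\wh{H}_{t,2}^{\contract,\omega_m}(\benchs_2^*)$ with $\wh{H}_{t,2}^{\contract,\omega}$ as in \pref{eqn: loss-decouple-agent-2}, so
\begin{align*}
\Reg_{T,2}\;\le\;\sum_{m=1}^M\Bigl(\sum_{t\in I_m}\wh{H}_{t,2}^{\contract,\omega_m}(s_{t,2})-\min_{s}\sum_{t\in I_m}\wh{H}_{t,2}^{\contract,\omega_m}(s)\Bigr)+\order(\log^2 T).
\end{align*}
The crucial point is that, conditioned on $\omega_m$ (which is a function of the demands of $I_{m-1}$ and independent of those of $I_m$), the functions $\{\wh{H}_{t,2}^{\contract,\omega_m}\}_{t\in I_m}$ are i.i.d.\ and satisfy \pref{prot:B-B} by \pref{lem: decouple-central-BB} (or \pref{lem: decouple-central-BB-discrete} for discrete demand), so the lazy‑ONS instance launched in $I_m$ with the learning rate of \pref{line: agent-1-couple} has the $\order(\log^2 L_m)$ regret of \pref{thm: low-switching-regret}. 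Since $\sum_m\log^2 L_m=\order(M^3)=\order(\log^3 T)$, this gives $\Reg_{T,2}\le\order(\log^3 T)$ on the good event, which is the intersection over all $\order(\log T)$ epochs of the concentration of \pref{lem: concentration lemma s-1}, of \pref{lem: beta-estimation}, and of \pref{thm: low-switching-regret}/\pref{lem: high-prob-concentration-lemma}, and holds with probability $\ge1-3\delta$; off it $\Reg_{T,2}=\order(T)$ trivially, so $\delta=1/T^2$ yields $\E[\Reg_{T,2}]\le\order(\log^3 T)$.

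For $\Reg_{T,1}$, on the non‑exceptional rounds the contract terms cancel, $-\omega_m(\trues_{t,2}-o_t)^-=-\omega_m(\trues_{t,2}-d_t)^-$, and with $g_t(s)\triangleq h_1(s-d_t)^++p_1(s-d_t)^-$ (which is $\max\{h_1,p_1\}$‑Lipschitz),
\begin{align*}
g_t(\trues_{t,1})-g_t(\benchs_{t,1}^*)=g_t(s_{m,1}-\xi_t)-g_t(\benchs_1^*-\xi_t)\le\max\{h_1,p_1\}\,|s_{m,1}-\benchs_1^*|,
\end{align*}
whence $\Reg_{T,1}\le\max\{h_1,p_1\}\sum_m L_m|s_{m,1}-\benchs_1^*|+\order(\log^2 T)\le\otil(\sqrt T)+\order\bigl(T\,|s_1^*-\benchs_1^*|\bigr)+\order(\log^2 T)$, using $\sum_m L_m|s_{m,1}-s_1^*|=\otil(\sqrt T)$ (from $|s_{m,1}-s_1^*|=\order(\sqrt{\log(T/\delta)/L_{m-1}})$ and $\sum_m L_m/\sqrt{L_{m-1}}=\order(\sqrt T)$). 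It remains to show $|s_1^*-\benchs_1^*|=\otil(T^{-1/4})$. A first‑order‑condition computation paralleling \pref{lem: coupling-optimal} identifies $s_1^*$ with the minimizer of the \emph{idealized} Agent‑1 benchmark $s_1\mapsto\E_{d,d'\sim\calD}\bigl[g(s_1-(s_2^*-d')^-)\bigr]$ — the algebraic identity $\Phi(s_2^*)\Phi(s_1)+\int_{s_2^*}^{\dup}\Phi(s_1+s_2^*-x)\phi(x)\diff x=\Pr_{d,d'}(d\le s_1-(s_2^*-d')^-)$ makes its first‑order condition coincide with $\partial_{s_1}H(s_1,s_2^*)=0$. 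Since this objective is $\Omega(1)$‑strongly convex near $s_1^*$ (its second derivative is $\ge(h_1+p_1)\Phi(s_2^*)\philow$) and the realized benchmark objective $\tfrac1T\sum_t\E[g_t(s_1-\xi_t)]$, whose minimizer is $\benchs_1^*$, differs from it in gradient by at most $\order\bigl(\tfrac1T\sum_t\E|\trues_{t-1,2}-s_2^*|\bigr)$ (as $(\cdot)^-$ is $1$‑Lipschitz and $\calD$ has density $\le\phiup$), a stability/implicit‑function argument gives $|s_1^*-\benchs_1^*|=\order\bigl(\tfrac1T\sum_t\E|\trues_{t-1,2}-s_2^*|\bigr)$. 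Finally $\sum_{t\in I_m}\E|\trues_{t-1,2}-s_2^*|\le\otil(\sqrt{L_m})+L_m\cdot\otil(L_{m-1}^{-1/4})=\otil(L_m^{3/4})$ (the first summand from the per‑sub‑epoch last‑iterate rates inside \pref{alg:ons-lazy}, the second from \pref{lem: beta-estimation}), so $\tfrac1T\sum_t\E|\trues_{t-1,2}-s_2^*|=\tfrac1T\sum_m\otil(L_m^{3/4})=\otil(T^{-1/4})$; hence $\Reg_{T,1}=\otil(T^{3/4})$ on the good event and $\E[\Reg_{T,1}]=\otil(T^{3/4})$ after the same $\delta=1/T^2$ argument.

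The hard part is this last chain — certifying $|s_1^*-\benchs_1^*|=\otil(T^{-1/4})$, and hence the $T^{3/4}$ rate. In the decentralized setting Agent~1's individual benchmark is the best fixed base‑stock level against the \emph{realized}, noisy and slowly‑converging inventory path of Agent~2 (and the realized contract coefficients), so it is not literally $s_1^*$; controlling its distance to $s_1^*$ demands both the exact identification of $s_1^*$ with the idealized‑benchmark minimizer and a quantitative stability estimate that correctly aggregates Agent~2's epoch‑wise $L_{m-1}^{-1/4}$ accuracy over the doubling schedule. It is precisely this aggregation — $\tfrac1T\sum_m L_m\cdot L_{m-1}^{-1/4}=\Theta(T^{-1/4})$ — that converts the $T^{-1/4}$ pointwise accuracy into an $\otil(T^{3/4})$ cumulative individual regret for Agent~1, strictly worse than the $\otil(\sqrt T)$ regret of the overall loss.
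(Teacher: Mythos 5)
Your convergence bounds for $s_{M,1}$ and $s_{M,2}$ and the $\order(\log^3 T)$ bound for $\Reg_{T,2}$ follow the paper's proof essentially step by step (Lemma~\ref{lem: concentration lemma s-1}, the split through $s_2^*(\omega_M)=\Phi^{-1}(\omega_M/(\omega_M+h_2))$ with Lemma~\ref{lem: beta-estimation} and Lemma~\ref{lem: high-prob-concentration-lemma}, and the epoch-wise application of Theorem~\ref{thm: low-switching-regret} for Agent~2). The genuinely different part is your treatment of $\Reg_{T,1}$. The paper never characterizes $\benchs_1^*$: it replaces both the algorithm's loss and the benchmark loss by their $s_2^*$-versions using the $\order(\log T)$-Lipschitzness of $\wh{H}_{t,1}^{\contract}(s_1,\cdot)$ and $\sum_t|s_{t,2}-s_2^*|=\otil(T^{3/4})$ (the paper's \pref{eqn: sum_diff_epoch}), then simply evaluates $\min_{s_1}\E\bigl[\sum_t\wh{H}_{t,1}^{\contract}(s_1,s_2^*)\bigr]$ at $s_1^*$, which is its exact minimizer by the first-order conditions in Lemma~\ref{lem: coupling-optimal}; the leftover term is $\otil(\sqrt T)$ by concentration of $s_{m,1}$. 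You instead pin down the realized benchmark $\benchs_1^*$ directly, bounding the per-round regret by $\max\{h_1,p_1\}|s_{m,1}-\benchs_1^*|$ and proving $|s_1^*-\benchs_1^*|=\otil(T^{-1/4})$ via an implicit-function/stability argument that compares the realized benchmark gradient to the idealized one. Both routes are correct and give $\otil(T^{3/4})$; the paper's $\min_{s_1}$ trick is cleaner because it bypasses the stability step entirely, while your route makes the structure of $\benchs_1^*$ explicit, at the cost of an extra ingredient that the paper does not need.

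One small inaccuracy in your stability step: you lower-bound the curvature of the idealized Agent-1 objective by $(h_1+p_1)\Phi(s_2^*)\philow$, which is not guaranteed to be $\Omega(1)$ since nothing in the assumptions precludes $\Phi(s_2^*)$ from being arbitrarily small. The correct second derivative at $s_1$ is
\begin{align*}
(h_1+p_1)\Bigl(\Phi(s_2^*)\phi(s_1)+\int_0^{D-s_2^*}\phi(s_1-u)\phi(u+s_2^*)\diff u\Bigr),
\end{align*}
and it is the \emph{second} integral term, which is at least $(h_1+p_1)\philow^2\min\{D-s_2^*,\,s_1^*-\dlow\}>0$ near $s_1^*$ (using \pref{eqn: s-2-star-bound} for $D-s_2^*$ and $s_1^*=\Phi^{-1}\bigl(\tfrac{h_2+p_1}{h_1+p_1}\bigr)>\dlow$), that actually supplies the $\Omega(1)$ strong convexity; your written constant degenerates exactly when you would need it most. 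This does not break the argument but the justification as written is wrong, and it is precisely the kind of case analysis the paper's route avoids.
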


Finally, we show that~\pref{alg:ons-lazy-three-non-decoupling} also guarantees that the regret with respect to the sum of both agents' losses is bounded by $\otil(\sqrt{T})$, which is the same as the one obtained in the centralized setting. 
Note that by directly using the regret guarantees, the convergence of both agents decisions in~\pref{thm: decentralized-non-decoupling}, and the lipschitzness of the loss function, the overall regret of the loss sum can only be bounded by $\otil(T^{3/4})$. 
In order to improve the overall regret from $\otil(T^{3/4})$ to $\otil(\sqrt{T})$, we need to apply a refined analysis on Agent 2's decision sequence; see the following see~\pref{app: coupling-overall-regret} for the proof of the following theorem. 
Empirical results shown in~\pref{app: coupling-exp} also support our theoretical statements.
\begin{theorem}\label{thm: coupling-overall-regret}
    Picking $\delta= 1/T^3$, \pref{alg:ons-lazy-three-non-decoupling} guarantees that $\mathbb{E}\left[\Reg_{T}\right]\leq \otil(\sqrt{T})$.
\end{theorem}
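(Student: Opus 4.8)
**Proof proposal for Theorem~\ref{thm: coupling-overall-regret} (overall $\otil(\sqrt T)$ regret in the decentralized setting).**

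The plan is to decompose the overall loss $\wt H_t = \wt H_{t,1}^{\contract}+\wt H_{t,2}^{\contract}+\text{(contract transfers cancel)}$, observe that the $\omega_t$-terms in $\wt H_{t,1}^{\contract}$ and $\wt H_{t,2}^{\contract}$ cancel exactly (since the contract is a pure transfer: Agent~2 pays $\omega_t(\trues_{t,2}-o_t)^-$ and Agent~1 receives it), so that $\wt H_{t,1}^{\contract}+\wt H_{t,2}^{\contract} = h_1(\trues_{t,1}-d_t)^+ + p_1(\trues_{t,1}-d_t)^- + h_2(\trues_{t,2}-o_t)^+ = \wt H_t$, which is exactly the centralized per-round loss. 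Thus it suffices to bound $\E[\sum_t \wt H_t - T\cdot H(s_1^*,s_2^*)]$. First I would invoke \pref{lem: real-demand-coupling} epoch-by-epoch: since both Agent~1 and Agent~2 keep fixed desired inventory levels $s_{m,1}, s_{m,2}$ within each epoch $I_m$, outside of $\Theta(\log T)$ total rounds we have $\wt H_t = \wh H_t(s_{m,1},s_{m,2})$ with $\E[\wh H_t(s_1,s_2)]=H(s_1,s_2)$ (\pref{lem: stationary-ot-dt}). These $\Theta(\log T)$ exceptional rounds contribute only $\otil(1)$ to the regret by boundedness, so up to $\otil(1)$ we must bound $\sum_m \sum_{t\in I_m}\paren{H(s_{m,1},s_{m,2}) - H(s_1^*,s_2^*)}$ plus a martingale/concentration term of order $\otil(\sqrt T)$ coming from replacing $\wh H_t$ by its mean.

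The core of the argument is then to bound $\sum_m L_m\paren{H(s_{m,1},s_{m,2}) - H(s_1^*,s_2^*)}$. I would split this via the intermediate point $H(s_1^*, s_{m,2})$: the term $H(s_{m,1},s_{m,2}) - H(s_1^*, s_{m,2})$ is controlled by Lipschitzness of $H$ in its first argument times $|s_{m,1}-s_1^*|$, and by the concentration bound (\pref{lem: concentration lemma s-1}) $|s_{m,1}-s_1^*| = \order(\sqrt{\log(T/\delta)/L_{m-1}})$, so $\sum_m L_m\cdot\order(\sqrt{1/L_{m-1}}) = \sum_m \order(\sqrt{L_m}) = \otil(\sqrt T)$ by the geometric growth of epoch lengths. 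The remaining term $\sum_m L_m\paren{H(s_1^*,s_{m,2}) - H(s_1^*,s_2^*)}$ requires the refined analysis on Agent~2's decision sequence mentioned in the excerpt: here I cannot afford the naive bound $L_m\cdot |s_{m,2}-s_2^*| \le L_m\cdot\otil(L_{m-1}^{-1/4})$, which only sums to $\otil(T^{3/4})$. Instead, since $s_1^*=\Phi^{-1}(\tfrac{h_2+p_1}{h_1+p_1})$ makes $s\mapsto H(s_1^*,s)$ convex (\pref{lem: coupling-optimal}) with minimizer $s_2^*$, and since $s_{m,2}$ is the last-iterate of the lazy-ONS instance from epoch $I_{m-1}$ on the strongly-convex expected loss $\E[\wh H_{t,2}^{\contract,\omega}]$, I would use the \emph{squared} convergence: $H(s_1^*,s_{m,2})-H(s_1^*,s_2^*) = \order(|s_{m,2}-s_2^*|^2)$ by smoothness, and then show $|s_{m,2}-s_2^*|^2 = \otil(1/L_{m-1})$ — but this requires caring about the fact that the lazy-ONS target is $\Phi^{-1}(\tfrac{\omega_m}{\omega_m+h_2})$ rather than $s_2^*$ exactly, so I must also absorb the error $|\omega_m-\omega^*| = \order(L_{m-1}^{-1/4}\log(T/\delta))$ from \pref{lem: beta-estimation}. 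Since $|s_2^* - \Phi^{-1}(\tfrac{\omega_m}{\omega_m+h_2})| = \order(|\omega_m - \omega^*|)$, this contributes $\order(L_{m-1}^{-1/2}\log^2(T/\delta))$ to $|s_{m,2}-s_2^*|^2$, and $\sum_m L_m \cdot \otil(L_{m-1}^{-1/2}) = \otil(\sqrt T)$ again by geometric summation.

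Putting the pieces together: overall regret $\le \otil(1) + \otil(\sqrt T)_{\text{martingale}} + \sum_m\otil(\sqrt{L_m}) + \sum_m L_m\cdot\otil(1/\sqrt{L_{m-1}}) = \otil(\sqrt T)$, and choosing $\delta = 1/T^3$ makes all the high-probability failure events contribute $o(1)$ to the expectation (the loss is bounded by $\poly$ over a horizon $T$, so a failure probability $1/T^3$ costs $\otil(1/T^2)$ in expectation). \textbf{The main obstacle} I anticipate is the last step: getting the $\otil(1/L_{m-1})$ rate on $|s_{m,2}-s_2^*|^2$ rather than the $\otil(L_{m-1}^{-1/2})$ one would obtain from naively squaring the $\otil(T^{-1/4})$ last-iterate bound of \pref{thm: decentralized-non-decoupling}. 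This needs the strong convexity of the ONS target together with the $\order(\log^2 L_{m-1})$ regret of lazy-ONS to give an $\order(\log^2 L_{m-1}/L_{m-1})$ bound on the squared distance of the \emph{averaged} iterate, carefully tracking that the dominant error is actually the $|\omega_m-\omega^*|^2 = \otil(L_{m-1}^{-1/2})$ contribution — which is exactly what forces the final rate to be $\sqrt T$ and not better, and which is the "refined analysis on Agent~2's decision sequence" alluded to in the statement.
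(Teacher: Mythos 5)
Your decomposition of $\Reg_T$ (contract transfers cancel, reduce to $\wh H_t$ via \pref{lem: real-demand-coupling}, handle Agent~1 by Lipschitzness and the $\otil(1/\sqrt{L_{m-1}})$ concentration of $s_{m,1}$) matches the paper's. For Agent~2, however, you and the paper take genuinely different routes to the $\otil(\sqrt T)$ bound. You propose to exploit smoothness of $s_2\mapsto H(s_1^*,s_2)$ at its interior minimizer $s_2^*$ to get a quadratic gain, $H(s_1^*,s_{t,2})-H(s_1^*,s_2^*)\le \frac{\beta}{2}|s_{t,2}-s_2^*|^2$, then bound $\sum_{t\in I_m}|s_{t,2}-s_2^*|^2$ by $\sum_{t\in I_m}|s_{t,2}-\bar s_{m,2}^*|^2 + L_m|\bar s_{m,2}^*-s_2^*|^2$, where the first term is $\order(\log^2 T)$ from the ONS regret plus strong convexity of the contract loss, and the second is $L_m\cdot\order(|\omega_m-\omega^*|^2)=\otil(\sqrt{L_m})$. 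This works (smoothness holds since at $s_1^*$ the $\phi(s_2)$-term in \pref{eqn: grad-grad-s-2} vanishes and the remaining convolution term is $\le (h_1+p_1)\Gamma^2 D$), and it is conceptually cleaner. The paper instead never invokes smoothness: it decomposes $\sum_{t\in I_m}\bigl(H(s_1^*,s_{t,2})-H(s_1^*,s_2^*)\bigr)$ through a chain of fixed points $\bar s_{m,2}^*=\Phi^{-1}(\tfrac{\omega_m}{\omega_m+h_2})$, $\wt s_{m,2}=\wh\Phi_{m-1}^{-1}(\tfrac{\omega_m}{\omega_m+h_2})$, and $\bar s_{m,2}$ (the contract-maker's OGD output), bounding the $\wt s_{m,2}\to\bar s_{m,2}$ gap by an order-statistics spacing argument (\pref{eqn: order-stats-gap}), the $\bar s_{m,2}^*\to\wt s_{m,2}$ gap by \pref{lem: hoeffding}, and the final function-value gap $H(s_1^*,\bar s_{m,2})-H(s_1^*,s_2^*)$ directly via Jensen and the OGD regret of \pref{alg:ogd-third-party} (\pref{eqn: reg_central}), so only convexity and Lipschitzness are ever used. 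Both give $\otil(\sqrt{L_m})$ per epoch. Two small cautions on your write-up: Agent~2's decision is not a single $s_{m,2}$ per epoch but a time-varying sequence $s_{t,2}$ (the lazy-ONS iterates switch $\order(\log T)$ times inside $I_m$), so the bound must be stated as $\sum_{t\in I_m}(\cdots)$ and the term $\sum_{t\in I_m}|s_{t,2}-\bar s_{m,2}^*|^2$ must be controlled (it is, by \pref{eqn: sum_diff}); and since $\Reg_T$ is an expectation, no separate $\otil(\sqrt T)$ martingale term is needed beyond folding the $\delta=1/T^3$ failure events into the expectation, as you note at the end.
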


\section{Conclusion and Future Directions}\label{sec: conclusion}
In contrast to the classic offline two-echelon stochastic inventory planning problem with known distribution studied in the SCM literature, we consider the problem with an unknown demand distribution in an online setting, which is more realistic and, as far as we know, not studied before. We consider the model formulation introduced in~\citep{MS99:Cachon_Zipkin} under both the centralized and decentralized setting, and prove both regret guarantees and convergence to the offline optimal base-stock policy.
While we assume that the true demand is observable even when it exceeds the current inventory level, a more challenging setting is the censored demand setting where 
only the amount of the satisfied demand is available.
Extending our results to the censored demand and unobserved lost sales setting appears to require new ideas.

\bibliography{ref}

\begin{thebibliography}{30}
\providecommand{\natexlab}[1]{#1}
\providecommand{\url}[1]{\texttt{#1}}
\expandafter\ifx\csname urlstyle\endcsname\relax
  \providecommand{\doi}[1]{doi: #1}\else
  \providecommand{\doi}{doi: \begingroup \urlstyle{rm}\Url}\fi

\bibitem[Cachon and Zipkin(1999)]{MS99:Cachon_Zipkin}
Gérard~P. Cachon and Paul~H. Zipkin.
\newblock Competitive and cooperative inventory policies in a two-stage supply
  chain.
\newblock \emph{Management Science}, 45\penalty0 (7):\penalty0 936--953, 1999.
\newblock ISSN 00251909, 15265501.

\bibitem[Simchi-Levi et~al.(1999)Simchi-Levi, Kaminsky, and
  Simchi-Levi]{simchi2008designing}
David Simchi-Levi, Philip Kaminsky, and Edith Simchi-Levi.
\newblock \emph{Designing and managing the supply chain: concepts, strategies
  and case studies}.
\newblock McGraw-Hill/Irwin, 1999.

\bibitem[Clark and Scarf(1960)]{clark1960optimal}
Andrew~J Clark and Herbert Scarf.
\newblock Optimal policies for a multi-echelon inventory problem.
\newblock \emph{Management science}, 6\penalty0 (4):\penalty0 475--490, 1960.

\bibitem[Federgruen and Zipkin(1984)]{federgruen1984computational}
Awi Federgruen and Paul Zipkin.
\newblock Computational issues in an infinite-horizon, multi-echelon inventory
  model.
\newblock \emph{Operations Research}, 32\penalty0 (4):\penalty0 818--836, 1984.

\bibitem[Chen and Zheng(1994)]{chen1994lower}
Fangruo Chen and Yu-Sheng Zheng.
\newblock Lower bounds for multi-echelon stochastic inventory systems.
\newblock \emph{Management Science}, 40\penalty0 (11):\penalty0 1426--1443,
  1994.

\bibitem[Levi et~al.(2007)Levi, Roundy, and Shmoys]{levi2007provably}
Retsef Levi, Robin~O Roundy, and David~B Shmoys.
\newblock Provably near-optimal sampling-based policies for stochastic
  inventory control models.
\newblock \emph{Mathematics of Operations Research}, 32\penalty0 (4):\penalty0
  821--839, 2007.

\bibitem[Huh and Rusmevichientong(2009)]{MOR2009nonparametric}
Woonghee~Tim Huh and Paat Rusmevichientong.
\newblock A nonparametric asymptotic analysis of inventory planning with
  censored demand.
\newblock \emph{Mathematics of Operations Research}, 34\penalty0 (1):\penalty0
  103--123, 2009.

\bibitem[Huh et~al.(2011)Huh, Levi, Rusmevichientong, and
  Orlin]{huh2011adaptive}
Woonghee~Tim Huh, Retsef Levi, Paat Rusmevichientong, and James~B Orlin.
\newblock Adaptive data-driven inventory control with censored demand based on
  kaplan-meier estimator.
\newblock \emph{Operations Research}, 59\penalty0 (4):\penalty0 929--941, 2011.

\bibitem[Levi et~al.(2015)Levi, Perakis, and Uichanco]{levi2015data}
Retsef Levi, Georgia Perakis, and Joline Uichanco.
\newblock The data-driven newsvendor problem: new bounds and insights.
\newblock \emph{Operations Research}, 63\penalty0 (6):\penalty0 1294--1306,
  2015.

\bibitem[Zhang et~al.(2018)Zhang, Chao, and Shi]{zhang2018perishable}
Huanan Zhang, Xiuli Chao, and Cong Shi.
\newblock Perishable inventory systems: Convexity results for base-stock
  policies and learning algorithms under censored demand.
\newblock \emph{Operations Research}, 66\penalty0 (5):\penalty0 1276--1286,
  2018.

\bibitem[Chen et~al.(2020)Chen, Shi, and Duenyas]{chen2020optimal}
Weidong Chen, Cong Shi, and Izak Duenyas.
\newblock Optimal learning algorithms for stochastic inventory systems with
  random capacities.
\newblock \emph{Production and Operations Management}, 29\penalty0
  (7):\penalty0 1624--1649, 2020.

\bibitem[Chen et~al.(2021)Chen, Chao, and Shi]{MOR21:nonparametric}
Boxiao Chen, Xiuli Chao, and Cong Shi.
\newblock Nonparametric learning algorithms for joint pricing and inventory
  control with lost sales and censored demand.
\newblock \emph{Mathematics of Operations Research}, 46\penalty0 (2):\penalty0
  726--756, 2021.

\bibitem[Chen and Chao(2020)]{chen2020dynamic}
Boxiao Chen and Xiuli Chao.
\newblock Dynamic inventory control with stockout substitution and demand
  learning.
\newblock \emph{Management Science}, 66\penalty0 (11):\penalty0 5108--5127,
  2020.

\bibitem[Ding et~al.(2021)Ding, Huh, and Rong]{SSRN2021}
Jingying Ding, Woonghee~Tim Huh, and Ying Rong.
\newblock Feature-based nonparametric inventory control with censored demand.
\newblock \emph{Available at SSRN 3803777}, 2021.

\bibitem[Cachon(2003)]{cachon2003supply}
G{\'e}rard~P Cachon.
\newblock Supply chain coordination with contracts.
\newblock \emph{Handbooks in operations research and management science},
  11:\penalty0 227--339, 2003.

\bibitem[Hazan et~al.(2007)Hazan, Agarwal, and Kale]{ML07'log-exp-concave}
Elad Hazan, Amit Agarwal, and Satyen Kale.
\newblock Logarithmic regret algorithms for online convex optimization.
\newblock \emph{Machine Learning}, 69\penalty0 (2):\penalty0 169--192, 2007.

\bibitem[Lariviere(1999)]{lariviere1999supply}
Martin~A Lariviere.
\newblock Supply chain contracting and coordination with stochastic demand.
\newblock In \emph{Quantitative models for supply chain management}, pages
  233--268. Springer, 1999.

\bibitem[Tsay et~al.(1999)Tsay, Nahmias, and Agrawal]{tsay1999modeling}
Andy~A Tsay, Steven Nahmias, and Narendra Agrawal.
\newblock Modeling supply chain contracts: A review.
\newblock \emph{Quantitative models for supply chain management}, pages
  299--336, 1999.

\bibitem[Chen(2003)]{chen2003information}
Fangruo Chen.
\newblock Information sharing and supply chain coordination.
\newblock \emph{Handbooks in operations research and management science},
  11:\penalty0 341--421, 2003.

\bibitem[Lee and Whang(1999)]{lee1999decentralized}
Hau Lee and Seungjin Whang.
\newblock Decentralized multi-echelon supply chains: Incentives and
  information.
\newblock \emph{Management science}, 45\penalty0 (5):\penalty0 633--640, 1999.

\bibitem[Lee et~al.(2000)Lee, So, and Tang]{lee2000value}
Hau~L Lee, Kut~C So, and Christopher~S Tang.
\newblock The value of information sharing in a two-level supply chain.
\newblock \emph{Management science}, 46\penalty0 (5):\penalty0 626--643, 2000.

\bibitem[Porteus(2000)]{porteus2000responsibility}
Evan~L Porteus.
\newblock Responsibility tokens in supply chain management.
\newblock \emph{Manufacturing \& Service Operations Management}, 2\penalty0
  (2):\penalty0 203--219, 2000.

\bibitem[Watson and Zheng(2005)]{watson2005decentralized}
Noel Watson and Yu-Sheng Zheng.
\newblock Decentralized serial supply chains subject to order delays and
  information distortion: Exploiting real-time sales data.
\newblock \emph{Manufacturing \& Service Operations Management}, 7\penalty0
  (2):\penalty0 152--168, 2005.

\bibitem[Shang et~al.(2009)Shang, Song, and Zipkin]{shang2009coordination}
Kevin~H Shang, Jing-Sheng Song, and Paul~H Zipkin.
\newblock Coordination mechanisms in decentralized serial inventory systems
  with batch ordering.
\newblock \emph{Management Science}, 55\penalty0 (4):\penalty0 685--695, 2009.

\bibitem[Zipkin(2000)]{zipkin2000foundations}
Paul~Herbert Zipkin.
\newblock \emph{Foundations of inventory management}.
\newblock McGraw-Hill, 2000.

\bibitem[Snyder and Shen(2019)]{snyder2019fundamentals}
Lawrence~V Snyder and Zuo-Jun~Max Shen.
\newblock \emph{Fundamentals of supply chain theory}.
\newblock John Wiley \& Sons, 2019.

\bibitem[Zinkevich(2003)]{ICML03'OCO}
Martin Zinkevich.
\newblock Online convex programming and generalized infinitesimal gradient
  ascent.
\newblock In \emph{Proceedings of the 20th international conference on machine
  learning}, pages 928--936, 2003.

\bibitem[Sherman and Koren(2021)]{COLT21:lazyoco}
Uri Sherman and Tomer Koren.
\newblock Lazy oco: Online convex optimization on a switching budget.
\newblock In Mikhail Belkin and Samory Kpotufe, editors, \emph{Proceedings of
  Thirty Fourth Conference on Learning Theory}, volume 134 of \emph{Proceedings
  of Machine Learning Research}, pages 3972--3988. PMLR, 15--19 Aug 2021.

\bibitem[Van~Erven and Koolen(2016)]{van2016metagrad}
Tim Van~Erven and Wouter~M Koolen.
\newblock Metagrad: Multiple learning rates in online learning.
\newblock \emph{Advances in Neural Information Processing Systems}, 29, 2016.

\bibitem[Hazan et~al.(2016)]{hazan2016introductionOCO}
Elad Hazan et~al.
\newblock Introduction to online convex optimization.
\newblock \emph{Foundations and Trends{\textregistered} in Optimization},
  2\penalty0 (3-4):\penalty0 157--325, 2016.

\end{thebibliography}
\bibliographystyle{unsrtnat}

\onecolumn
\newpage
\appendix

\section{Omitted proofs in~\pref{sec: central-planner-coupling}}\label{app: central-planner-coupling}

In this section, we show the omitted proofs in the centralized setting in~\pref{sec: coupling}.

First, we prove~\pref{lem: real-demand-coupling}, which shows that if both agents keep picking the same desired inventory level $(s_{t,1}, s_{t,2})=(s_1',s_2')$ for a period of rounds, then there are at most $\Theta(1)$ rounds such that~\pref{eqn: low-switch-1},~\pref{eqn: low-switch-2} and~\pref{eqn: low-switch-3} do not hold. 
For completeness, we restate~\pref{lem: real-demand-coupling} as follows:

\begin{lemma}[Restatement of~\pref{lem: real-demand-coupling}]\label{lem: real-demand-coupling-app}
In round $t_0$, 
suppose that Agent 1 and Agent 2's desired inventory level for the following $L$ rounds is $s_1'$ and $s_2'$. Then, for some $t_1=\Theta(1)$, it holds that for all $t\in [t_0+t_1,t_0+L]$, $\wh{s}_{t,2}=s_2'$, $d_{t} = o_t$. In addition,
$\wh{s}_{t,1}=s_1'$ if $s_2'>d_{t-1}$ and $\wh{s}_{t,1}=s_1'+s_2'-d_{t-1}$ otherwise. Consequently, it hold that $\wt{H}_t=\wh{H}_t(s_1',s_2')$  for all $t\in[t_0+t_1, t_0+L]$.
\end{lemma}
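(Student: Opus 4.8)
The plan is to follow only two scalar processes: Agent~1's post-demand inventory $\wt{s}_{t,1}$ (which by definition already incorporates the one-round-delayed backfill from Agent~2) and Agent~2's start-of-round inventory $\wh{s}_{t,2}$, and to show that each locks into a deterministic steady state after $\order(1)$ rounds. Throughout we use the online-inventory dynamics, the hypothesis $s_{t+1,1}=s_1'$ and $s_{t+1,2}=s_2'$ for $t\in\{t_0,\dots,t_0+L-1\}$, and the fact that all inventory quantities at round $t_0$ are bounded by problem-dependent constants; the latter is an invariant of the algorithm (it holds at round~$1$ since the initial orders lie in bounded decision sets, and is preserved across epochs precisely by the conclusion of this lemma).

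\textbf{Step 1: a self-contained recursion for Agent~1.} Substituting $\wh{s}_{t+1,1}=\wt{s}_{t,1}+\min\{\wh{s}_{t,2},o_t\}$ into $\wt{s}_{t+1,1}=\wh{s}_{t+1,1}-d_{t+1}+(o_t-\wh{s}_{t,2})^+$ and invoking the elementary identity $\min\{a,b\}+(b-a)^+=b$ (with $a=\wh{s}_{t,2}$, $b=o_t\ge0$), the partial-shipment and delayed-backfill terms combine to exactly $o_t$, giving $\wt{s}_{t+1,1}=\wt{s}_{t,1}+o_t-d_{t+1}$; since $o_t=(s_1'-\wt{s}_{t,1})^+$ this reduces to
\[
\wt{s}_{t+1,1}=\max\{s_1',\wt{s}_{t,1}\}-d_{t+1},\qquad t\in\{t_0,\dots,t_0+L-1\},
\]
which holds unconditionally (in particular, whether or not Agent~2 can fill Agent~1's order). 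From it: while $\wt{s}_{t,1}\ge s_1'$ the value drops by at least $\dlow>0$ per round (\pref{assum:bounded}), so after $t_1'=\order(1)$ rounds $\wt{s}_{t,1}<s_1'$; and once below $s_1'$ the recursion gives $\wt{s}_{t+1,1}=s_1'-d_{t+1}<s_1'$, so it stays there. Hence for $t_0+t_1'<t\le t_0+L$ we have $\wt{s}_{t,1}=s_1'-d_t$, and therefore $o_t=(s_1'-\wt{s}_{t,1})^+=d_t$.

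\textbf{Step 2: Agent~2, on-hand inventory, and the losses.} Agent~2's dynamics give $\wh{s}_{t+1,2}=\max\{s_2',\wh{s}_{t,2}-o_t\}$, so (since $o_t\ge0$, by induction) $\wh{s}_{t,2}\le\max\{s_2',\wh{s}_{t_0,2}\}=\order(1)$ for all $t$; and for $t>t_0+t_1'$, where $o_t=d_t\ge\dlow$, the value $\wh{s}_{t,2}$ drops by at least $\dlow$ each round until it hits $s_2'$, after which $\max\{s_2',s_2'-d_t\}=s_2'$ keeps it there. Thus there is $t_1=\order(1)$ with $\wh{s}_{t,2}=s_2'$, $o_t=d_t$ and $\wt{s}_{t,1}=s_1'-d_t$ for all $t\in[t_0+t_1,t_0+L]$ (enlarging $t_1$ by a further constant so the same holds at $t-1$). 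Then $\wh{s}_{t,1}=\wt{s}_{t-1,1}+\min\{\wh{s}_{t-1,2},o_{t-1}\}=(s_1'-d_{t-1})+\min\{s_2',d_{t-1}\}$, which is $s_1'$ if $s_2'>d_{t-1}$ and $s_1'+s_2'-d_{t-1}$ otherwise --- precisely the quantity $\mathring{s}_{t,1}$ in~\pref{eqn: hhat}. Finally, substituting $\wh{s}_{t,2}=s_2'$ and $o_t=d_t$ into $\wt{H}_t=h_1(\wh{s}_{t,1}-d_t)^++p_1(\wh{s}_{t,1}-d_t)^-+h_2(\wh{s}_{t,2}-o_t)^+$ yields $\wt{H}_t=\wh{H}_t(s_1',s_2')$.

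\textbf{Main obstacle.} The crux is Step~1: the natural instinct is to case-split on whether Agent~2 has enough stock to serve $o_t$ during the transient, which entangles the two inventory processes. The key observation that avoids this is that the partial shipment $\min\{\wh{s}_{t,2},o_t\}$ plus the delayed backfill $(o_t-\wh{s}_{t,2})^+$ always equals $o_t$, so $\wt{s}_{t,1}$ obeys a decoupled recursion. The remaining mild point is the $\order(1)$ bound on the transient length, which relies on $\dlow>0$ and on the boundedness of the inventory quantities at round $t_0$.
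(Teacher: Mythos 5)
Your proof is correct and reaches the same conclusions as the paper's along the same three-phase skeleton (first Agent~1's process, then Agent~2's, then $\wh{s}_{t,1}$ and $\wt{H}_t$), but your Step~1 is cleaner and tighter. The paper handles the first phase by a case split on whether $\wt{s}_{t_0,1}$ is above or below $s_1'$ and traces the coupled dynamics informally, relying on the verbal remark that the delayed backfill arrives before Agent~1's next order and then ``repeating the process''; there is also an apparent sign typo in the overshoot subcase (it writes $\wt{s}_{t_0+t_2,1}\geq s_1'$ where $\leq$ is intended). Your Step~1 replaces all of this with the one-line algebraic observation that $\min\{\wh{s}_{t,2},o_t\}+(o_t-\wh{s}_{t,2})^+=o_t$, which yields the closed-form decoupled recursion $\wt{s}_{t+1,1}=\max\{s_1',\wt{s}_{t,1}\}-d_{t+1}$. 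This eliminates the case analysis, holds regardless of Agent~2's stock level, and makes the $\order(1)$ transient bound immediate. You also make explicit a point the paper glosses over: the constant transient length depends on $\wt{s}_{t_0,1}$ and $\wh{s}_{t_0,2}$ being bounded at the start of the epoch, an invariant which this very lemma propagates from one epoch to the next. Step~2 and the final substitution into $\wt{H}_t$ are essentially identical to the paper's.
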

\begin{proof}
     Let $\tau^*=\argmin_{\tau\in [L]}\left\{o_{t_0+\tau}>0\right\}$. Next, we first show that $\{o_{t_0+\tau}\}_{\tau=\tau^*+1}^{L-1}=\{d_{t+\tau}\}_{\tau=\tau^*+1}^{L-1}$ and $\wt{s}_{t,1}=s_1'-d_t$ for all $t\in \{\tau^*+1,\tau^*+2,\dots,L-1\}$. If $s_1'\geq \wt{s}_{t_0,1}$, then we have $o_{t_0}=s_1'-\wt{s}_{t_0,1}\geq 0$. As Agent 1 will receive the unsatisfied orders from Agent 2 before Agent 1 makes the order in the next round, at round $t+1$, Agent 1's inventory before ordering is $\wt{s}_{t_0+1}=s_1'-d_{t_0+1}$, which means that $o_{t_0+1}=s_1'-\wt{s}_{t_0+1}=d_{t_0+1}$. Repeating the above process shows that for all $t\in[t_0+1,t_0+L]$, we have $o_{t}=d_{t}$ and $\wt{s}_{t,1}=s_1'-d_{t}$.
     
     On the other hand, if $s_1'< \wt{s}_{t_0,1}$, then we have $o_{t_0}=0$ as Agent 1 can not discard the inventory. According to~\pref{assum:bounded}, we have $d_{t_0+1}\geq d$ and $\wt{s}_{t_0+1,1}\leq \wt{s}_{t_0,1}-d$. Therefore, within at most constant $t_2=\order(1)$ number of rounds, we have $\wt{s}_{t+t_2,1}\geq s_1'$. Then following the analysis in the first case proves that during $t\in[t_0+t_2,t_0+L]$, we have $o_t=d_t$ and $\wt{s}_{t,1}=s_1'-d_t$. 
     
     Next, we show that $\wh{s}_{t,2}=s_2$ after constant number of rounds. Specifically, if $\wh{s}_{t_0,2}\leq s_2'$, then at round $t_0+1$, we have $\wh{s}_{t_0+1,2}=s_2'$. Otherwise, note that when $t'\geq t_0+t_2$, $o_{t'}=d_{t'}\geq d$. Therefore, after at most constant $t_3=\order(1)$ number of rounds, we have $\wh{s}_{t_0+t_2+t_3,2}\leq s_2'$, meaning that $\wh{s}_{t,2}=s_2'$ for all $t\in [t_0+t_2+t_3+1,t_0+L]$.

     Finally, we show that $\wh{s}_{t,1}=s_1'$ if $s_2'>d_{t-1}$ and $\wh{s}_{t,1}=s_1'+s_2'-d_{t-1}$ otherwise after constant number of rounds. As shown above, when $t\geq t_0+t_2+t_3+1$, we know that $\wh{s}_{t,2}=s_2'$, $\wt{s}_{t,1}=s_1'-d_t$ and $o_t=d_t$. According to the dynamic of $\wh{s}_{t,1}$, we know that
     \begin{align*}
         \wh{s}_{t+1,1} = \wt{s}_{t,1}+\min\{\wh{s}_{t,2}, o_t\} = s_1'-d_t+\min\{s_2',d_t\}=\begin{cases}
             s_1'+s_2'-d_t &\mbox{if $s_2'<d_t$},\\
             s_1' &\mbox{otherwise.}
         \end{cases}
     \end{align*}
     
     Therefore, setting $t_1=t_2+t_3+1=\order(1)$ finishes the proof of the first statement.
     The second statement holds for $t\in[t_0+t_1,t_0+L]$ according to the definition of $\wt{H}_t$ and $\wh{H}_t(s_1',s_2')$.
\end{proof}

Next, we show that stochastic loss function defined in~\pref{eqn: hhat} is an unbiased loss estimator of the expected loss $H(s_1,s_2)$.
\begin{lemma}\label{lem: stationary-ot-dt}
$\mathbb{E}\left[\wh{H}_t(s_1, s_2)\right]=H(s_1,s_2)$, for all $t\in[T]$, where $\wh{H}_t(s_1,s_2)$ is defined in~\pref{eqn: hhat}.
\end{lemma}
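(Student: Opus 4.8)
The plan is to compute $\mathbb{E}[\wh{H}_t(s_1,s_2)]$ directly, exploiting that the two demand draws appearing in $\wh{H}_t(s_1,s_2)$ — namely $d_t$ and $d_{t-1}$ — are independent (both are i.i.d.\ from $\calD$), and that $\mathring{s}_{t,1}$ is a deterministic function of $d_{t-1}$ alone. First I would split
\begin{align*}
\wh{H}_t(s_1,s_2)=\big[h_1(\mathring{s}_{t,1}-d_t)^++p_1(\mathring{s}_{t,1}-d_t)^-\big]+h_2(s_2-d_t)^+,
\end{align*}
and dispose of the last term trivially: since $d_t\sim\calD$, $\mathbb{E}[h_2(s_2-d_t)^+]=h_2\mathbb{E}_{x\sim\calD}[(s_2-x)^+]$, which is precisely the first summand of $H_2(s_1,s_2)$.

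For the bracketed term I would condition on $d_{t-1}$ and take the expectation over $d_t$ first. The elementary identity I need is that for any fixed level $c$, $\mathbb{E}_{x\sim\calD}[h_1(c-x)^++p_1(c-x)^-]=G(c)$, using $G=G_1+G_2$ together with the definitions $G_1(y)=h_1\mathbb{E}[(y-x)^+]+\alpha p_1\mathbb{E}[(y-x)^-]$ and $G_2(y)=(1-\alpha)p_1\mathbb{E}[(y-x)^-]$, so that $\alpha$ cancels and $G(y)=h_1\mathbb{E}[(y-x)^+]+p_1\mathbb{E}[(y-x)^-]$. Applying this with $c=\mathring{s}_{t,1}$ (legitimate by independence — given $d_{t-1}$, $\mathring{s}_{t,1}$ is a constant and $d_t$ is still distributed as $\calD$) gives $\mathbb{E}[h_1(\mathring{s}_{t,1}-d_t)^++p_1(\mathring{s}_{t,1}-d_t)^-\mid d_{t-1}]=G(\mathring{s}_{t,1})$, which equals $G(s_1)$ on $\{d_{t-1}<s_2\}$ and $G(s_1+s_2-d_{t-1})$ on $\{d_{t-1}\ge s_2\}$.

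Taking the outer expectation over $d_{t-1}\sim\calD$ then yields
\begin{align*}
\mathbb{E}\big[G(\mathring{s}_{t,1})\big]=\Pr[d_{t-1}<s_2]\,G(s_1)+\int_{s_2}^{D}G(s_1+s_2-x)\phi(x)\,dx=\Phi(s_2)G(s_1)+\int_{s_2}^{D}G(s_1+s_2-x)\phi(x)\,dx,
\end{align*}
where I invoke \pref{assum:bounded-density} (existence of a bounded density) to replace $\Pr[d_{t-1}<s_2]$ with $\Phi(s_2)$, the boundary point $d_{t-1}=s_2$ carrying zero mass (and likewise making the integral endpoints immaterial). Finally I would re-expand $G=G_1+G_2$ inside this expression and add back $h_2\mathbb{E}[(s_2-x)^+]$, matching term-by-term the definitions of $H_1(s_1,s_2)$ and $H_2(s_1,s_2)$, so that the sum reconstitutes $H_1(s_1,s_2)+H_2(s_1,s_2)=H(s_1,s_2)$.

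The argument is essentially bookkeeping; the only two points that need care are (i) the $\alpha$-cancellation — verifying that $\wh{H}_t$, which charges the \emph{full} backorder penalty $p_1$ without any split, is consistent with summing the $\alpha$-weighted $H_1$ and the $(1-\alpha)$-weighted $H_2$ (this is where the $G=G_1+G_2$ collapse is used) — and (ii) the measure-zero boundary at $d_{t-1}=s_2$, handled by \pref{assum:bounded-density}. Neither is a real obstacle, so I expect the proof to be short.
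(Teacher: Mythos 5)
Your proof is correct and follows essentially the same route as the paper's: split $\wh{H}_t$ into the $(h_1,p_1)$ piece and the $h_2$ piece, exploit independence of $d_t$ and $d_{t-1}$ to reduce the former to $G(\mathring{s}_{t,1})$ via the $\alpha$-cancellation $G=G_1+G_2$, then integrate over the two events $\{d_{t-1}<s_2\}$ and $\{d_{t-1}\geq s_2\}$ to recover $\Phi(s_2)G(s_1)+\int_{s_2}^{D}G(s_1+s_2-x)\phi(x)\,dx$. The only difference is presentational — you condition on $d_{t-1}$ explicitly, whereas the paper writes the same decomposition using indicators — and your explicit invocation of \pref{assum:bounded-density} for the measure-zero boundary is a small point of extra care that the paper leaves implicit.
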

\begin{proof}
According to the definition of $\wt{H}_t$, we know that
\begin{align*}
\mathbb{E}\left[\wh{H}_t(s_1,s_2)\right] &= \mathbb{E}\left[h_1(\benchs_{t,1}-d_t)^++p_1(\benchs_{t,1}-d_t)^-+h_2(s_2-d_t)^+\right] \\
    &=\mathbb{E}\left[\left(h_1(s_1-d_t)^++p_1(s_1-d_t)^-\right)\cdot\mathbb{I}\left\{s_1\in [s_1-d_{t-1},s_1-d_{t-1}+s_2]\right\}\right]+\mathbb{E}\left[h_2(s_2-d_t)^+\right].\\
    &\qquad+\mathbb{E}\left[\left(h_1(s_1+s_2-d_{t-1}-d_t)^++p_1(s_1+s_2-d_{t-1}-d_t)^-\right)\cdot\mathbb{I}\left\{s_1\geq s_1-d_{t-1}+s_2\right\}\right]\\
    &=\mathbb{E}\left[\left(h_1(s_1-d_t)^++p_1(s_1-d_t)^-\right)\cdot\mathbb{I}\left\{s_2\geq d_{t-1}\right\}\right]+\mathbb{E}\left[h_2(s_2-d_t)^+\right].\\
    &\qquad+\mathbb{E}\left[\left(h_1(s_1+s_2-d_{t-1}-d_t)^++p_1(s_1+s_2-d_{t-1}-d_t)^-\right)\cdot\mathbb{I}\left\{s_2\leq d_{t-1}\right\}\right] \\
    &=\Phi(s_2)\cdot G(s_1)+\int_{s_2}^DG(s_1+s_2-u)\phi(u)du = H(s_1,s_2).
\end{align*}
\end{proof}

The next lemma shows that the optimal solution $(s_1^*,s_2^*)$ of $H(s_1,s_2)$ satisfies that $s_1^*=\Phi^{-1}(\frac{h_2+p_1}{h_1+p_1})$ and $H(s_1^*,s_2)$ is convex in $s_2$.

\begin{lemma}\label{lem: coupling-optimal}
   Let $(s_1^*,s_2^*)=\argmin_{s_1,s_2}H(s_1,s_2)$. Then it holds that $s_1^*=\Phi^{-1}(\frac{h_2+p_1}{h_1+p_1})$ and $H(s_1^*,s_2)$ is convex in $s_2$, where $\Phi(\cdot)$ is the cumulative density function of demand distribution $\calD$.
\end{lemma}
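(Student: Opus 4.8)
The plan is to reduce the statement to one-dimensional calculus on sections of $H$. First I would rewrite $H$ as a single integral: using $G=G_1+G_2$ and $\alpha p_1+(1-\alpha)p_1=p_1$,
\begin{align*}
H(s_1,s_2)=\Phi(s_2)\,G(s_1)+\int_{s_2}^{D}G(s_1+s_2-x)\phi(x)\,dx+h_2\,\mathbb{E}_{x\sim\calD}\big[(s_2-x)^+\big],
\end{align*}
where $G(y)=h_1\mathbb{E}[(y-x)^+]+p_1\mathbb{E}[(y-x)^-]$, so that $G'(y)=(h_1+p_1)\Phi(y)-p_1$; here $\Phi$ is differentiable with density $\phi$ by~\pref{assum:bounded-density}. (The same expression also follows from $\mathbb{E}[\wh{H}_t(s_1,s_2)]=H(s_1,s_2)$, \pref{lem: stationary-ot-dt}.) A useful preliminary remark is that $H(\cdot,s_2)$ is convex and coercive in $s_1$ for every fixed $s_2$, since $\partial_{s_1}^2H=(h_1+p_1)\big(\Phi(s_2)\phi(s_1)+\int_{s_2}^{D}\phi(s_1+s_2-x)\phi(x)\,dx\big)\ge0$, so the minimizer $(s_1^*,s_2^*)$ is an interior point at which both first-order conditions hold.

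Next I would write out the two stationarity conditions. Differentiating in $s_1$, dividing out, and using $\Phi(D)=1$ (\pref{assum:bounded}) gives
\begin{align*}
(h_1+p_1)\Big(\Phi(s_2^*)\Phi(s_1^*)+\int_{s_2^*}^{D}\Phi(s_1^*+s_2^*-x)\phi(x)\,dx\Big)=p_1.
\end{align*}
For the $s_2$-direction, the Leibniz rule applied to $\int_{s_2}^{D}G(s_1+s_2-x)\phi(x)\,dx$ produces a boundary term $-G(s_1)\phi(s_2)$ that cancels the $\phi(s_2)G(s_1)$ coming from $\Phi(s_2)G(s_1)$, leaving
\begin{align*}
\partial_{s_2}H(s_1,s_2)=(h_1+p_1)\int_{s_2}^{D}\Phi(s_1+s_2-x)\phi(x)\,dx-p_1+(h_2+p_1)\Phi(s_2),
\end{align*}
and setting this to zero at $(s_1^*,s_2^*)$ gives a second identity. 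Substituting it into the $s_1$-condition, the integral terms cancel and one is left with $\Phi(s_2^*)\big[(h_1+p_1)\Phi(s_1^*)-(h_2+p_1)\big]=0$; once one knows $\Phi(s_2^*)>0$, this forces $(h_1+p_1)\Phi(s_1^*)=h_2+p_1$, i.e.\ $s_1^*=\Phi^{-1}\!\big(\tfrac{h_2+p_1}{h_1+p_1}\big)$.

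For the convexity claim I would differentiate $\partial_{s_2}H$ once more in $s_2$ (again Leibniz), obtaining
\begin{align*}
\partial_{s_2}^2H(s_1,s_2)=\big[(h_2+p_1)-(h_1+p_1)\Phi(s_1)\big]\phi(s_2)+(h_1+p_1)\int_{s_2}^{D}\phi(s_1+s_2-x)\phi(x)\,dx.
\end{align*}
The key point is that at $s_1=s_1^*$ the bracketed coefficient of $\phi(s_2)$ is exactly $0$, so $\partial_{s_2}^2H(s_1^*,s_2)=(h_1+p_1)\int_{s_2}^{D}\phi(s_1^*+s_2-x)\phi(x)\,dx\ge0$; since $\partial_{s_2}H(s_1^*,\cdot)$ is absolutely continuous with a.e.\ nonnegative derivative it is nondecreasing, hence $H(s_1^*,\cdot)$ is convex. (This also explains why general $s_1$ cannot be used: for $s_1$ with $\Phi(s_1)>\tfrac{h_2+p_1}{h_1+p_1}$ the bracket is negative and dominates near $s_2=D$.)

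The routine parts are the two Leibniz differentiations and the bookkeeping with $\Phi(D)=1$. The step I expect to need the most care — the main obstacle — is justifying that $\Phi(s_2^*)>0$ (equivalently $s_2^*>d$), which is exactly what converts $\Phi(s_2^*)\big[(h_1+p_1)\Phi(s_1^*)-(h_2+p_1)\big]=0$ into the asserted identity. Here I would use that on $\{s_2\le d\}$ one has $\Phi(s_2)=\phi(s_2)=0$ and $(s_2-x)^+\equiv0$, so $H(s_1,s_2)=\int_{d}^{D}G(s_1+s_2-x)\phi(x)\,dx$ depends on $(s_1,s_2)$ only through $s_1+s_2$; consequently $\min_{s_1}H(s_1,s_2)$ is constant on $\{s_2\le d\}$, so a minimizer that determines $s_1^*$ uniquely must satisfy $s_2^*>d$, hence $\Phi(s_2^*)>0$. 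Differentiability of the CDF-based terms is unproblematic since $\phi$ is bounded (\pref{assum:bounded-density}).
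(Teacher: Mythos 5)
Your proposal takes essentially the same route as the paper's proof, and the calculations agree: write $\nabla_{s_1}H$ and $\nabla_{s_2}H$ via Leibniz and integration by parts, set both to zero, substitute the $s_1$-condition into the $s_2$-condition so the common integral cancels and one is left with $\Phi(s_2^*)\bigl[(h_2+p_1)-(h_1+p_1)\Phi(s_1^*)\bigr]=0$, and then verify $\partial^2_{s_2}H(s_1^*,s_2)\ge 0$ because the coefficient $(h_2+p_1)-(h_1+p_1)\Phi(s_1^*)$ multiplying $\phi(s_2)$ is exactly zero, leaving a nonnegative integral. All of this matches the paper.

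The one place you depart from the paper is in explicitly flagging that one must rule out $\Phi(s_2^*)=0$ before dividing. You are right that this is a real subtlety that the paper passes over silently: if $s_2^*\le d$ then $\Phi(s_2^*)=\phi(s_2^*)=0$, both first-order conditions collapse to the same equation in $s_1^*+s_2^*$, and $s_1^*$ is not pinned down at all. However, the justification you sketch is circular: the sentence ``a minimizer that determines $s_1^*$ uniquely must satisfy $s_2^*>d$'' presupposes the very uniqueness the lemma is supposed to deliver. Your observation that on $\{s_2\le d\}$ the value of $H(s_1,s_2)$ depends only on $s_1+s_2$ shows that a minimizer in that region would be non-unique along the $(1,-1)$ direction, but by itself it does not show the global minimum is attained only at $s_2^*>d$; to close the gap one would need an actual comparison of $\min_{s_1}H(s_1,s_2)$ for $s_2\le d$ versus $s_2>d$ (for example by showing this marginal is strictly decreasing just past $s_2=d$). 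Since the paper's own proof divides by $\Phi(s_2^*)$ without comment, your proposal is no weaker than the paper's, but the step you correctly identified as the main obstacle is not actually resolved by the argument you offer.
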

\begin{proof}
By definition of $H(s_1,s_2)$, it holds that
\begin{align*}
    H(s_1,s_2)&=h_2\mathbb{E}_{x\sim \calD}[(s_2-x)^+]+\Phi(s_2)G(s_1)+\int_{s_2}^{D}G(s_1+s_2-u)\phi(u)du\\
    &=h_2\mathbb{E}_{x\sim \calD}[(s_2-x)^+]+\Phi(s_2)G(s_1)+\int_{0}^{D-s_2}G(s_1-u)\phi(u+s_2)du.
\end{align*}
Taking gradient over $s_1$ and $s_2$ respectively, we know that
\begin{align}
    \nabla_{s_1}H(s_1, s_2)&=(p_1+h_1)\Phi(s_2)\left(\Phi(s_1)-\frac{p_1}{p_1+h_1}\right) +(p_1+h_1) \int_{0}^{D-s_2}\left(\Phi(s_1-u)-\frac{p_1}{p_1+h_1}\right)\phi(u+s_2)du\nonumber\\
    &=(p_1+h_1)\Phi(s_2)\Phi(s_1)-p_1+(p_1+h_1)\int_{0}^{D-s_2}\Phi(s_1-u)\phi(u+s_2)du,\nonumber\\
    \nabla_{s_2}H(s_1,s_2)&=h_2\Phi(s_2)+\phi(s_2)G(s_1)+\int_{0}^{D-s_2}G(s_1-u)d\phi(u+s_2)-G(s_1+s_2-D)\phi(D)\nonumber\\
    &=h_2\Phi(s_2)+\phi(s_2)G(s_1)+\left[G(s_1-u)\phi(u+s_2)\right]_0^{D-s_2}\nonumber\\
    &\qquad +\int_{0}^{D-s_2}\phi(u+s_2)(h_1+p_1)\left(\Phi(s_1-u)-\frac{p_1}{p_1+h_1}\right)-G(s_1+s_2-D)\phi(D)\nonumber\\
    &= h_2\Phi(s_2)-p_1(1-\Phi(s_2))+(h_1+p_1)\int_{0}^{D-s_2}\Phi(s_1-u)\phi(u+s_2)du.\label{eqn: grad-s-2}
\end{align}
Setting the two gradients to be $0$, we obtain that
\begin{align*}
    &\Phi(s_2)\Phi(s_1)+\int_{0}^{D-s_2}\Phi(s_1-u)\phi(u+s_2)du=\frac{p_1}{p_1+h_1},\\
    &0=\nabla_{s_2}H(s_1,s_2)=(h_2+p_1)\Phi(s_2)-p_1-(h_1+p_1)\Phi(s_2)\Phi(s_1)+p_1=\Phi(s_2)\left[(p_1+h_2)-(h_1+p_1)\Phi(s_1)\right].
\end{align*}
Therefore, we have $s_1^*=\Phi^{-1}\left(\frac{h_2+p_1}{h_1+p_1}\right)$ and $s_2^*$ satisfies that 
\begin{align*}
    (h_2+p_1)\Phi(s_2^*)+(p_1+h_1)\int_0^{D-s_2^*}\Phi(s_1^*-u)\phi(u+s_2^*)du=p_1.
\end{align*}

Replacing $s_1$ by $s_1^*$ in~\pref{eqn: grad-s-2} and taking gradient over $s_2$, we obtain that
\begin{align}
    \nabla^2_{s_2}H(s_1^*,s_2) &= (h_2+p_1)\phi(s_2) + (h_1+p_1)\cdot \left(\int_{0}^{D-s_2}\phi(s_2+u)\phi(s_1^*-u)du-\Phi(s_1^*)\phi(s_2)\right) \label{eqn: grad-grad-s-2}\\
    &\geq \left((h_2+p_1)-(h_1+p_1)\Phi(s_1^*)\right)\phi(s_2) = 0,\nonumber
\end{align}
showing that $H(s_1^*, s_2)$ is convex in $s_2$.

\end{proof}

The next lemma follows by the standard concentration inequality, showing that the gap between $\Phi(s_{m,1})$ and $\Phi(s_1^*)$ is bounded by $\otil(L_{m-1}^{-\nicefrac{1}{2}})$ with high probability.
\begin{lemma}\label{lem: concentration lemma s-1}
Let $d_1,\dots,d_T$ be $T$ i.i.d. samples from distribution $\calD$ which satisfies~\pref{assum:bounded}. Let the empirical density distribution constructed by $\{d_i\}_{i=1}^L$ as $\wh{\Phi}_{L}(\cdot)$ defined as~\pref{eqn: empirical density function L} and the inverse of the empirical density function $\wh{\Phi}_L^{-1}(\cdot)$ as defined in~\pref{eqn: inverse-empirical-density}.
Let $s_{L,1}=\wh{\Phi}_L^{-1}(\frac{h_2+p_1}{h_1+p_1})$. Then, with probability at least $1-\delta$, for all $L\in[T]$,
\begin{align*}
    |\Phi(s_{L,1})-\Phi(s_1^*)|\leq C_1\sqrt{\frac{\ln(TD/\delta)}{L}},
\end{align*}
where $C_1>0$ is some universal constant.
\end{lemma}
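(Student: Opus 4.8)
The plan is to reduce the claim to a one-sided deviation of the empirical CDF at a single fixed point, apply Hoeffding's inequality there, and then union bound over the epoch lengths $L\in[T]$. First I would pin down what has to be shown: by \pref{lem: coupling-optimal}, $s_1^*=\Phi^{-1}\!\big(\tfrac{h_2+p_1}{h_1+p_1}\big)$, and since \pref{assum:bounded-density} forces $\calD$ to have a density bounded below by $\philow>0$ on $[\dlow,\dup]$, the CDF $\Phi$ is continuous and strictly increasing there, so $\Phi(s_1^*)=q$ with $q\defeq\tfrac{h_2+p_1}{h_1+p_1}\in(0,1)$. Writing $s_{L,1}=\wh\Phi_L^{-1}(q)$, it thus suffices to bound $|\Phi(s_{L,1})-q|$ uniformly over $L\in[T]$.

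Next, for a fixed $L$ and deviation level $\xi\in(0,1)$, I would control the two tails separately. For the lower tail, continuity and strict monotonicity of $\Phi$ give $\{\Phi(s_{L,1})\le q-\xi\}=\{s_{L,1}\le\Phi^{-1}(q-\xi)\}$ (empty when $q-\xi<0$), and monotonicity of $\wh\Phi_L$ together with the defining property $\wh\Phi_L(s_{L,1})\ge q$ of the generalized inverse yields $\{s_{L,1}\le\Phi^{-1}(q-\xi)\}\subseteq\{\wh\Phi_L(\Phi^{-1}(q-\xi))\ge q\}$. Since $\wh\Phi_L(\Phi^{-1}(q-\xi))=\tfrac1L\sum_{i=1}^L\mathbb{I}\{d_i\le\Phi^{-1}(q-\xi)\}$ is an average of i.i.d.\ Bernoulli$(q-\xi)$ variables, Hoeffding's inequality bounds the probability it reaches $q$ by $e^{-2L\xi^2}$. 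For the upper tail, minimality in the definition of $\wh\Phi_L^{-1}$ shows that $s_{L,1}\ge\Phi^{-1}(q+\xi)$ forces $\wh\Phi_L(y)<q$ for every $y<\Phi^{-1}(q+\xi)$; evaluating at $y=\Phi^{-1}(q+\xi/2)$ (well defined whenever $q+\xi<1$, the only regime in which the event is not a.s.\ empty), whose empirical mass has mean $q+\xi/2$, Hoeffding gives probability at most $e^{-L\xi^2/2}$. Together these give $\mathbb{P}\big[|\Phi(s_{L,1})-q|\ge\xi\big]\le 2e^{-L\xi^2/2}$.

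Finally I would set $\xi_L=\sqrt{2\ln(2T/\delta)/L}$, making the last bound at most $\delta/T$, and union bound over $L=1,\dots,T$: with probability at least $1-\delta$, for all $L$, $|\Phi(s_{L,1})-\Phi(s_1^*)|\le\sqrt{2\ln(2T/\delta)/L}\le C_1\sqrt{\ln(T\dup/\delta)/L}$ for a suitable universal $C_1$. The $\ln\dup$ factor is not needed for this particular bound; it can simply be carried as slack, and it also arises naturally if one instead controls $\sup_x|\wh\Phi_L(x)-\Phi(x)|$ through the DKW inequality — continuity of $\Phi$ then immediately gives $|\Phi(s_{L,1})-q|\le\sup_x|\wh\Phi_L(x)-\Phi(x)|$ — or if one unions the pointwise Hoeffding bound over an $O(1/L)$-net of the support $[\dlow,\dup]$.

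The only thing requiring care, and the place where a sloppy argument would go wrong, is the bookkeeping around the generalized inverse $\wh\Phi_L^{-1}$: it is a right-continuous step function, so I must be precise about strict versus weak inequalities when converting an event about $s_{L,1}$ into an event about the empirical mass below a fixed threshold (this is why the upper-tail step is evaluated at $q+\xi/2$ rather than at $q+\xi$), and I must dispatch the boundary cases $q\pm\xi\notin(0,1)$, where the relevant tail event is almost surely empty. Everything else is a routine Hoeffding-plus-union-bound computation, and routing the whole argument through the DKW inequality would sidestep the generalized-inverse subtleties entirely at the cost of invoking a slightly heavier tool.
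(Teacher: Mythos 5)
Your proof is correct, but it is genuinely simpler than the paper's. The paper proves this lemma by first writing $|\Phi(s_{L,1})-\Phi(s_1^*)|\le\phiup\,|s_{L,1}-s_1^*|=\phiup\,|\wh{\Phi}_L^{-1}(q)-\Phi^{-1}(q)|$ with $q=\tfrac{h_2+p_1}{h_1+p_1}$, and then invoking \pref{lem: hoeffding}, a \emph{uniform-over-$\kappa$} quantile-concentration bound that in turn needs the grid argument of \pref{lem: inverse-close}. That heavier machinery is genuinely needed elsewhere in the paper (in the decentralized analysis the quantile $\tfrac{\omega_m}{\omega_m+h_2}$ is data-dependent, so the supremum over $\kappa$ is essential), but for the present lemma the quantile $q$ is a fixed constant, and your route --- a pointwise two-sided Hoeffding bound at $q$ followed by a union bound over $L\in[T]$ --- is exactly sufficient and dispenses with the $O(1/T)$-net and the order-statistic spacing argument entirely. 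You also correctly identify the one subtle point (the right-continuous step-function nature of $\wh{\Phi}_L^{-1}$, handled by evaluating at $q+\xi/2$ in the upper tail and dispatching the boundary cases $q\pm\xi\notin(0,1)$), and the DKW alternative you mention is another clean route. One small mismatch to flag: both your argument and the paper's require \pref{assum:bounded-density} (to make $\Phi$ continuous and strictly increasing / $\phiup$-Lipschitz), even though the lemma statement only lists \pref{assum:bounded}; this is a wording slip in the paper, not a gap in your proof. Finally, as you note, your constant carries $\ln(T/\delta)$ rather than $\ln(TD/\delta)$, which is a harmless discrepancy absorbable into $C_1$ whenever $\ln D\gtrsim 0$; the $\dup$ in the paper's bound is an artifact of the discretization in \pref{lem: hoeffding} and plays no essential role here.
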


\begin{proof}
Direct calculation shows that for all $L\in[T]$,
\begin{align*}
    |\Phi(s_{L,1})-\Phi(s_1^*)|\leq\phiup\left|s_{L,1}-s_1^*\right|\leq \phiup\left|\wh{\Phi}_{L}^{-1}\left(\frac{h_2+p_1}{h_1+p_1}\right)-\Phi^{-1}\left(\frac{h_2+p_1}{h_1+p_1}\right)\right|\leq C_0\sqrt{\frac{\ln(TD/\delta')}{L}},
\end{align*}
where the first inequality is by~\pref{assum:bounded-density}, the second inequality is by definition of ${s}_{L,1}$ and $s_1^*$, and the last inequality holds with probability $1-\delta$ by~\pref{lem: hoeffding}.
\end{proof}

The next lemma shows that with probability at least $1-\delta$, our constructed augmented loss function in~\pref{eqn: aug}:
\begin{align*}
    H_L'(s_{L,1},s_2)=H(s_{L,1},s_2)+(h_1+p_1)C_1\sqrt{\frac{\ln(TD/\delta)}{L}}\int_{0}^{s_2}\Phi(x)dx
\end{align*}
is convex in $s_2$ for all $L\in[T]$, where $s_{L,1}=\wh{\Phi}_L^{-1}(\frac{h_2+p_1}{h_1+p_1})$ and $C_1>0$ is defined in~\pref{lem: concentration lemma s-1}. This ensures that using (stochastic) online gradient descent with respect to $H_m'(s_{m,1},\cdot)$ defined in~\pref{eqn: aug-epoch-m} achieves sublinear regret.

\begin{lemma}\label{lem: cvx-property}
    With probability at least $1-\delta$, $H_L'(s_{L,1},s_2)\triangleq H(s_{L,1},s_2)+(h_1+p_1)C_1\sqrt{\frac{\ln(TD/\delta)}{L}}\int_{0}^{s_2}\Phi(x)dx$ is convex in $s_2$, for all $L\in[T]$. Consequently, with probability at least $1-\delta$, for all $m\in[M]$, $H_m'(s_{m,1},s_2)$ defined in~\pref{eqn: aug-epoch-m} is convex, where $M=\order(\log T)$ is the number of epochs.
\end{lemma}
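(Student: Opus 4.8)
The plan is to show that the one–dimensional Hessian $\nabla_{s_2}^2 H_L'(s_{L,1},s_2)$ is nonnegative on a single high–probability event, uniformly over $L\in[T]$ and over $s_2$ in the feasible range $[0,s_{\max}]$. First I would record the Hessian of $H$ in $s_2$ for a \emph{generic} first coordinate $s_1$: the computation in the proof of \pref{lem: coupling-optimal} (which is carried out there only at $s_1=s_1^*$, but works verbatim for any $s_1$ after the change of variables $v=u+s_2$) gives
\begin{align*}
\nabla_{s_2}^2 H(s_1,s_2)=\big[(h_2+p_1)-(h_1+p_1)\Phi(s_1)\big]\phi(s_2)+(h_1+p_1)\int_{0}^{D-s_2}\phi(s_1-u)\phi(u+s_2)\,du.
\end{align*}
Differentiating the augmentation term $(h_1+p_1)C_1\sqrt{\ln(TD/\delta)/L}\int_0^{s_2}\Phi(x)\,dx$ twice yields $(h_1+p_1)C_1\sqrt{\ln(TD/\delta)/L}\,\phi(s_2)$, and the integral above is nonnegative, so I obtain the lower bound
\begin{align*}
\nabla_{s_2}^2 H_L'(s_{L,1},s_2)\ \geq\ \Big[(h_2+p_1)-(h_1+p_1)\Phi(s_{L,1})+(h_1+p_1)C_1\sqrt{\ln(TD/\delta)/L}\Big]\phi(s_2).
\end{align*}

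Next I would invoke \pref{lem: concentration lemma s-1}: with probability at least $1-\delta$, simultaneously for all $L\in[T]$ we have $\Phi(s_{L,1})\leq\Phi(s_1^*)+C_1\sqrt{\ln(TD/\delta)/L}$. Plugging this in, the two $\sqrt{\ln(TD/\delta)/L}$ contributions cancel exactly — this cancellation is precisely why the augmentation coefficient was calibrated with the same constant $C_1$ — leaving
\begin{align*}
\nabla_{s_2}^2 H_L'(s_{L,1},s_2)\ \geq\ \big[(h_2+p_1)-(h_1+p_1)\Phi(s_1^*)\big]\phi(s_2)\ =\ 0,
\end{align*}
where the final equality uses $\Phi(s_1^*)=\frac{h_2+p_1}{h_1+p_1}$ from \pref{lem: coupling-optimal}. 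Since $\phi(s_2)\geq\gamma>0$ by \pref{assum:bounded-density}, the second derivative is nonnegative, hence $H_L'(s_{L,1},\cdot)$ is convex; and because the event from \pref{lem: concentration lemma s-1} is a single event valid for all $L\in[T]$, convexity holds simultaneously for every $L\in[T]$.

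Finally, the ``consequently'' part is immediate: $H_m'(s_{m,1},s_2)$ in \pref{eqn: aug-epoch-m} is exactly $H_L'(s_{L,1},s_2)$ with $L=L_{m-1}$, since $s_{m,1}=\wh{\Phi}_{m-1}^{-1}(\frac{h_2+p_1}{h_1+p_1})$ is built from the $L_{m-1}$ i.i.d.\ demands of epoch $I_{m-1}$; as the epoch lengths $L_0,L_1,\dots$ all lie in $[T]$, the uniform-in-$L$ statement specializes to all $m\in[M]$ with $M=\order(\log T)$. I do not anticipate a real obstacle here: the only points requiring care are obtaining the generic-$s_1$ Hessian correctly (in particular confirming the sign of the remaining integral term) and observing that \pref{lem: concentration lemma s-1} is already stated uniformly over $L$, so no extra union bound over epochs is needed.
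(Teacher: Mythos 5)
Your proof is correct and takes essentially the same approach as the paper: both compute the Hessian $\nabla_{s_2}^2 H_L'(s_{L,1},s_2)$, drop the nonnegative cross-density integral term, and then cancel the augmentation coefficient $C_1\sqrt{\ln(TD/\delta)/L}$ against the concentration error from \pref{lem: concentration lemma s-1}, reducing to $(h_2+p_1)-(h_1+p_1)\Phi(s_1^*)=0$.
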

\begin{proof}
According to the definition of $H_L'(s_{L,1},s_2)$, we obtain that the second-order gradient on the second parameter $s_2$ equals to
\begin{align*}
    g_{22} &=\nabla_{s_2}^2{H}_L'(s_{L,1},s_2) \\
    &=\left[(h_2+p_1)-(h_1+p_1)\Phi(s_{L,1})\right]\cdot\phi(s_2)+(h_1+p_1)\int_{s_2}^{D}\phi(s_{L,1}+s_2-u)\phi(u)du\\
    &\qquad + C_1(h_1+p_1)\sqrt{\frac{\ln(TD/\delta)}{L}}\phi(s_2)\\
    &\geq \left[(h_2+p_1)-(h_1+p_1)\left(\Phi(s_{L,1})-C_1\sqrt{\frac{\ln(TD/\delta)}{L}}\right)\right]\phi(s_2) \\
    &\geq \left[(h_2+p_1)-(h_1+p_1)\Phi(s_1^*)\right]\phi(s_2)=0,
\end{align*}
where the last inequality holds for all $L\in[T]$ with probability at least $1-\delta$ according to~\pref{lem: concentration lemma s-1}. Therefore, we know that with probability at least $1-\delta$, ${H}_L'(s_{L,1},s_2)$ is a convex function for all $L\in [T]$, meaning that $H_m'(s_{m,1},s_2)$ is also convex in $s_2$ for all $m\in[M]$ with probability at least $1-\delta$, where $M=\order(\log T)$ is the total number of epochs.
\end{proof}

In the next lemma, we show that~\pref{alg:ogd-third-party}, which applies stochastic online gradient descent with respect to $s_2$ on the augmented loss function, enjoys average-iterate convergence to the optimal solution. 

\begin{lemma}\label{lem: s-2-estimation}
Given $L$ i.i.d samples $\{d_i\}_{i=1}^L$ from the demand distribution $\calD$ with $L\geq \ln(T\dup/\delta)$. Let $s_{L,2}$ be the inventory level output by~\pref{alg:ogd-third-party} $\calA$. Then with probability at least $1-\delta$, we have $|s_{L,2}-s_2^*|\leq \order(L^{-1/4}\log^{1/4}(T\dup/\delta))$. 
\end{lemma}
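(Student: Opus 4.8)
The plan is to recognize the output $\bar s_{L,2}=\frac1L\sum_{\tau=1}^L s_{\tau,2}$ of \pref{alg:ogd-third-party} as the average iterate of a projected stochastic gradient descent on the augmented objective $H_L'(s_{L,1},\cdot)$ from \pref{eqn: aug}, turn the usual $\otil(\sqrt L)$ function-value regret into an $\otil(L^{-1/4})$ bound on $|\bar s_{L,2}-s_2^{\dagger}|$ where $s_2^{\dagger}\triangleq\argmin_{s_2\in[0,s_{\max}]}H_L'(s_{L,1},s_2)$ via strong convexity, and finish with a perturbation argument replacing $s_2^{\dagger}$ by $s_2^*$. First I would condition on the two high-probability events: from \pref{lem: concentration lemma s-1}, $|\Phi(s_{L,1})-\Phi(s_1^*)|\le C_1\sqrt{\ln(T\dup/\delta)/L}$, hence $|s_{L,1}-s_1^*|\le \order(\sqrt{\ln(T\dup/\delta)/L})$ by \pref{assum:bounded-density}; and from \pref{lem: cvx-property}, $H_L'(s_{L,1},\cdot)$ is convex on $[0,s_{\max}]$. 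On this event I would strengthen the latter to $\sigma_2$-strong convexity for a positive problem constant $\sigma_2$: in the Hessian computation of \pref{lem: cvx-property} the bracketed term is nonnegative by the choice $s_1^*=\Phi^{-1}(\tfrac{h_2+p_1}{h_1+p_1})$, the added penalty only increases curvature, and the remaining integral $\int_{s_2}^{\dup}\phi(s_{L,1}+s_2-u)\phi(u)\,du$ is bounded below by $\philow^2\min(\dup-s_2,\,s_{L,1}-\dlow)$, a positive constant for $s_2\le s_{\max}=\dup-\tfrac{h_2}{\phiup(h_2+p_1)}$. The same computation gives $\sigma_2$-strong convexity of $H(s_1^*,\cdot)$ on $[0,s_{\max}]$ (cf.\ \pref{lem: coupling-optimal}); and $s_2^*\le s_{\max}$ follows from the first-order condition for $s_2^*$ together with \pref{assum:bounded-density}, so the projection in \pref{alg:ogd-third-party} removes no relevant minimizer.

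Next comes the stochastic OGD analysis. By \pref{lem: stationary-ot-dt} applied to the augmented loss, $m_t$ is — up to a bias of order $\sqrt{\ln(T\dup/\delta)/L}$ caused by using $\wh\Phi_L$ in place of $\Phi$, which the uniform deviation $\sup_x|\wh\Phi_L(x)-\Phi(x)|=\order(\sqrt{\ln(1/\delta)/L})$ (DKW) controls — a conditionally unbiased estimate of $\nabla_{s_2}H_L'(s_{L,1},s_{t,2})$, and $\|m_t\|\le\order(1)$ since $\sqrt{\ln(T\dup/\delta)/L}\le1$ under the hypothesis $L\ge\ln(T\dup/\delta)$. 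Combining (i) the deterministic OGD inequality $\sum_{t=1}^L\langle m_t,\,s_{t,2}-s_2^{\dagger}\rangle\le \frac{s_{\max}^2}{2\eta}+\frac{\eta}{2}\sum_t\|m_t\|^2=\order(\sqrt L)$ with $\eta=\Theta(1/\sqrt L)$, (ii) convexity of $H_L'(s_{L,1},\cdot)$, and (iii) an Azuma--Hoeffding bound on the martingale $\sum_t\langle \nabla_{s_2}H_L'(s_{L,1},s_{t,2})-\E[m_t\mid\calF_{t-1}],\,s_{t,2}-s_2^{\dagger}\rangle$ together with the DKW bias term, then dividing by $L$ and using Jensen, gives with probability $\ge1-\delta$ that $H_L'(s_{L,1},\bar s_{L,2})-H_L'(s_{L,1},s_2^{\dagger})\le\order(\sqrt{\ln(T\dup/\delta)/L})$; $\sigma_2$-strong convexity then yields $|\bar s_{L,2}-s_2^{\dagger}|\le\order(L^{-1/4}\ln^{1/4}(T\dup/\delta))$.

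Finally I would bound $|s_2^{\dagger}-s_2^*|$. From $\nabla_{s_2}H_L'(s_{L,1},s_2^{\dagger})=0$ we get $\nabla_{s_2}H(s_1^*,s_2^{\dagger})=\big(\nabla_{s_2}H(s_1^*,s_2^{\dagger})-\nabla_{s_2}H(s_{L,1},s_2^{\dagger})\big)-(h_1+p_1)C_1\sqrt{\ln(T\dup/\delta)/L}\,\Phi(s_2^{\dagger})$; boundedness of the mixed second derivative of $H$ (all densities are bounded) makes the first difference $\order(|s_{L,1}-s_1^*|)=\order(\sqrt{\ln(T\dup/\delta)/L})$, hence $\|\nabla_{s_2}H(s_1^*,s_2^{\dagger})\|=\order(\sqrt{\ln(T\dup/\delta)/L})$, and since $\nabla_{s_2}H(s_1^*,s_2^*)=0$ and $H(s_1^*,\cdot)$ is $\sigma_2$-strongly convex, $|s_2^{\dagger}-s_2^*|\le\order(\sqrt{\ln(T\dup/\delta)/L})$. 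The triangle inequality then gives $|\bar s_{L,2}-s_2^*|\le\order(L^{-1/4}\ln^{1/4}(T\dup/\delta))$, the first term dominating for $L\ge\ln(T\dup/\delta)$.

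I expect step (iii) to be the main obstacle: one needs a high-probability (not merely in-expectation) stochastic-OGD guarantee while the gradient estimates carry a one-round-delayed dependence through $d_{t-1}$ inside $\mathring s_{t,1}$, and while $\wh\Phi_L$ reuses the very samples that drive the updates. The delay can be absorbed by shifting the filtration by one step (or splitting the horizon into two interleaved subsequences), and the reuse is handled by the uniform DKW deviation noted above; the real bookkeeping is checking that both extra contributions stay at the $\otil(\sqrt L)$ scale, so that the final $L^{-1/4}$ rate is unaffected.
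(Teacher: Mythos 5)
Your proposal is correct in outline but takes a genuinely different route from the paper's. The paper's proof never introduces the intermediate point $s_2^{\dagger}=\argmin_{s_2}H_L'(s_{L,1},s_2)$ and never needs strong convexity of the \emph{augmented} objective. Instead it keeps everything at the level of function-value regret: after the stochastic OGD bound $\mathbb{E}\big[\sum_{\tau\le L} H_L'(s_{L,1},s_{\tau,2})-H_L'(s_{L,1},s_2)\big]\le\order(\sqrt{L})$, it (i) replaces $s_{L,1}$ by $s_1^*$ using the $\order(1)$-Lipschitzness of $H_L'$ in $s_1$ together with $|s_{L,1}-s_1^*|\le\order(\sqrt{\log(T\dup/\delta)/L})$, (ii) removes the augmentation term at cost $\order(\sqrt{L\log(T\dup/\delta)})$, yielding a regret bound directly for the \emph{true} loss $H(s_1^*,\cdot)$ against the \emph{true} optimum $s_2^*$, and (iii) converts this expected regret to a high-probability bound on $|\bar s_{L,2}-s_2^*|$ via strong convexity of $H(s_1^*,\cdot)$ plus the Azuma argument packaged in \pref{lem: high-prob-concentration-lemma}. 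Your plan instead runs the strong-convexity-plus-Azuma machinery once on the augmented loss $H_L'(s_{L,1},\cdot)$ against its own minimizer $s_2^{\dagger}$, and then closes with a clean first-order perturbation bound $|s_2^{\dagger}-s_2^*|\le\order(\sqrt{\log(T\dup/\delta)/L})$. The trade-off: you avoid the two explicit regret transfers (Lipschitz in $s_1$, augmentation removal), but in exchange you must prove strong convexity of $H_L'(s_{L,1},\cdot)$ on $[0,s_{\max}]$, which the paper only needs for $H(s_1^*,\cdot)$; your lower bound $\int_{s_2}^{\dup}\phi(s_{L,1}+s_2-u)\phi(u)\,du\ge\philow^2\min(\dup-s_2,\,s_{L,1}-\dlow)$ does the job, though you should remark that $s_{L,1}>\dlow$ holds on the conditioning event since $\frac{h_2+p_1}{h_1+p_1}>0$.

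You are also more explicit than the paper about two measurability issues: the one-step delay through $d_{t-1}$ inside $\mathring s_{t,1}$, and the fact that $\wh\Phi_L$ and $s_{L,1}$ reuse all $L$ samples, so $m_t$ is not a conditionally unbiased estimate of $\nabla_{s_2}H_L'(s_{L,1},s_{t,2})$ given the natural OGD filtration. The paper invokes ``unbiasedness'' and cites the standard stochastic OGD bound without flagging this. Your proposed fixes (split the horizon into two interleaved halves to handle the delay; absorb the $\wh\Phi_L$-reuse into a uniform DKW deviation of order $\sqrt{\log(1/\delta)/L}$, which is already at the scale of the main regret term) are the right ones, and it is worth noting that the paper implicitly needs essentially the same repair to make its own stochastic OGD step airtight. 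Both approaches arrive at the same $\order\big(L^{-1/4}\log^{1/4}(T\dup/\delta)\big)$ rate.
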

\begin{proof}
According to~\pref{lem: concentration lemma s-1}, we know that with probability at least $1-\delta$, for any $L\in[T]$, 
\begin{align}\label{eqn: s-L-1-close}
    |\Phi(s_{L,1})-\Phi(s_1^*)|\leq\phiup\left|s_{L,1}-s_1^*\right|\leq \phiup\left|\wh{\Phi}_L^{-1}\left(\frac{h_2+p_1}{h_1+p_1}\right)-\Phi^{-1}\left(\frac{h_2+p_1}{h_1+p_1}\right)\right|\leq C_1\sqrt{\frac{\log(TD/\delta)}{L}},
\end{align}
Direct calculation shows that the gradient of $H_L'(s_{L,1},s_2)$ with respect to $s_2$ is as follows:
\begin{align}\label{eqn: grad-aug}
    \nabla_{s_2} {H}_{L}'(s_{L,1},s_2) = \nabla_{s_2}H(s_{L,1},s_2)+(h_1+p_1)C_1\sqrt{\frac{\log(TD/\delta)}{L}}\Phi(s_2).
\end{align}
As shown in~\pref{lem: stationary-ot-dt}, we know that
\begin{align*}
    H(s_{L,1},s_2) = \mathbb{E}\left[h_1(\wh{s}_{L,1}-d_t)^++p_1(\wh{s}_{L,1}-d_t)^-+h_2(s_2-d_{t})^+\right],
\end{align*}
where $\wh{s}_{L,1}=s_{L,1}$ if $s_2\geq d_{t-1}$ and $\wh{s}_{L,1}={s}_{L,1}+s_2-d_{t-1}$ otherwise. Therefore, an unbiased estimator of the first term of the right hand side of ~\pref{eqn: grad-aug} is as follows:
\begin{align*}
    \mathbb{I}\{s_2\leq d_{t-1}\}\left[(h_1+p_1)\mathbb{I}\{\wh{s}_{L,1}\geq d_t\}-p_1\right]+h_2\mathbb{I}\{s_{2}\geq d_{t}\}.
\end{align*}
For the second term, as $\wh{\Phi}_L(\cdot)$ is an unbiased estimator of $\Phi(\cdot)$, we know that
\begin{align*}
    (h_1+p_1)C_1\sqrt{\frac{\log(TD/\delta)}{L}}\Phi(s_2) = (h_1+p_1)C_1\sqrt{\frac{\log(TD/\delta)}{L}}\mathbb{E}\left[\wh{\Phi}_L(s_2)\right].
\end{align*}
Therefore, we can indeed construct an unbiased estimator of the true gradient $\nabla_{s_2}{H}_{L}'(\wh{s}_{L,1},s_2)$ and run stochastic online gradient descent. Specifically, as shown in~\pref{alg:ogd-third-party}, let
\begin{align*}
    m_t=\mathbb{I}\{s_{t,2}\leq d_{t-1}\}\left[(h_1+p_1)\mathbb{I}\{\wh{s}_{L,1}\geq d_t\}-p_1\right]+h_2\mathbb{I}\{s_{t,2}\geq d_{t}\}+C_1(h_1+p_1)\sqrt{\frac{\log(TD/\delta)}{L}}\wh{\Phi}_L(s_{t,2}).
\end{align*}
Then based on the above calculation, we know that $\mathbb{E}[m_t]=\nabla_{s_2}H_L'(s_{L,1},s_{t,2})$.
Moreover, as $L\geq \log(TD/\delta)$, we know that $|m_t|\leq \max\{h_1,p_1\}+C_1(h_1+p_1)=\order(1)$. According to classic online gradient descent analysis (e.g. Theorem 3.1.1 in~\citep{hazan2016introductionOCO}), \pref{alg:ogd-third-party} guarantees that for any $s_2\leq D-\frac{h_2}{\phiup (h_2+p_1)}$,
\begin{align}\label{eqn:ogd-regret-1}
    \mathbb{E}\left[\sum_{\tau=1}^L {H}_{L}'(s_{L,1},s_{\tau,2})-\sum_{\tau=1}^L{H}_L'(s_{L,1},s_2)\right]\leq \order(\sqrt{L}),
\end{align}
where we omit all the problem-dependent constants here. 

Next, we show that $s_2^*\leq D-\frac{h_2}{\phiup(h_2+p_1)}$. From the optimality condition of $s_1^*$ and $s_2^*$ and~\pref{eqn: grad-s-2}, we know that
\begin{align*}
    s_1^* &= \Phi^{-1}\left(\frac{h_2+p_1}{h_1+p_1}\right),\\
    0=\nabla_{s_2} H(s_1^*,s_2^*)&= (h_2+p_1)\Phi(s_2^*)-p_1+(p_1+h_1)\int_0^{D-s_2^*}\Phi(s_1^*-u)\phi(u+s_2^*)du \\
    &\geq (h_2+p_1)\Phi(s_2^*)-p_1.
\end{align*}
Therefore, it holds that
\begin{align}\label{eqn: s-2-star-bound}
    s_2^*\leq \Phi^{-1}\left(\frac{p_1}{h_2+p_1}\right).
\end{align}
Furthermore, as $\phi(x)\in [\philow, \phiup]$ for all $x\in[\dlow,\dup]$, 
\begin{align*}
    \phiup (D-s_2^*)\geq \Phi(D)-\Phi(s_2^*)\geq 1-\frac{p_1}{h_2+p_1} = \frac{h_2}{h_2+p_1} \Longrightarrow\; s_2^*\leq D-\frac{h_2}{\phiup(h_2+p_1)}.
\end{align*}

Therefore, according to the $\max\{h_1,p_1\}$-Lipschitzness of ${H}_L'(s_1,s_2)$ in $s_1$, we have
\begin{align*}
    &\mathbb{E}\left[\sum_{\tau=1}^L{H}_L'(s_1^*,s_{\tau,2}) - \sum_{\tau=1}^L{H}_L'(s_1^*,s_{2})\right]\\
    &\leq \mathbb{E}\left[\sum_{\tau=1}^L{H}_L'(s_{L,1},s_{\tau,2}) - \sum_{\tau=1}^L{H}_L'(s_{L,1},s_{2})\right] +\order\left(\sqrt{L\log(T\dup/\delta)}\right)\tag{Lipschitzness of $H_L'(s_1,s_2)$ and~\pref{eqn: s-L-1-close}}\\
    &\leq \order(\sqrt{L\log(T\dup/\delta)}) \tag{by~\pref{eqn:ogd-regret-1}}.
\end{align*}
Choosing $s_2=s_2^*$, $\delta=\frac{1}{T^2}$ and using the definition of ${H}_L'(s_1,s_2)$, we can obtain that
\begin{align}
    &\mathbb{E}\left[\sum_{\tau=1}^LH(s_1^*, s_{\tau,2})-\sum_{t=1}^LH(s_1^*,s_2^*)\right] \nonumber\\
    &\leq \mathbb{E}\left[\sum_{\tau=1}^L{H}_L'(s_1^*,s_{\tau,2})-\sum_{\tau=1}^L{H}_L'(s_1^*,s_2^*)\right] + 2C_1L(h_1+p_1)\mu\cdot\sqrt{\frac{\log(TD/\delta)}{L}} +\order(1)\nonumber \\
    &\leq \mathbb{E}\left[\sum_{\tau=1}^L{H}_L'(s_1^*,s_{\tau,2})-\sum_{\tau=1}^L{H}_L'(s_1^*,s_2^*)\right] + 2C_1(h_1+p_1)\mu\sqrt{L\log(TD/\delta)} +\order(1)\nonumber\\
    &\leq \order(\sqrt{L\log T}) \label{eqn: reg_central},
\end{align}
where $\mu=\mathbb{E}_{x\sim \calD}[x]\leq \dup$ and $\order(\cdot)$ hides all problem-dependent constants.

Moreover, note that $H(s_1^*,s_2)$ is $\sigma_2''$-strongly convex in $s_2\leq D-\frac{h_2}{\phiup(h_2+p_1)}$ as according to~\pref{eqn: grad-grad-s-2},
\begin{align*}
    \nabla^2H(s_1^*,s_2)&=\left[(h_2+p_1)-(h_1+p_1)\Phi(s_1^*)\right]\cdot\phi(s_2)+(h_1+p_1)\int_{s_2}^D\phi(s_1^*+s_2-u)\phi(u)du\\
    &\geq(h_1+p_1)(D-s_2)\philow^2 \\
    &\geq\frac{\philow^2(h_1+p_1)}{\phiup(h_2+p_1)}\triangleq \sigma_2''.
\end{align*}
Therefore, according to~\pref{lem: high-prob-concentration-lemma}, we know that with probability at least $1-\delta$,
\begin{align}\label{eqn: s-2-convergence}
    |\bar{s}_{L,2}-s_2^*|\leq \order\left(\sqrt{\frac{\sqrt{L\log(T\dup/\delta)}}{L}}+\sqrt{\frac{\log(1/\delta)}{L}}\right) = \order\left(L^{-\frac{1}{4}}\log^{\frac{1}{4}}(T\dup/\delta)\right),
\end{align}
which finishes the proof.

\end{proof}

Now we are ready to prove our main result~\pref{thm: central-non-decoupling} in the central planner setting. For completeness, we restate the theorem as follows.

\begin{theorem}[Restatement of~\pref{thm: central-non-decoupling}]\label{thm: central-non-decoupling-app}
\pref{alg:central-lazy-three} guarantees that with probability at least $1-2\delta$, the strategy converges to the optimal base-stock policy with the following rate:
\begin{align*}
    &|s_{M,1}-s_1^*|\leq \order\left(\sqrt{\log(T/\delta)/T}\right), \\
    &|s_{M,2}-s_2^*|\leq \order\left(T^{-1/4}\log^{1/4}(T/\delta)\right),
\end{align*}
with $M=\order(\log T)$ the total number of epochs. Picking $\delta=1/T^2$, \pref{alg:central-lazy-three} also guarantees that $\mathbb{E}[\Reg_{T}]\leq \otil(\sqrt{T})$.
\end{theorem}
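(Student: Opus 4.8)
The plan is to reduce everything to the two structural facts already in hand and then do bookkeeping over the $\order(\log T)$ epochs. Fact one: with low switching the realized per-round loss $\wt H_t$ equals the stochastic surrogate $\wh H_t(s_{m,1},s_{m,2})$ on all but $t_1=\Theta(1)$ rounds of each epoch (\pref{lem: real-demand-coupling}). Fact two: $\mathbb{E}[\wh H_t(s_1,s_2)]=H(s_1,s_2)$ (\pref{lem: stationary-ot-dt}). Since epoch lengths double and $L_1=1$, the number of epochs is $M=\order(\log T)$ and the last full epoch satisfies $L_{M-1}=\Theta(T)$.

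For the convergence claims I would argue directly on the final epoch. The level $s_{M,1}=\wh\Phi_{M-1}^{-1}(\tfrac{h_2+p_1}{h_1+p_1})$ is built from $L_{M-1}=\Theta(T)$ i.i.d.\ samples, so \pref{lem: concentration lemma s-1} together with \pref{assum:bounded-density} (which gives $\philow|s_{M,1}-s_1^*|\le|\Phi(s_{M,1})-\Phi(s_1^*)|$) yields $|s_{M,1}-s_1^*|=\order(\sqrt{\log(T/\delta)/T})$ on an event of probability at least $1-\delta$; and $s_{M,2}$ is the output of \pref{alg:ogd-third-party} run on those same $L_{M-1}$ samples, so \pref{lem: s-2-estimation} gives $|s_{M,2}-s_2^*|=\order(T^{-1/4}\log^{1/4}(T/\delta))$ on another event of probability at least $1-\delta$. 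A union bound gives the $1-2\delta$ guarantee.

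For the regret I would write $\Reg_T=\sum_m\sum_{t\in I_m}(\wt H_t-H(s_1^*,s_2^*))$ and split into the $\Theta(1)$ ``bad'' rounds per epoch and the rest. Since all desired inventory levels and demands are bounded, each $\wt H_t$ and $H(s_1^*,s_2^*)$ is $\order(1)$, so the bad rounds contribute $\order(\log T)$ over all $\order(\log T)$ epochs. On a good round $t\in I_m$ we have $\wt H_t=\wh H_t(s_{m,1},s_{m,2})$, and since $(s_{m,1},s_{m,2})$ depends only on epoch-$(m-1)$ data while $d_{t-1},d_t$ lie inside epoch $m$, \pref{lem: stationary-ot-dt} gives $\mathbb{E}[\wt H_t]=\mathbb{E}[H(s_{m,1},s_{m,2})]$. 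Hence $\mathbb{E}[\Reg_T]\le\order(\log T)+\sum_m L_m\,\mathbb{E}[H(s_{m,1},s_{m,2})-H(s_1^*,s_2^*)]$, and I decompose $H(s_{m,1},s_{m,2})-H(s_1^*,s_2^*)=(H(s_{m,1},s_{m,2})-H(s_1^*,s_{m,2}))+(H(s_1^*,s_{m,2})-H(s_1^*,s_2^*))$. The first bracket is $\order(|s_{m,1}-s_1^*|)$ by Lipschitzness of $H$ in $s_1$, with $\mathbb{E}|s_{m,1}-s_1^*|=\otil(1/\sqrt{L_{m-1}})$ from \pref{lem: concentration lemma s-1} (taking $\delta=1/T^2$, the negligible-probability complement contributes $\order(1/T^2)$). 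The second bracket is where the real obstacle lies: bounding it by Lipschitzness times the distance bound $|s_{m,2}-s_2^*|=\otil(L_{m-1}^{-1/4})$ only gives $\sum_m L_m\,\otil(L_{m-1}^{-1/4})=\otil(T^{3/4})$, which is too weak. Instead I use that $s_{m,2}$ is the \emph{average} of the stochastic OGD iterates on epoch $m-1$, so by convexity of $H(s_1^*,\cdot)$ (\pref{lem: coupling-optimal}) and Jensen's inequality, $\mathbb{E}[H(s_1^*,s_{m,2})-H(s_1^*,s_2^*)]\le\frac{1}{L_{m-1}}\mathbb{E}\bigl[\sum_{\tau=1}^{L_{m-1}}H(s_1^*,s_{\tau,2})-L_{m-1}H(s_1^*,s_2^*)\bigr]=\otil(1/\sqrt{L_{m-1}})$, invoking \pref{eqn: reg_central} from the proof of \pref{lem: s-2-estimation} (equivalently, combine the $L_{m-1}^{-1/4}$ distance bound with smoothness of $H(s_1^*,\cdot)$, i.e.\ boundedness of \pref{eqn: grad-grad-s-2}, to upgrade it to an $L_{m-1}^{-1/2}$ function-gap bound). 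Plugging both brackets in and using $L_m=2L_{m-1}$ gives $\mathbb{E}[\Reg_T]\le\order(\log T)+\otil(1)\sum_{m=1}^M\sqrt{L_m}=\otil(\sqrt{L_M})=\otil(\sqrt T)$, the geometric sum being dominated by its last term with $L_M=\order(T)$.

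In summary, the only nontrivial step is controlling Agent 2's contribution to the overall regret at rate $\sqrt T$ rather than $T^{3/4}$, which forces the use of a function-value (averaged-regret, or strong-convexity-plus-smoothness) argument in place of the naive Lipschitz-times-iterate-distance bound; the rest is summing the already-proven per-epoch estimates against a geometric epoch schedule.
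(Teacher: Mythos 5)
Your proposal is correct and essentially reproduces the paper's argument: the convergence rates follow from \pref{lem: concentration lemma s-1} and \pref{lem: s-2-estimation} applied to the final epoch, and the regret bound uses the same decomposition — \pref{lem: real-demand-coupling} for the $O(\log T)$ misaligned rounds, Lipschitzness of $H(\cdot,s_2)$ for the $s_1$ part, and the Jensen/averaged-regret bound from \pref{eqn: reg_central} for the $s_2$ part — summed against the geometric epoch schedule. You also correctly isolate the only delicate step (that the naive Lipschitz-in-$s_2$ bound gives only $\otil(T^{3/4})$), and your parenthetical smoothness alternative (boundedness of \pref{eqn: grad-grad-s-2} together with vanishing gradient at $s_2^*$ upgrading the $L_{m-1}^{-1/4}$ iterate bound to an $L_{m-1}^{-1/2}$ function-gap bound) is a valid route the paper does not spell out.
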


\begin{proof}
We first prove the convergence of $s_{M,1}$ and $s_{M,2}$. According to~\pref{eqn: s-L-1-close} and~\pref{lem: s-2-estimation}, we know that with probability at least $1-2\delta$, for all $m\in[M]$,
\begin{align}
    &\left|s_{m,1}-s_1^*\right|\leq \frac{C_1}{\phiup}\sqrt{\frac{\log(TD/\delta)}{L_{m-1}}}=\order\left(\sqrt{\frac{\log(T/\delta)}{2^m}}\right),\label{eqn: s-m-1-converge}\\
    &\left|s_{m,2}-s_2^*\right|\leq \order\left(L_{m-1}^{-\frac{1}{4}}\log^{\frac{1}{4}}(T/\delta)\right)=\order\left(2^{-\frac{m}{4}}\log^{\frac{1}{4}}(T/\delta)\right).\label{eqn: s-m-2-converge}
\end{align}
Picking $m=M$ and noticing the fact that $2^m=\Theta(T)$ prove the convergence of $s_{M,1}$ and $s_{M,2}$.

According to~\pref{eqn: reg_central}, picking $\delta=\frac{1}{T^2}$, we obtain that
\begin{align*}
    &L_m\cdot \mathbb{E}\left[H(s_1^*,s_{m,2})-H(s_1^*,s_2^*)\right]\\
    &=\mathbb{E}\left[\sum_{t\in I_m}H(s_1^*,s_{m,2})-\sum_{t\in I_m}H(s_1^*,s_2^*)\right]\leq \order\left(\sqrt{L_m\log T}\right).
\end{align*}
Furthermore, according to the $\max\{h_1,p_1\}$-Lipschitzness of $H(\cdot,s_2)$ for any $s_2$ and~\pref{eqn: s-m-1-converge}, we know that
\begin{align}
    &L_m\cdot\mathbb{E}\left[H(s_{m,1},s_{m,2})-H(s_1^*,s_2^*)\right] \nonumber\\
    &\leq L_m\cdot\mathbb{E}\left[H(s_{1}^*,s_{m,2})-H(s_1^*,s_2^*)\right] + L_m\order\left(|s_{m,1}-s_1^*|\right)\nonumber\\
    &\leq \order(\sqrt{L_m\log T}).\label{eqn: aux-cou-1}
\end{align}
To further show that the expected regret is also well-bounded, as proven in~\pref{lem: real-demand-coupling}, within each epoch, there is only constant number of rounds such that $\wt{H}_t\neq \wh{H}_t(s_{m,1},s_{m,2})$, $t\in I_m$. Therefore, picking $\delta = 1/T^2$, we have,
\begin{align*}
    \mathbb{E}\left[\Reg_{T}\right] &= \mathbb{E}\left[\sum_{t=1}^T\wt{H}_{t}-\sum_{t=1}^T{H}(s_1^*,s_2^*)\right]\\
    &=\mathbb{E}\left[\sum_{m=1}^M\sum_{t\in I_m}\left(\wt{H}_{t}-H(s_{m,1},s_{m,2})\right)\right]+\mathbb{E}\left[\sum_{m=1}^M\left(\sum_{t\in I_m}H(s_{m,1},s_{m,2})-\sum_{t\in I_m}H(s_1^*,s_2^*)\right)\right]\\
    &=\mathbb{E}\left[\sum_{m=1}^M\sum_{t\in I_m}\left(\wt{H}_{t}-\wh{H}_t(s_{m,1},s_{m,2})\right)\right]+\mathbb{E}\left[\sum_{m=1}^M\left(\sum_{t\in I_m}H(s_{m,1},s_{m,2})-\sum_{t\in I_m}H(s_1^*,s_2^*)\right)\right]\\
    &\leq\order(\log T)+\mathbb{E}\left[\sum_{m=1}^M\order\left(\sqrt{L_m\log(T\dup/\delta)}\right)\right]\tag{\pref{eqn: aux-cou-1}}\\
    &\leq \order(\sqrt{T\log T}),
\end{align*}
where the last inequality is because of the exponential length scheduling of the epochs.
\end{proof}
\section{Omitted proofs in~\pref{sec: decentralized-non-decoupling-model}}\label{app: decentralized-non-decoupling-model}

\subsection{ONS and Lazy ONS algorithm}\label{app: algorithm ons}
We show full pseudo code of the classic ONS algorithm in~\pref{alg:ons} and our proposed lazy ONS algorithm in~\pref{alg:ons-lazy}.

\setcounter{AlgoLine}{0}
\begin{algorithm}[t]
   \caption{Online Newton Step}
   \label{alg:ons}
    \textbf{Input:} learning rate $\eta>0$, perturbation $\epsilon>0$.
    
   \textbf{Initialize:} $x_1=x_0$ arbitrarily.
   
   \For{$t=1$ to $T$}{
        Choose action $x_t\in \calX$ and observe $g_t=\nabla f_t(x_t)$.
        
        Update $x_{t+1}=\Pi_{\calX}^{M_t}(x_t-\eta M_t^{-1}g_t)$, where $M_t=\sum_{s=1}^tg_sg_s^\top +\epsilon I$.
    }
\end{algorithm}

\setcounter{AlgoLine}{0}
\begin{algorithm}[t]   \caption{Online Newton Step with lazy update}
   \label{alg:ons-lazy}
    \textbf{Input:} learning rate $\eta$, perturbation $\epsilon>0$, total horizon $T$.
    
   \textbf{Initialize:} $\wh{x}_1=x_1=x_0$ arbitrarily, $k=0$.
   
   \For{$t=1$ to $T$}{
        
        \nl \If {$t=2^k$}{
         \nl    $k\leftarrow k+1$
        
        \nl $\wh{x}_k=\frac{1}{t}\sum_{s=1}^tx_t\in\calX$.
        }
        \nl Choose action $w_t=\wh{x}_k\in \calX$ and observe $f_t$. 

        \nl Set $g_t=\nabla f_t(x_t)$
        
        \nl Update $x_{t+1}=\Pi_{\calX}^{M_t}(x_t-\eta M_t^{-1}g_t)$, where $M_t=\sum_{s=1}^tg_sg_s^\top +\epsilon I$. \label{line: ONS-per-step-update}
    }
\end{algorithm}

\subsection{$\wh{H}_{t,2}^{\contract,\omega}(x)$ satisfies \pref{prot:B-B}}\label{app: decouple-central-BB}
\begin{lemma}\label{lem: decouple-central-BB}
Suppose that demand distribution $\calD$ satisfies and~\pref{assum:bounded-density}. The stochastic function $\wh{H}_{t,2}^{\contract,\omega}$ defined in~\pref{eqn: loss-decouple-agent-2} satisfies~\pref{prot:B-B} with $B=\frac{\max\{\omega^2,h_2^2\}}{\philow(h_2+\omega)}$ for all $t\in [T]$.
\end{lemma}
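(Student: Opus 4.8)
Since the decision variable is one-dimensional, \pref{prot:B-B} applied to $\calF$ equal to the law of $\wh{H}_{t,2}^{\contract,\omega}$ reduces to the scalar inequality $(x-x^*)^2\,\mathbb{E}[(\nabla \wh{H}_{t,2}^{\contract,\omega}(x))^2]\le B\,(x-x^*)\,\mathbb{E}[\nabla \wh{H}_{t,2}^{\contract,\omega}(x)]$ for all $x\in\calX$, where $x^*=\argmin_{x\in\calX}\mathbb{E}[\wh{H}_{t,2}^{\contract,\omega}(x)]$. The plan is to evaluate both expectations in closed form, upper bound the second-moment factor by a constant, and lower bound the first-moment factor quadratically in $(x-x^*)$ using the density lower bound supplied by \pref{assum:bounded-density}.

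First I would write down the subgradient of the piecewise-linear convex function $\wh{H}_{t,2}^{\contract,\omega}(x)=h_2(x-d_t)^++\omega(x-d_t)^-$, namely $\nabla\wh{H}_{t,2}^{\contract,\omega}(x)=h_2\mathbb{I}\{x\ge d_t\}-\omega\mathbb{I}\{x< d_t\}$, where the choice at $x=d_t$ is immaterial since $\calD$ admits a density. Taking expectations over $d_t\sim\calD$ gives $\mathbb{E}[\nabla\wh{H}_{t,2}^{\contract,\omega}(x)]=(h_2+\omega)\Phi(x)-\omega=(h_2+\omega)\big(\Phi(x)-\tfrac{\omega}{h_2+\omega}\big)$; setting this to zero identifies $x^*=\Phi^{-1}(\tfrac{\omega}{h_2+\omega})$, which lies in the support $[\dlow,\dup]$ since $\tfrac{\omega}{h_2+\omega}\in[0,1)$, and yields $\mathbb{E}[\nabla\wh{H}_{t,2}^{\contract,\omega}(x)]=(h_2+\omega)(\Phi(x)-\Phi(x^*))$. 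Since the squared subgradient equals $h_2^2$ on $\{x\ge d_t\}$ and $\omega^2$ on $\{x<d_t\}$, I also get $\mathbb{E}[(\nabla\wh{H}_{t,2}^{\contract,\omega}(x))^2]=h_2^2\Phi(x)+\omega^2(1-\Phi(x))$, which is a convex combination of $h_2^2$ and $\omega^2$ and hence at most $\max\{h_2^2,\omega^2\}$.

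Putting the pieces together, the left-hand side of the target inequality is at most $\max\{h_2^2,\omega^2\}\,(x-x^*)^2$. For the right-hand side, monotonicity of $\Phi$ makes $(x-x^*)$ and $(\Phi(x)-\Phi(x^*))$ share a sign, and \pref{assum:bounded-density} (density at least $\philow$ on $[\dlow,\dup]$) gives $|\Phi(x)-\Phi(x^*)|=\big|\int_{x^*}^{x}\phi(u)\,du\big|\ge\philow\,|x-x^*|$, so $(x-x^*)\,\mathbb{E}[\nabla\wh{H}_{t,2}^{\contract,\omega}(x)]=(h_2+\omega)\,|x-x^*|\,|\Phi(x)-\Phi(x^*)|\ge(h_2+\omega)\,\philow\,(x-x^*)^2$. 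Hence any $B$ with $B\,(h_2+\omega)\,\philow\ge\max\{h_2^2,\omega^2\}$ makes the Bernstein inequality hold, and in particular the choice $B=\frac{\max\{\omega^2,h_2^2\}}{\philow(h_2+\omega)}$ does, which is the claimed constant.

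The computation is essentially mechanical; the only point requiring care is the mean-value lower bound $|\Phi(x)-\Phi(x^*)|\ge\philow|x-x^*|$, which is valid only when $x$ stays within the support $[\dlow,\dup]$ where \pref{assum:bounded-density} forces $\Phi$ to be strictly increasing, so I would make explicit that Agent 2's decision set $\calX$ (together with $x^*$) lies inside $[\dlow,\dup]$, or restrict $\calX$ accordingly. The corresponding statement for discrete demand, where the subgradient at the atoms of $\calD$ must be chosen carefully to keep the same argument working, is handled separately in \pref{lem: decouple-central-BB-discrete}.
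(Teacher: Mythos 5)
Your proof is correct and takes essentially the same route as the paper: both compute $\mathbb{E}[\nabla \wh{H}_{t,2}^{\contract,\omega}(x)]=(h_2+\omega)(\Phi(x)-\Phi(x^*))$ and $\mathbb{E}[(\nabla\wh{H}_{t,2}^{\contract,\omega}(x))^2]=\omega^2+\Phi(x)(h_2^2-\omega^2)$ and then invoke the density lower bound $\philow$; the paper splits into the cases $x\ge x^*$ and $x\le x^*$ and bounds the resulting ratio, whereas you bound numerator and denominator separately, which is a purely cosmetic difference. Your caveat that $x$ and $x^*$ must lie inside the support $[\dlow,\dup]$ for $|\Phi(x)-\Phi(x^*)|\ge\philow|x-x^*|$ to hold is a legitimate point that the paper leaves implicit.
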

\begin{proof}
Direct calculation shows that $$H_2^{\contract,\omega}(x)\triangleq\mathbb{E}\left[\wh{H}_{t,2}^{\contract,\omega}(x)\right]=(h_2+\omega)\mu+(h_2+\omega)\int_{0}^x\left(\Phi(u)-\frac{\omega}{h_2+\omega}\right)du,$$
    where $\mu=\mathbb{E}_{d'\sim \calD}[d']$. Also it is direct to see that the minimizer of $H_2^{\contract,\omega}(x)$ is $x^*=\Phi^{-1}\left(\frac{\omega}{\omega+h_2}\right)$. Taking the gradient of $\wh{H}_{t,2}^{\contract,\omega}(x)$, we have:
    \begin{align*}
        \nabla \wh{H}_{t,2}^{\contract,\omega}(x)&=(h_2+\omega)\mathbb{I}\{x\geq d_t\}-\omega,\\
        \nabla \wh{H}_{t,2}^{\contract,\omega}(x)\cdot\nabla\wh{H}_{t,2}^{\contract,\omega}(x) &= (h_2+\omega)^2\mathbb{I}\{x\geq d_t\}-2\omega(h_2+\omega)\mathbb{I}\{x\geq d_t\}+\omega^2.
    \end{align*}
    Taking expectation of the above two equations, we have
    \begin{align*}
        \mathbb{E}\left[\nabla \wh{H}_{t,2}^{\contract,\omega}(x)\right]&=(h_2+\omega)\Phi(x)-\omega,\\
        \mathbb{E}\left[\nabla \wh{H}_{t,2}^{\contract,\omega}(x)\cdot \nabla \wh{H}_{t,2}^{\contract,\omega}(x)\right] &= \omega^2+\Phi(x)(h_2^2-\omega^2).
    \end{align*}
    To show that $\wh{H}_{t,2}^{\contract,\omega}(x)$ satisfies~\pref{prot:B-B}, we first consider the case $x\geq x^*=\Phi^{-1}\left(\frac{\omega}{\omega+h_2}\right)$. In this case, we need to find $B>0$ such that for all $x\geq x^*$:
    \begin{align*}
        B\geq \frac{(x-x^*)(\omega^2+\Phi(x)(h_2^2-\omega^2))}{(\omega+h_2)\Phi(x)-\omega}.
    \end{align*}
    
    Using~\pref{assum:bounded-density}, we have $(x-x^*)\leq \frac{1}{\philow(h_2+\omega)}((h_2+\omega)\Phi(x)-\omega)$, which means that
    \begin{align*}
        \frac{(x-x^*)(\omega^2+\Phi(x)(h_2^2-\omega^2))}{(\omega+h_2)\Phi(x)-\omega}\leq \frac{\omega^2+\Phi(x)(h_2^2-\omega^2)}{\philow(h_2+\omega)}\leq \frac{\max(\omega^2,h_2^2)}{\philow(h_2+\omega)}.
    \end{align*}
    Choosing $B\geq\frac{\max(\omega^2,h_2^2)}{\philow(h_2+\omega)}$ satisfies~\pref{prot:B-B}. The second case where $x\leq x^*$ can be proved in a similar way. Therefore, we show that $\wh{H}_{t,2}^{\contract,\omega}(x)$ satisfies~\pref{prot:B-B}.
\end{proof}

As claimed in~\pref{foot:1}, we show in the following lemma that even when the demand distribution is discrete and the expected loss function is \emph{not} strongly convex, the realized loss function $\wh{H}_{t,2}^{\contract,\omega}(x)$ also satisfies~\pref{prot:B-B}. 
\begin{lemma}\label{lem: decouple-central-BB-discrete}
Suppose that demand distribution is supported on finite values $d_i>0$ with probability $w_i>0$, $i\in [k]$, $\sum_{i=1}^kw_i=1$ and $d_1<d_2<\ldots<d_k$. Also suppose that there exists a unique $i^*\in [k]$ such that $\Phi(d_{i^*-1})<\frac{\omega}{h_2+\omega}$ and $\Phi(d_{i^*})>\frac{\omega}{h_2+\omega}$. Let $\theta=\min\{\Phi(d_{i^*})-\frac{\omega}{h_2+\omega}, \frac{\omega}{h_2+\omega}-\Phi(d_{i^*-1})\}$. The stochastic function $\wh{H}_{t,2}^{\contract,\omega}(x)$ defined in~\pref{eqn: loss-decouple-agent-2} satisfies~\pref{prot:B-B} with $B=\frac{\max_{i\in [k]}{d_i}\cdot\max\{\omega^2,h_2^2\}}{\theta(h_2+\omega)}$.
\end{lemma}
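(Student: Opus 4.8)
The plan is to mirror the proof of \pref{lem: decouple-central-BB}, with the role of the density lower bound $\philow$ now played by the ``threshold gap'' $\theta$ that discreteness of $\calD$ creates. Throughout I work on the decision set $\calX\subseteq[0,\max_{i}d_i]$ (so that $|x-x^*|\le\max_i d_i$ for every $x\in\calX$), which is also the set over which the minimizer is taken.

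First I would locate the minimizer. As in the continuous case, $H_2^{\contract,\omega}(x)\triangleq\mathbb{E}[\wh{H}_{t,2}^{\contract,\omega}(x)]$ is convex and piecewise linear, with one-sided derivative $(h_2+\omega)\Phi(x)-\omega=(h_2+\omega)\big(\Phi(x)-\tfrac{\omega}{h_2+\omega}\big)$ at every point that is not an atom of $\calD$. Since $\Phi$ is a nondecreasing step function, $d_{i^*-1}$ and $d_{i^*}$ are consecutive atoms, and $\Phi(d_{i^*-1})<\tfrac{\omega}{h_2+\omega}<\Phi(d_{i^*})$, this derivative is strictly negative on $(d_{i^*-1},d_{i^*})$ and strictly positive immediately to the right of $d_{i^*}$; hence the unique minimizer is $x^*=d_{i^*}$.

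Next I would reuse the two moment identities from \pref{lem: decouple-central-BB}, which depend only on $\nabla\wh{H}_{t,2}^{\contract,\omega}(x)=(h_2+\omega)\mathbb{I}\{x\ge d_t\}-\omega$ and therefore hold verbatim for atomic $\calD$: $\mathbb{E}\big[\nabla\wh{H}_{t,2}^{\contract,\omega}(x)\big]=(h_2+\omega)\Phi(x)-\omega$ and $\mathbb{E}\big[(\nabla\wh{H}_{t,2}^{\contract,\omega}(x))^2\big]=\omega^2+\Phi(x)(h_2^2-\omega^2)=(1-\Phi(x))\omega^2+\Phi(x)h_2^2\le\max\{\omega^2,h_2^2\}$. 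Then I would verify \pref{prot:B-B} by splitting on the sign of $x-x^*$. If $x=x^*$, both sides vanish. If $x>x^*=d_{i^*}$, monotonicity of $\Phi$ and the definition of $\theta$ give $\mathbb{E}[\nabla\wh{H}_{t,2}^{\contract,\omega}(x)]=(h_2+\omega)\big(\Phi(x)-\tfrac{\omega}{h_2+\omega}\big)\ge(h_2+\omega)\theta>0$, and combining $0<x-x^*\le\max_i d_i$ with the second-moment bound yields
\[
(x-x^*)^2\,\mathbb{E}\big[(\nabla\wh{H}_{t,2}^{\contract,\omega}(x))^2\big]\;\le\;(x-x^*)\,\max_i d_i\,\max\{\omega^2,h_2^2\}\;=\;B\,(x-x^*)(h_2+\omega)\theta\;\le\;B\,(x-x^*)\,\mathbb{E}\big[\nabla\wh{H}_{t,2}^{\contract,\omega}(x)\big],
\]
where the equality is the definition of $B$. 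If $x<x^*=d_{i^*}$, then $\Phi(x)\le\Phi(d_{i^*-1})$ forces $\mathbb{E}[\nabla\wh{H}_{t,2}^{\contract,\omega}(x)]\le-(h_2+\omega)\theta<0$; since $x-x^*<0$ as well, $(x-x^*)\mathbb{E}[\nabla\wh{H}_{t,2}^{\contract,\omega}(x)]>0$, and the same chain of inequalities (now with $0<x^*-x\le d_{i^*}\le\max_i d_i$) gives the claim. This establishes \pref{prot:B-B} with $B=\tfrac{\max_i d_i\,\max\{\omega^2,h_2^2\}}{\theta(h_2+\omega)}$.

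The only genuine difference from the continuous argument, and the step I would flag as the crux, is that there is no density to bound $|\mathbb{E}[\nabla f(x)]|$ linearly in $|x-x^*|$. The resolution is that discreteness helps rather than hurts: because $\Phi$ must jump \emph{across} the critical level $\tfrac{\omega}{h_2+\omega}$ at $d_{i^*}$, the ``signal'' $|\mathbb{E}[\nabla\wh{H}_{t,2}^{\contract,\omega}(x)]|$ is bounded below by the fixed constant $(h_2+\omega)\theta$ for \emph{every} $x\ne x^*$; paired with the crude bounds $\mathbb{E}[(\nabla f)^2]\le\max\{\omega^2,h_2^2\}$ and $|x-x^*|\le\max_i d_i$, this immediately produces the Bernstein constant. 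The remaining work is bookkeeping: checking that $x^*=d_{i^*}\in\calX$, that the moment computations of \pref{lem: decouple-central-BB} carry over to atomic $\calD$, and that these bounds are exactly what the (lazy) ONS regret analysis invoked in \pref{foot:1} requires.
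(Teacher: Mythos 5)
Your argument is correct and follows the same route as the paper's proof: identify $x^* = d_{i^*}$, reuse the two moment formulas from the continuous case, bound the numerator of the Bernstein ratio by $\max_i d_i \cdot \max\{\omega^2, h_2^2\}$ via $|x-x^*| \le \max_i d_i$, and bound the denominator below by $(h_2+\omega)\theta$ via the jump of $\Phi$ across $\omega/(h_2+\omega)$ at $d_{i^*}$. The only cosmetic difference is that you present the bound as a chain of inequalities rather than bounding the ratio directly, and you spell out the location of the minimizer and the $x < x^*$ case in a bit more detail than the paper does.
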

\begin{proof}
    We first show that $\mathbb{E}_{d_t\sim \calD}[\wh{H}_{t,2}^{\contract,\omega}(x)]$ is not strongly convex. In fact, direct calculation shows that
    \begin{align*}
        &\mathbb{E}_{d_t\sim \calD}\left[\wh{H}_{t,2}^{\contract,\omega}(x)\right] =\sum_{i=1}^kw_i\left(\omega(x-d_i)^++h_2(x-d_i)^-\right),
    \end{align*}
    which is a piece-wise linear function, thus not strongly convex.
    
    To show that $\wh{H}_{t,2}^{\contract,\omega}(x)$ satisfies~\pref{prot:B-B}, direct calculation shows that
    \begin{align*}
    \mathbb{E}\left[\nabla \wh{H}_{t,1}^{\contract,\omega}(x)\right]&=(h_2+\omega)\Phi(x)-\omega,\\
    \mathbb{E}\left[\nabla \wh{H}_{t,2}^{\contract,\omega}(x)\cdot \nabla \wh{H}_{t,2}^{\contract,\omega}(x)\right] &= \omega^2+\Phi(x)(h_2^2-\omega^2).
    \end{align*}
    It is also direct to see that the minimizer of $\mathbb{E}[\wh{H}_{t,2}^{\contract,\omega}(x)]$ is $x^*=d_{i^*}$. When $x\geq x^*$, we need to show that there exists $B>0$ such that for all $x>x^*$,
    \begin{align}\label{eqn: cond-1-discrete}
        B\geq \frac{(x-x^*)(\omega^2+\Phi(x)(h_2^2-\omega^2))}{(\omega+h_2)\Phi(x)-\omega}.
    \end{align}
    Note that for $x\geq x^*$, $\Phi(x)\geq \Phi(x^*)\geq \theta+\frac{\omega}{\omega+h_2}$ and $x-x^*\leq \max_{i\in [k]}d_i$, therefore, we have
    \begin{align*}
        \frac{(x-x^*)(\omega^2+\Phi(x)(h_2^2-\omega^2))}{(\omega+h_2)\Phi(x)-\omega}\leq \frac{\max_{i\in [k]}d_i\cdot \max\{\omega^2,h_2^2\}}{\theta(\omega+h_2)},
    \end{align*}
    meaning that $B=\frac{\max_{i\in [k]}d_i\cdot \max\{\omega^2,h_2^2\}}{\theta(\omega+h_2)}$ satisfies~\pref{eqn: cond-1-discrete}.
    Similarly, when $x\leq x^*$, we can also show that $B=\frac{\max_{i\in [k]}d_i\cdot \max\{\omega^2,h_2^2\}}{\theta(\omega+h_2)}$ satisfies that
    \begin{align*}
        B\geq \frac{(x^*-x)(\omega^2+\Phi(x)(h_2^2-\omega^2))}{\omega-(\omega+h_2)\Phi(x)}.
    \end{align*}
    Combining both cases shows that $\wh{H}_{t,2}^{\contract,\omega}(x)$ satisfies~\pref{prot:B-B}.
\end{proof}

\subsection{ONS achieves $\order(\log T)$ regret when~\pref{prot:B-B} is satisfied}\label{app:ONS-B-B}

\begin{theorem}\label{thm:ONS-B-B}
Let $\calX\subseteq\mathbb{R}^d$ be a convex set with bounded diameter $\max_{x,x'\in\calX}\|x-x'\|\leq J$. If $\{f_t\}_{t=1}^T$ satisfy~\pref{prot:B-B} for some $B>0$, $f_t:\calX\mapsto \mathbb{R}$ and $\max\|\nabla f_t(x)\|\leq G$, \pref{alg:ons} with $\eta\geq 2B$ and $\epsilon=1/T$ ensures: $\mathbb{E}[\sum_{t=1}^Tf_t(x_t)-\sum_{t=1}^Tf_t(x^*)]\leq \order(d\log (GT)+J^2/2BT)$, where $x^*=\argmin_{x\in\calX}\mathbb{E}[f_t(x)]$.
\end{theorem}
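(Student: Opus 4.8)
The plan is to combine the deterministic regret inequality of Online Newton Step with \pref{prot:B-B}, which converts a squared‑gradient error term into a small fraction of the linearized regret once expectations are taken.

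\textbf{Step 1 (deterministic ONS inequality).} Writing $g_t=\nabla f_t(x_t)$ and $M_t=\sum_{s\le t}g_sg_s^\top+\epsilon I$, I would start from the generalized Pythagorean inequality for the $M_t$-projection used in \pref{alg:ons}, namely $\|x_{t+1}-x^*\|_{M_t}^2\le\|x_t-\eta M_t^{-1}g_t-x^*\|_{M_t}^2$. Expanding the right-hand side and rearranging yields $\langle g_t,x_t-x^*\rangle\le\frac{1}{2\eta}\big(\|x_t-x^*\|_{M_t}^2-\|x_{t+1}-x^*\|_{M_t}^2\big)+\frac{\eta}{2}\|g_t\|_{M_t^{-1}}^2$. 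Summing over $t$, telescoping, and using $M_t-M_{t-1}=g_tg_t^\top$ with $M_0=\epsilon I$ (so that each leftover cross term equals $\langle g_t,x_t-x^*\rangle^2$), I obtain
$$\sum_{t=1}^T\Big(\langle g_t,x_t-x^*\rangle-\tfrac{1}{2\eta}\langle g_t,x_t-x^*\rangle^2\Big)\le\frac{\epsilon\|x_1-x^*\|^2}{2\eta}+\frac{\eta}{2}\sum_{t=1}^T\|g_t\|_{M_t^{-1}}^2.$$
The last sum is controlled by the standard log-determinant bound $\sum_t\|g_t\|_{M_t^{-1}}^2\le d\log(\det M_T/\det M_0)\le d\log(1+TG^2/\epsilon)$, and $\|x_1-x^*\|\le J$; with $\epsilon=1/T$ these are $\order(d\log(GT))$ and $\order(J^2/(\eta T))$ respectively.

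\textbf{Step 2 (take expectations and apply \pref{prot:B-B}).} Since each $f_t$ is drawn from $\calF$ independently of the history while $x_t$ is measurable with respect to the history, conditioning on the past and applying \pref{prot:B-B} at the point $x=x_t$ gives $\mathbb{E}\big[\langle g_t,x_t-x^*\rangle^2\big]=\mathbb{E}\big[(x_t-x^*)^\top\mathbb{E}_{f\sim\calF}[\nabla f(x_t)\nabla f(x_t)^\top](x_t-x^*)\big]\le B\,\mathbb{E}\big[\langle g_t,x_t-x^*\rangle\big]$; the right-hand side is nonnegative because $x^*$ minimizes the convex function $x\mapsto\mathbb{E}_{f\sim\calF}[f(x)]$, so this is consistent. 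Plugging this into the Step-1 bound (after taking expectations) and using $\eta\ge 2B$, hence $1-B/(2\eta)\ge 3/4$, gives $\mathbb{E}\big[\sum_t\langle g_t,x_t-x^*\rangle\big]\le\tfrac{4}{3}\big(\tfrac{\epsilon\|x_1-x^*\|^2}{2\eta}+\tfrac{\eta}{2}\sum_t\|g_t\|_{M_t^{-1}}^2\big)=\order(d\log(GT)+J^2/(BT))$. Finally, convexity of each $f_t$ gives $f_t(x_t)-f_t(x^*)\le\langle g_t,x_t-x^*\rangle$, and summing delivers the stated regret bound.

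\textbf{Main obstacle.} Everything except one point is the textbook ONS analysis (the Pythagorean step, the telescoping, and the log-determinant estimate). The only subtlety — and the reason this is more than an immediate corollary of known results — is the interaction between the deterministic ONS inequality and the stochastic condition \pref{prot:B-B}: the term $-\tfrac{1}{2\eta}\langle g_t,x_t-x^*\rangle^2$ carries a negative sign and cannot simply be dropped, and it is only after taking expectations, exploiting that $x_t$ is independent of $f_t$ so that \pref{prot:B-B} may be invoked pointwise at $x=x_t$, that it is absorbed into at most a $1/4$-fraction of $\mathbb{E}[\sum_t\langle g_t,x_t-x^*\rangle]$. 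I would also double-check that the boundedness hypotheses ($\|\nabla f_t(x)\|\le G$ on $\calX$ and $\mathrm{diam}(\calX)\le J$) enter only through the bounds on $\det M_T$ and $\|x_1-x^*\|$.
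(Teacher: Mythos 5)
Your proof is correct and follows essentially the same route as the paper: both start from the ONS generalized-Pythagorean/telescoping inequality, control the residual by the log-determinant bound, and use Property \pref{prot:B-B} (after taking expectations, invoking the independence of $x_t$ from $f_t$) to absorb the quadratic cross term $\sum_t\langle g_t,x_t-x^*\rangle^2$ with the choice $\eta\geq 2B$. The only organizational difference is that you keep that negative quadratic term on the left of the ONS inequality, convert it via \pref{prot:B-B} into at most a $\tfrac{1}{4}$-fraction of $\mathbb{E}[\sum_t\langle g_t,x_t-x^*\rangle]$, and invoke ordinary convexity only at the very end; the paper instead folds convexity and \pref{prot:B-B} into a single ``strengthened first-order condition'' $f(x^*)\geq f(x)+2\langle\nabla f(x),x^*-x\rangle+\tfrac{1}{B}(x-x^*)^\top\mathbb{E}[\nabla f_t(x)\nabla f_t(x)^\top](x-x^*)$ and then substitutes the ONS inequality into it, which makes the sign $\tfrac{2}{\eta}-\tfrac{1}{B}\leq 0$ explicit. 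The two bookkeeping schemes are algebraically interchangeable and produce the same bound.
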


\begin{proof} The first part of the proof follows the classic ONS proof: let $y_{t+1}=x_t-\eta M_t^{-1}g_t$ and we know that
    \begin{align*}
        y_{t+1}-x^*&=x_t-x^*-\eta M_t^{-1}g_t,\\
        M_t(y_{t+1}-x^*)&=M_t(x_t-x^*)-\eta g_t.
    \end{align*}
    
    Therefore, by definition of $x_{t+1}$, we know that
    \begin{align*}
        \|x_{t+1}-x^*\|_{M_t}^2\leq \|y_{t+1}-x^*\|_{M_t}^2 = \|x_t-x^*\|_{M_t}^2-2\eta \inner{x_t-x^*, g_t}+\eta^2\|g_t\|_{M_t^{-1}}^{2},
    \end{align*}
    where $\|x\|_M^2\triangleq x^\top Mx$. Rearranging the terms, we know that
    \begin{align*}
        \inner{x_t-x^*, g_t}\leq \frac{\|x_{t}-x^*\|_{M_t}^2-\|x_{t+1}-x^*\|_{M_t}^2}{\eta}+\eta\|g_t\|_{M_t^{-1}}^2.
    \end{align*}
    
    Taking summation over $t\in [T]$ using the definition of $M_t$, we know that
    \begin{align*}
        \sum_{t=1}^T\inner{x_t-x^*,g_t}\leq \frac{\|x_1-x^*\|_{M_0}^2}{\eta}+\frac{1}{\eta}\sum_{t=1}^T(x_t-x^*)^\top g_tg_t^\top(x_t-x^*)+ \eta\sum_{t=1}^T\|g_t\|_{M_t^*}^2.
    \end{align*}
    
    By choosing $\epsilon=\frac{1}{T}$, we have the first term bounded by $\order(\frac{J^2}{\eta T})$. For the third term, according to the assumption that $\|g_t\|_2\leq G$ and Lemma 6 in~\citep{ML07'log-exp-concave}, we obtain that
    \begin{align*}
        \sum_{t=1}^T\|g_t\|_{M_t^*}^2\leq \log\left(\frac{\det(\sum_{t=1}^Tg_tg_t^\top+\epsilon I)}{\det(\epsilon I)}\right)\leq d\log\left(\frac{G^2T}{\epsilon}+1\right)\leq 4d\log(GT).
    \end{align*}
    
    Finally, we consider the second term. Let $f(x)=\mathbb{E}_{f_t\sim \calF}[f_t(x)]$. According to the convexity of $f$, we have for any $x,y\in \calX$,
    \begin{align*}
        f(y)\geq f(x)+(y-x)^\top \nabla f(x).
    \end{align*}
    Choosing $y=x^*=\argmin_{x\in\calX}f(x)$ and using~\pref{prot:B-B}, we have
    \begin{align*}
        f(x^*)&\geq f(x)+(x^*-x)^\top \nabla f(x) \\
        &\geq f(x)+2(x^*-x)^\top \nabla f(x)+\frac{1}{B}(x-x^*)^\top \mathbb{E}_{f_t\sim \calF}\left[\nabla f_t(x)\nabla f_t(x)^\top\right](x-x^*).
    \end{align*}
    
    Therefore, we have
    \begin{align*}
        &\mathbb{E}\left[\sum_{t=1}^Tf_t(x_t)\right]-\mathbb{E}\left[\sum_{t=1}^Tf_t(x^*)\right] \\
        &\leq 2\mathbb{E}\left[\sum_{t=1}^T\inner{x_t-x^*, g_t}\right]-\mathbb{E}\left[\sum_{t=1}^T\frac{1}{B}(x_t-x^*)^\top g_tg_t^\top(x_t-x^*)\right] \\
        &\leq \order\left(\frac{J^2}{\eta T}+d\log (G T)\right)+\left(\frac{2}{\eta}-\frac{1}{B}\right)\mathbb{E}\left[\sum_{t=1}^T(x_t-x^*)^\top g_tg_t^\top(x_t-x^*)\right].
    \end{align*}
    Choosing $\eta\geq 2B$ leads to the bound.
\end{proof}

\subsection{Proof of~\pref{thm: low-switching-regret}}\label{app: lazy}
Finally, we prove that \pref{alg:ons-lazy}, a lazy version of~\pref{alg:ons} which only updates the decisions $\order(\log T)$ times over $T$ rounds, achieves $\order(\log^2 T)$ expected regret guarantee. This algorithm shares the same spirit of Algorithm 3 in~\citep{COLT21:lazyoco}. We highlight again that the low switching property is important to achieve $\order(\log^2 T)$ regret bound in our two-echelon inventory control problem.

\begin{theorem}\label{thm: low-switching-regret}
Let $\calX\subseteq\mathbb{R}^d$ be a convex set with bounded diameter $\max_{x,x'\in\calX}\|x-x'\|\leq J$. If $\{f_t\}_{t=1}^T$ satisfy~\pref{prot:B-B} for some $B>0$, $f_t:\calX\mapsto \mathbb{R}$ and $\max_{t\in[T], x\in\calX}\|\nabla f_t(x)\|\leq G$, then~\pref{alg:ons-lazy} with $\eta\geq 2B$, $\epsilon=\frac{1}{T}$ guarantees that
        $\mathbb{E}[\sum_{t=1}^Tf_t(w_t)]-\mathbb{E}[\sum_{t=1}^Tf_t(x^*)]\leq\order(\log^2 T+\log G+J^2/B)$, where $x^*=\argmin_{x\in\calX}\mathbb{E}_{f\sim \calF}[f(x)]$. Moreover, the decision sequence $\{w_t\}_{t=1}^T$ only switches $\order(\log T)$ times. 
\end{theorem}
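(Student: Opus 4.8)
The plan is to adapt the argument of \citet{COLT21:lazyoco}: I would bound the regret incurred in each epoch of \pref{alg:ons-lazy} by the cumulative \emph{excess risk} of the underlying Online Newton Step iterates $x_1,x_2,\dots$ (the sequence maintained in \pref{line: ONS-per-step-update}), and then control the latter via (the proof of) \pref{thm:ONS-B-B}, which applies because \pref{prot:B-B} holds. Throughout, write $F(x)\triangleq\E_{f\sim\calF}[f(x)]$, so $x^*=\argmin_{x\in\calX}F(x)$, and note that the ONS updates use $g_t=\nabla f_t(x_t)$, the gradient at the ONS iterate rather than at the played point $w_t$, so $\{x_t\}$ is a genuine run of \pref{alg:ons} on the i.i.d.\ losses $\{f_t\}$.

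First I would fix the epoch structure. By inspection of \pref{alg:ons-lazy}, the played decision $w_t$ changes only at rounds $t=2^k$, so it switches $\order(\log T)$ times, which already gives the low-switching claim. Writing $I_k=\{2^{k-1},\dots,2^{k}-1\}$ for the $k$-th epoch, there are $K=\order(\log T)$ epochs, $|I_k|=2^{k-1}$, and for every $t\in I_k$ one has $w_t=\hat{x}_k=\frac{1}{2^{k-1}}\sum_{s=1}^{2^{k-1}}x_s$, i.e.\ the uniform average of exactly $|I_k|$ ONS iterates, all produced before the epoch starts. This length-matching is deliberate and is what makes the bookkeeping below close.

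Next, for a fixed epoch $k$, convexity of $f_t$ and Jensen's inequality give $f_t(\hat{x}_k)\le\frac{1}{2^{k-1}}\sum_{s=1}^{2^{k-1}}f_t(x_s)$, hence
\begin{align*}
\sum_{t\in I_k}\bigl(f_t(w_t)-f_t(x^*)\bigr)\;\le\;\frac{1}{2^{k-1}}\sum_{s=1}^{2^{k-1}}\sum_{t\in I_k}\bigl(f_t(x_s)-f_t(x^*)\bigr).
\end{align*}
The crucial step is a de-randomization: $x_s$ is a deterministic function of $f_1,\dots,f_{s-1}$, while every $t\in I_k$ satisfies $t\ge 2^{k-1}\ge s$, so $f_t$ is independent of $x_s$; taking expectations and using $x^*=\argmin_x F(x)$ gives $\E[f_t(x_s)-f_t(x^*)]=\E[F(x_s)]-F(x^*)\triangleq\Delta_s\ge 0$, which does not depend on $t$. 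Since $|I_k|=2^{k-1}$, the inner double sum collapses to $\sum_{s=1}^{2^{k-1}}\Delta_s$, i.e.\ the expected regret of epoch $k$ is at most the cumulative ONS excess risk through round $2^{k-1}$. Summing over $k=1,\dots,K$ and using $\Delta_s\ge 0$ together with $2^{k-1}\le T$, every epoch contributes at most $\sum_{s=1}^{T}\Delta_s$, so $\E[\Reg]\le K\sum_{s=1}^{T}\Delta_s$. A final tower-rule computation (using the same independence, $\E[f_s(x_s)]=\E[F(x_s)]$) shows $\sum_{s=1}^{T}\Delta_s=\E[\sum_{s=1}^{T}(f_s(x_s)-f_s(x^*))]$, which is exactly the ONS regret bounded in the proof of \pref{thm:ONS-B-B}; with $\eta\ge 2B$, $\epsilon=1/T$ this is $\order(d\log(GT)+J^2/(2BT))$, and multiplying by $K=\order(\log T)$ yields the claimed bound (of order $\log^2 T$ in the horizon, with the stated dependence on $G,J,B$).

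The main obstacle — and the reason the averaging window defining $w_t$ in epoch $k$ must contain \emph{only} iterates $x_s$ with $s\le 2^{k-1}$ — is precisely this independence argument: if the window reached into the current epoch, $f_t$ would no longer be independent of $x_s$ and the reduction to ONS excess risk would break. A smaller technical point worth handling with care is that we upper-bound each epoch's contribution by the \emph{full-horizon} sum $\sum_{s=1}^{T}\Delta_s$ rather than a prefix sum; this is legitimate because every $\Delta_s\ge 0$, and it lets us invoke \pref{thm:ONS-B-B} (a whole-horizon bound) directly, at the price of the extra $\order(\log T)$ factor that separates the lazy algorithm's $\order(\log^2 T)$ regret from the $\order(\log T)$ regret of plain ONS.
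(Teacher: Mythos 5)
Your proof follows essentially the same route as the paper's: Jensen's inequality to replace the averaged play $w_t=\hat{x}_k$ by the underlying ONS iterates, independence of $f_t$ from past iterates to turn $\E[f_t(x_s)-f_t(x^*)]$ into the ONS excess risk $\E[f_s(x_s)-f_s(x^*)]$, and then \pref{thm:ONS-B-B} to control the cumulative ONS regret, summed over the $\order(\log T)$ epochs. The only (harmless) difference is bookkeeping: you majorize every epoch's contribution by the full-horizon sum $\sum_{s=1}^T\Delta_s$ and multiply by the number of epochs, whereas the paper keeps the per-epoch prefix sum $\frac{1}{2^{k}}\sum_{s\le 2^{k}}$ and sums the resulting $\order(\log(G2^k))$ terms over $k$; both give the same $\order(\log^2 T)$ order, and in fact you are more careful than the paper's displayed formula (which averages over $s\le 2^{k}$ where it should be $s\le 2^{k-1}$ to match $I_k=[2^{k-1},2^k-1]$, a minor index slip that your version corrects).
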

\begin{proof}
    In~\pref{thm:ONS-B-B}, we know that for any $t\in[T]$, the decision sequence $\{x_s\}_{s=1}^t$ generated by ONS (\pref{alg:ons}) guarantees that,
    \begin{align*}
        \mathbb{E}\left[\sum_{s=1}^tf_s(x_s)\right]-\mathbb{E}\left[\sum_{s=1}^tf_s(x^*)\right]\leq \order(d\log GT+J^2/2BT),
    \end{align*}
    where $x^*=\argmin_{f\sim\calF}f(x)$. For any fixed $t$, using the convexity and stochasticity of $f_t$, we know that
    \begin{align*}
        \mathbb{E}\left[f_t(\wh{x}_k)-f_t(x^*)\right]\leq \frac{1}{2^k}\sum_{s=1}^{2^k}\mathbb{E}\left[f_t(x_s)-f_t(x^*)\right]=\frac{1}{2^k}\sum_{s=1}^{2^k}\mathbb{E}\left[f_s(x_s)-f_s(x^*)\right]\leq \order\left(\frac{1}{2^k}\cdot\left(d\log (G2^k)+\frac{J^2}{2^{k+1}B}\right)\right).
    \end{align*}
    
    Taking a summation over all $t\in [T]$, we have
    \begin{align*}
        \mathbb{E}\left[\sum_{t=1}^Tf_t(w_t)\right]-\mathbb{E}\left[\sum_{t=1}^Tf_t(x^*)\right]=\mathbb{E}\left[\sum_{k=0}^{\log_2 T}\sum_{t\in I_k}(f_t(\wh{w}_k)-f_t(x^*))\right]\leq \order\left(\log^2 T+\log G+\frac{J^2}{B}\right),
    \end{align*}
    where $I_k$ is the set of time index in the $k$-th epoch $[2^{k-1},2^k-1]$.
\end{proof}

\begin{lemma}\label{lem: high-prob-concentration-lemma}
Suppose that $\{f_t\}_{t=1}^T$ is a sequence of i.i.d convex functions drawn from a distribution $\calF$ and each $f_t:\calX\mapsto \mathbb{R}$ has the same bounded feasible domain $\max_{x,x'\in \calX}|x-x'|\leq J$. Let $x^*=\argmin_{x\in \calX}f(x)$ where $f(x)=\mathbb{E}_{f_t\sim \calF}[f_t(x)]$ and suppose that $f(x)$ is $\sigma$-strongly convex. Suppose that $\{x_t\}_{t=1}^T$ be the decision sequence~\pref{alg:ons} generates when the loss function sequence is $\{f_t\}_{t=1}^T$. Suppose that $\mathbb{E}\left[\sum_{t=1}^Tf(x_t)-f(x^*)\right]\leq R$. Let $\bar{x}_1=\frac{2}{T}\sum_{t=1}^{T/2}x_t$ and $\bar{x}_2=\frac{1}{T}\sum_{t=1}^{T}x_t$. Then, with probability at least $1-\delta$, we have
\begin{align*}
    |\bar{x}_1-x^*|\leq \order\left(\sqrt{\frac{R}{T\sigma}}+J\sqrt{\frac{\log\frac{1}{\delta}}{T}}\right),\\
    |\bar{x}_2-x^*|\leq \order\left(\sqrt{\frac{R}{T\sigma}}+J\sqrt{\frac{\log\frac{1}{\delta}}{T}}\right).
\end{align*}
\end{lemma}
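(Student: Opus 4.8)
The plan is to reduce the claim to a high-probability upper bound on the cumulative suboptimality $S:=\sum_{t=1}^{T}\bigl(f(x_t)-f(x^*)\bigr)$, and then to upgrade the in-expectation hypothesis $\E[S]\le R$ to such a bound by a martingale argument exploiting a self-bounding structure. First, since $f$ is $\sigma$-strongly convex and $x^*$ minimizes $f$ over $\calX$, first-order optimality gives $f(x_t)-f(x^*)\ge\tfrac{\sigma}{2}\norm{x_t-x^*}^2$; hence, by Jensen's inequality applied to $\norm{\cdot}^2$, $\norm{\bar x_2-x^*}^2\le\tfrac1T\sum_{t=1}^{T}\norm{x_t-x^*}^2\le\tfrac{2}{\sigma T}S$, and likewise $\norm{\bar x_1-x^*}^2\le\tfrac{4}{\sigma T}\sum_{t=1}^{T/2}\bigl(f(x_t)-f(x^*)\bigr)\le\tfrac{4}{\sigma T}S$ because the summands are nonnegative. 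So both inequalities follow once I show that with probability at least $1-\delta$, $S\le\order\!\bigl(R+\log(1/\delta)\bigr)$ (the precise dependence on $J$ and $\sigma$ in the stated bound being subsumed in the hidden problem-dependent constants).

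To establish this, I would decompose $S=\sum_{t=1}^{T}\bigl(f_t(x_t)-f_t(x^*)\bigr)-\sum_{t=1}^{T}M_t$, where $M_t:=\bigl(f_t(x_t)-f_t(x^*)\bigr)-\bigl(f(x_t)-f(x^*)\bigr)$. Because $x_t$ is measurable with respect to $\calF_{t-1}:=\sigma(f_1,\dots,f_{t-1})$ and $f_t$ is independent of $\calF_{t-1}$ with $\E[f_t(y)]=f(y)$ for every fixed $y$, $\{M_t\}$ is a martingale difference sequence, with $|M_t|\le 2G\norm{x_t-x^*}=\order(G^2/\sigma)$ (using $G$-Lipschitzness of the $f_t$ and $\norm{x_t-x^*}\le\sqrt{2(f(x_t)-f(x^*))/\sigma}\le\order(G/\sigma)$) and --- the crucial point --- a conditional variance self-bounded by $S$: $\sum_{t=1}^{T}\E[M_t^2\mid\calF_{t-1}]\le G^2\sum_{t=1}^{T}\norm{x_t-x^*}^2\le\tfrac{2G^2}{\sigma}S$. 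Freedman's inequality (in the version permitting a data-dependent variance proxy) applied to $\pm\sum_t M_t$ then gives, with probability at least $1-\delta$, $\bigl|\sum_{t=1}^{T}M_t\bigr|\le\order\!\bigl(\sqrt{(G^2/\sigma)\,S\,\log(1/\delta)}+(G^2/\sigma)\log(1/\delta)\bigr)$.

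It remains to control the realized regret $\sum_t\bigl(f_t(x_t)-f_t(x^*)\bigr)$ in high probability, using the specific algorithm. For projected OGD this sum is at most $\sum_t\inner{\nabla f_t(x_t),x_t-x^*}\le\tfrac{J^2}{2\eta}+\tfrac{\eta G^2T}{2}=\order(\sqrt T)$ \emph{deterministically}, which together with the Freedman bound on $\sum_t M_t$ already gives $S\le\order(R+\log(1/\delta))+\order\!\bigl(\sqrt{(G^2/\sigma)\,S\,\log(1/\delta)}\bigr)$ (in the regime of interest $\order(\sqrt T)=\order(R)$). For ONS there is no deterministic guarantee, so I would instead bound $S$ directly: $\sigma$-strong convexity of $f$ together with $G$-Lipschitzness of the $f_t$ already forces the $B$-Bernstein condition of \pref{prot:B-B} with $B=G^2/\sigma$ (since $(x-x^*)^\top\E[\nabla f_t(x)\nabla f_t(x)^\top](x-x^*)\le G^2\norm{x-x^*}^2$ whereas $(x-x^*)^\top\nabla f(x)\ge\sigma\norm{x-x^*}^2$), so the Bernstein--strong-convexity inequality inside the proof of \pref{thm:ONS-B-B} applies; summing that inequality, substituting the algebraic ONS regret identity $\sum_t\inner{g_t,x_t-x^*}\le\tfrac{J^2}{\eta T}+\tfrac1\eta\sum_t\inner{g_t,x_t-x^*}^2+\order(\eta d\log(GT))$, and concentrating the gradient martingale $\inner{\nabla f(x_t)-g_t,\,x_t-x^*}$ and the quadratic martingale ($\inner{g_t,x_t-x^*}^2$ minus its conditional mean) by two further Freedman steps, again with self-bounded variances, leads, provided $\eta\ge 2B$ (so that the $\tfrac1\eta$ terms cancel against the $\tfrac1B$ term), to the same self-referential bound $S\le\order(R+\log(1/\delta))+\order\!\bigl(\sqrt{(G^2/\sigma)\,S\,\log(1/\delta)}\bigr)$. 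In both cases, absorbing the root term via $\sqrt{(G^2/\sigma)\,S\,\log(1/\delta)}\le\tfrac12 S+\order\!\bigl((G^2/\sigma)\log(1/\delta)\bigr)$ gives $S\le\order(R+\log(1/\delta))$; plugging this into the Jensen bounds of the first paragraph and taking square roots completes the proof, a union bound over the two or three Freedman events costing only a constant rescaling of $\delta$.

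The step I expect to be the main obstacle is making this self-referential loop close with the right constants: every concentration step introduces a term of the form $\sqrt{S\log(1/\delta)}$ (or, for ONS, $\sqrt{(\text{drift})\log(1/\delta)}$) in which the very quantity being bounded reappears, and one must check that the AM-GM absorption --- and, for ONS, the cancellation of $\tfrac1\eta$ against $\tfrac1B$ underlying \pref{thm:ONS-B-B} --- survives the high-probability analysis, which is precisely why the learning rate must be taken large enough ($\eta\ge 2B$). A secondary subtlety is that ONS, unlike OGD, has no deterministic regret bound under \pref{prot:B-B}, so its realized regret must itself be controlled in probability, which is what the extra Freedman step on the quadratic gradient terms accomplishes.
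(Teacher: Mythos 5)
Your argument is correct in outline but takes a genuinely different and much heavier route than the paper's, which occupies only a few lines. From $\mathbb{E}[\sum_t(f(x_t)-f(x^*))]\le R$ and $\sigma$-strong convexity the paper derives $\mathbb{E}[\sum_{t\le T/2}|x_t-x^*|]\le\order(\sqrt{RT/\sigma})$ via Cauchy--Schwarz, then invokes Azuma's inequality on $\sum_{t\le T/2}|x_t-x^*|$ (with per-term bound $J$) to conclude an $\order(J\sqrt{T\log(1/\delta)})$ high-probability deviation from the mean, and finishes with the triangle inequality---it never opens up the ONS analysis and never touches a Freedman-type inequality. You instead upgrade $\mathbb{E}[S]\le R$, where $S=\sum_t(f(x_t)-f(x^*))$, to a high-probability bound $S\le\order(R+\log(1/\delta))$ by decomposing $S$ into realized regret minus a zero-mean martingale with $S$-self-bounded quadratic variation, applying Freedman, and (because ONS has no pathwise regret guarantee) running two further Freedman steps inside the ONS/Bernstein machinery; only then do you pass to $\bar x_1,\bar x_2$ by Jensen and strong convexity. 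The upside of your route is that it is more carefully justified: the paper's Azuma step deserves scrutiny, since each $|x_t-x^*|$ is $\calF_{t-1}$-measurable (the summands are predictable, not martingale increments), so what is really needed is a Doob-martingale/McDiarmid argument, and the bounded-differences constant of $(f_1,\dots,f_T)\mapsto\sum_t|x_t-x^*|$ is not obviously $\order(J)$ per coordinate because perturbing a single $f_k$ can change every subsequent ONS iterate; your self-bounding construction bypasses that algorithmic-stability question entirely. The cost is length and slightly different constants: your second term comes out as $\sigma^{-1/2}\sqrt{\log(1/\delta)/T}$ (plus lower-order Freedman terms in $G$ and $J$) rather than the paper's $J\sqrt{\log(1/\delta)/T}$, and neither dominates the other in general, though both are $\order(1)$ in the paper's applications; you also implicitly need $G$-Lipschitzness of the $f_t$ (absent from the lemma statement, but present in the paper's use and necessary for Freedman and for the Bernstein constant $B=G^2/\sigma$), and closing the ONS self-referential loop cleanly wants $\eta\gtrsim 4B$ rather than $\eta\ge 2B$ so that both the deterministic $V/\eta$ term and the Freedman root terms can be absorbed simultaneously.
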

\begin{proof}
    According to the strong convexity of $f(x)$, we know that
    \begin{align*}
        \frac{\sigma}{2}\mathbb{E}\left[\sum_{t=1}^{T/2}|x_t-x^*|^2\right]\leq \mathbb{E}\left[\sum_{t=1}^{T/2}(f(x_t)-f(x^*))\right]\leq \mathbb{E}\left[\sum_{t=1}^{T}(f(x_t)-f(x^*))\right]\leq R.
    \end{align*}
    By Cauchy-Schwarz inequality and the fact that $\mathbb{E}[x^2]\geq \mathbb{E}[x]^2$, we have
    \begin{align*}
        \mathbb{E}\left[\sum_{t=1}^{T/2}|x_t-x^*|\right]\leq \order\left(\sqrt{\frac{RT}{\sigma}}\right).
    \end{align*}
    According to Azuma's inequality and the boundedness of $x_t,x^*$, we have with probability at least $1-\frac{\delta}{2}$, 
    \begin{align}\label{eqn: azuma}
        \sum_{t=1}^{T/2}|x_{t}-x^*|-\mathbb{E}\left[\sum_{t=1}^{T/2}|x_{t}-x^*|\right]\leq \order\left(J\sqrt{T\log\frac{1}{\delta}}\right).
    \end{align}
    Note that $\bar{x}_1=\frac{2}{T}\sum_{t=1}^{T/2}x_{t}$. Therefore, with probability at least $1-\delta$,
    \begin{align*}
        \frac{T}{2}|\bar{x}_1-x^*|\leq \sum_{t=1}^{T/2}|x_{t}-x^*|\leq \order\left(J\sqrt{T\log\frac{1}{\delta}}+\sqrt{\frac{RT}{\sigma}}\right).
    \end{align*}
    Applying a similar analysis on $\bar{x}_2$ and a union bound finishes the proof.
\end{proof}

\subsection{Algorithm for the contract maker}\label{app: contract-maker}
We show the pseudo code of the algorithm for the contract maker in~\pref{alg:ogd-third-party-contract}.
\begin{algorithm}[t]
   \caption{Contract maker}
   \label{alg:ogd-third-party-contract}
    \textbf{Input:} A set of realized demand value $S=\{d_1,\ldots,d_L\}$, learning rate $\eta$.
    
    Construct empirical cumulative density function $\wh{\Phi}_L(\cdot)$ using $S$.
    
    Let $s_L$ be the output of~\pref{alg:ogd-third-party} with input $\calD$, $\wh{\Phi}_L(\cdot)$ and $\eta$.
   
   \Return $\omega_L=\frac{h_2\wh{\Phi}_L(s_L)}{1-\wh{\Phi}(s_L)}$.
\end{algorithm}

\subsection{Proof of~\pref{lem: beta-estimation}}\label{app: beta-estimation}
\begin{proof}
According to~\pref{lem: s-2-estimation}, we know that with probability at least $1-\delta$, $|\bar{s}_{L,2}-s_2^*|\leq \otil(L^{-\frac{1}{4}})$, where $\bar{s}_{L,2}$ is the output of~\pref{alg:ogd-third-party}.
To bound the difference between the contract coefficient $\omega$ returned by the third party and the optimal $\omega^*$, note that $\omega=\frac{h_2\wh{\Phi}_L(\bar{s}_{L,2})}{1-\wh{\Phi}_L(\bar{s}_{L,2})}$. According to~\pref{lem: DKW}, with probability at least $1-\delta$, it holds that
\begin{align}
    \left|\wh{\Phi}_L(\bar{s}_{L,2})-\Phi(s_2^*)\right| &\leq \left|\wh{\Phi}_L(\bar{s}_{L,2})-\Phi_L(\bar{s}_{L,2})\right|+ \left|\Phi(\bar{s}_{L,2})-\Phi(s_{2}^*)\right|\nonumber\\
    &\leq \sqrt{\frac{1}{2L}\ln\frac{2}{\delta}} + \phiup\left|\bar{s}_{L,2}-s_2^*\right| \nonumber\\
    &\leq C_3\log(T/\delta)L^{-\frac{1}{4}},\label{eqn:aux-1}
\end{align}
where the last inequality is due to~\pref{lem: s-2-estimation} and $C_3>0$ is a universal constant. 
Moreover, note that according to~\pref{eqn: s-2-star-bound}, we know that $\Phi(s_2^*)\leq \frac{p_1}{h_2+p_1}$, meaning that $\frac{1}{1-\Phi(s_2^*)}\leq \frac{h_2+p_1}{h_2}$. Combining with~\pref{eqn:aux-1}, we know that with probability $1-\delta$
\begin{align*}
    1-\wh{\Phi}_L(\bar{s}_{L,2})\geq 1 - \Phi(s_2^*) - C_3\log(T/\delta)L^{-\frac{1}{4}}\geq \frac{h_2}{h_2+p_1}- C_3\log(T/\delta)L^{-\frac{1}{4}}\geq \frac{h_2}{2(h_2+p_1)},
\end{align*}
where the last inequality holds when $L\geq C_4\triangleq\frac{16(h_2+p_1)^4C_3^4\log^4(T/\delta)}{h_2^4}$. Also it holds that $\omega^*=\frac{h_2\Phi(s_2^*)}{1-\Phi(s_2^*)}\leq \frac{h_2}{1-\frac{p_1}{h_2+p_1}}=h_2+p_1$.

Therefore, we obtain that
\begin{align*}
    |\omega-\omega^*|\leq\left|\frac{h_2\wh{\Phi}_L(\bar{s}_{L,2})}{1-\wh{\Phi}_L(\bar{s}_{L,2})}-\frac{h_2\Phi(s_2^*)}{1-\Phi(s_2^*)}\right| \leq h_2\left|\frac{\wh{\Phi}_L(\bar{s}_{L,2})-\Phi(s_2^*)}{(1-\wh{\Phi}_L(\bar{s}_{L,2}))(1-\Phi(s_2^*))}\right| \leq \order(L^{-\frac{1}{4}}\log(T/\delta)),
\end{align*}
which further shows that $\omega\in [0, \omega^*+ \order(L^{-\frac{1}{4}}\log(T/\delta))]\subseteq[0,h_2+p_1+\order(L^{-\frac{1}{4}}\log(T/\delta))]$.
\end{proof}

\subsection{Proof of~\pref{thm: decentralized-non-decoupling}}\label{app: decentralized-non-decoupling}
\begin{proof}
We first show the convergence on the inventory level decisions of Agent 2. We consider each epoch $I_m$ separately. As Agent 1 keeps his desired inventory level within each epoch, according to~\pref{lem: real-demand-coupling}, there are only constant number of rounds at the beginning of epoch $I_m$ such that $o_t\ne d_t$. 
With a slight abuse of notation, define
\begin{align*}
    \wh{H}_{t,2}^{\contract}(s_2)=h_2(s_2-d_t)^++\omega_m(s_2-d_t)^-,
\end{align*}
for $t\in I_m$. According to~\pref{lem: decouple-central-BB}, we know that $\wh{H}_{t,2}^{\contract}$ satisfies~\pref{prot:B-B} with a specific choice of $B>0$. Therefore, according to~\pref{alg:ons-lazy-three-non-decoupling} and~\pref{lem: decouple-central-BB}, Agent 2 is  using~\pref{alg:ons-lazy} within each epoch with respect to $\wh{H}_{t,2}^{\contract}$ except for constant number of rounds at the beginning of the epoch. According to~\pref{thm: low-switching-regret} and~\pref{lem: real-demand-coupling},
we know that the expected regret of Agent 2 within epoch $I_m$ is bounded as follows: picking $\delta=\frac{1}{T^2}$, for any $s_2\in [\dlow,\dup]$,
\begin{align}\label{eqn: interval-agent-2}
    \mathbb{E}\left[\sum_{t\in I_m}\left(\wt{H}_{t,2}^{\contract}-\bench_{t,2}^{\contract}(s_2)\right)\right] \leq \order\left(\log^2 T+\log T\right) + \order(1) = \order(\log^2 T).
\end{align}

As the total regret is upper bounded by the sum of the regrets in each epoch $m\in[M]$, $M=\order(\log T)$, we know that
\begin{align*}
    \mathbb{E}\left[\Reg_{T,2}\right]= \mathbb{E}\left[\sum_{m=1}^{M}\sum_{t\in I_m}\left(\wt{H}_{t,2}^{\contract}-\bench_{t,2}^{\contract}(\benchs_2^*)\right)\right]\leq \order(\log^3 T).
\end{align*}

Next, we consider the convergence of $s_{T,2}$ and bound the term $|s_{T,2}-s_2^*|$. More generally, let $e_m$ be the last round of epoch $m$ and we bound $|s_{e_m,2}-s_2^*|$. First, according to the analysis in~\pref{lem: beta-estimation} with a union bound, we know that if $\omega_m$ is generated by~\pref{alg:ogd-third-party}, with probability at least $1-\delta$, for any epoch index $m\in[M]$,
\begin{align}\label{eqn: beta-convergence}
    |\omega_m-\omega^*|\leq \order\left(L_m^{-\frac{1}{4}}\log \frac{T}{\delta}\right).
\end{align}
In addition, note that according to the dynamic of Agent 1 and Agent 2, there are $\Theta(2^m\cdot L_1)$ rounds in the epoch $I_m$ where Agent 1 keeps choosing her inventory level to be $s_{m,1}$, Agent 2 keeps choosing her inventory level to be $s_{e_m,2}$ and the contract coefficient is $\omega_m$. Define the set of these rounds to be $\calT_m$. In addition, define the expected loss function $H_{t,2}^{\contract}(s_2)=h_2\mathbb{E}_{x\sim\calD}\left[(s_2-x)^+\right]+\omega_m\mathbb{E}_{x\sim\calD}\left[(s_2-x)^-\right]$ for $t\in \calT_m$ and $\bar{s}_{m,2}^*=\argmin_{s_2}H_{t,2}^{\contract}(s_2)=\Phi^{-1}\left(\frac{\omega_m}{\omega_m+h_2}\right)$, where the second equality is by direct calculation. In addition, recall that $\wh{H}_{t,2}^{\contract}(s_2)=h_2(s_2-d_t)^++\omega_m(s_2-d_t)^-$. Then by choosing $\delta=\frac{1}{T^2}$, we know that for all $m\in[M]$:
\begin{align}
    & \mathbb{E}\left[\sum_{t\in \calT_m}H_{t,2}^{\contract}(s_{e_m,2})-\sum_{t\in \calT_m}H_{t,2}^{\contract}(\bar{s}_{m,2}^*)\right]\nonumber\\
    &\leq \mathbb{E}\left[\sum_{t\in I_m}H_{t,2}^{\contract}(s_{t,2})-\sum_{t\in I_m}H_{t,2}^{\contract}(\bar{s}_{m,2}^*)\right]\label{eqn: s_t_m_convergence}\\
    &= \mathbb{E}\left[\sum_{t\in I_m}\wh{H}_{t,2}^{\contract}(s_{t,2})-\sum_{t\in I_m}\wh{H}_{t,2}^{\contract}(\bar{s}_{m,2}^*)\right] \nonumber\\
    &\leq \order(\log^2 T). \tag{\pref{lem: real-demand-coupling} and~\pref{thm: low-switching-regret}}\\
     \nonumber
\end{align}
In addition, according to~\pref{lem: strong-convex-loss}, we know that $H_{t,2}^{\contract}(s_2)$ is strongly convex in $s_2$ with parameter $\sigma_m=\gamma(h_2+\omega_m)$. Therefore, according to~\pref{lem: high-prob-concentration-lemma}, we have with probability at least $1-\delta$, for all $m\in[M]$,
\begin{align}\label{eqn: s_T-s_2-bar}
    \left|s_{e_m,2}-\bar{s}_{m,2}^*\right|\leq \order\left(L_m^{-\frac{1}{2}}\log(T/\delta)\right)
\end{align}

Now we are ready to bound $|s_{e_m,2}-s_2^*|$. Recall that $s_2^*=\Phi^{-1}(\omega^*/(\omega^*+h_2))$. Therefore, with probability at least $1-2\delta$, for all $m\in[M]$,
\begin{align}
    |s_{e_m,2}-s_2^*|&\leq |s_{e_m,2}-\bar{s}_{m,2}^*|+|\bar{s}_{m,2}^*-s_2^*|\nonumber\\
    &\leq \order(L_m^{-\frac{1}{2}}\log(T/\delta)) + \left|{\Phi}^{-1}\left(\frac{\omega_m}{\omega_m+h_2}\right)-\Phi^{-1}\left(\frac{\omega^*}{\omega^*+h_2}\right)\right| \tag{\pref{eqn: s_T-s_2-bar}}\\
    &\leq \order(L_m^{-\frac{1}{2}}\log(T/\delta))+\frac{1}{\philow}\left|\frac{\omega_m}{\omega_m+h_2}-\frac{\omega^*}{\omega^*+h_2}\right| \tag{\pref{assum:bounded-density}}\\
    &\leq \order(L_m^{-\frac{1}{2}}\log(T/\delta))+\order(|\omega_m-\omega^*|)\nonumber\\
    &\leq \order(L_m^{-\frac{1}{4}}\log(T/\delta)), \label{eqn: s_T_2-converge}
\end{align}
where the last inequality is due to~\pref{eqn: beta-convergence}. Applying $m=M$ shows that $|s_{e_M,2}-s_2^*|=|s_{T,2}-s_2^*|\leq \order(T^{-\frac{1}{4}}\log(T/\delta))$, which finishes the proof for the convergence of Agent 2. 

In addition, according to~\pref{eqn: s_t_m_convergence} and Cauchy-Schwarz inequality, we know that within epoch $I_m$,
\begin{align}\label{eqn: sum_diff}
    \frac{\sigma_m}{2}\mathbb{E}\left[\sum_{t\in I_m}|s_{t,2}-\bar{s}_{m,2}^*|^2\right]\leq \order(\log^2 T) \Rightarrow \mathbb{E}\left[\sum_{t\in I_m}|s_{t,2}-\bar{s}_{m,2}^*|\right]\leq \order(\sqrt{L_m}\log T).
\end{align}
In addition, based on the boundedness of $s_{t,2}$ and $\bar{s}_{m,2}^*$, according to Hoeffding-Azuma's inequality, similar to~\pref{eqn: azuma}, with probability at least $1-\delta$, we know that for all $m\in[M]$,
\begin{align}\label{eqn:s-2-epoch-m-bar-star}
    \sum_{t\in I_m}|s_{t,2}-\bar{s}_{m,2}^*| - \mathbb{E}\left[\sum_{t\in I_m}|s_{t,2}-\bar{s}_{m,2}^*|\right] \leq \order\left(\sqrt{L_m\log\frac{M}{\delta}}\right).
\end{align}

Therefore, combining~\pref{eqn:s-2-epoch-m-bar-star} and~\pref{eqn: s_T_2-converge}, with probability at least $1-\delta$, for all $m\in[M]$,
\begin{align}
    \sum_{t\in I_m}|s_{t,2}-s_{2}^*|&\leq \sum_{t\in I_m}\left(|s_{t,2}-\bar{s}_{m,2}^*|+ |\bar{s}_{m,2}^*-s_2^*|\right)\nonumber\\
    &\leq \order\left(\sqrt{L_m\log\frac{M}{\delta}}\right)+\order\left(L_m^{\frac{3}{4}}\log(T/\delta)\right)=\order\left(L_m^{\frac{3}{4}}\log(T/\delta)\right).\label{eqn: sum_diff_epoch}
\end{align}

For Agent $1$, as $s_{m,1}=\wh{\Phi}_{m-1}^{-1}\left(\frac{h_2+p_1}{h_1+p_1}\right)$, using~\pref{lem: hoeffding}, we know that with probability at least $1-\delta$, for any $m\in [M]$,
\begin{align}\label{eqn: decentralized-s-1-convergence}
    |s_{m,1}-s_1^*| = \left|\wh{\Phi}_{m-1}^{-1}\left(\frac{h_2+p_1}{h_1+p_1}\right)-\Phi^{-1}\left(\frac{h_2+p_1}{h_1+p_1}\right)\right|\leq C_0\sqrt{\frac{\log(2TD/\delta)}{L_{m-1}}}=\order\left(\sqrt{\frac{\log(2T\dup/\delta)}{2^{m}}}\right) 
\end{align}
Again, setting $m=M$ proves the convergence of $s_{M,1}$.

Finally, we analyze the regret of Agent 1. For $t\in I_m$, define
\begin{align*}
    &\wh{H}_{t,1}^{\contract}(s_{1},s_{2})=h_1(\benchs_{t,1}-d_t)^++p_1(\benchs_{t,1}-d_t)^--\omega_m(s_{2}-d_t)^-,\\
    &H_{t,1}^{\contract}(s_{1},s_{2})=\mathbb{E}_{x\sim\calD}\left[h_1(\wh{s}_{t,1}-x)^++p_1(\wh{s}_{t,1}-x)^-\right]-\omega_m\mathbb{E}_{x\sim\calD}\left[(s_{2}-x)^-\right],
\end{align*}
where $\benchs_{t,1}=s_1$ if $s_2>d_{t-1}$ and $\benchs_{t,1}=s_1+s_2-d_{t-1}$ otherwise.
With the choice $\delta=\frac{1}{T^2}$, direct calculation shows that, for all $s_1$, $\wh{H}_{t,1}^{\contract}(s_1,\cdot)$ and $H_{t,1}^{\contract}(s_1,\cdot)$ are $\order(\log T)$-Lipschitz according to~\pref{lem: beta-estimation}. Based on~\pref{lem: real-demand-coupling}, we know that within each epoch, except for constant number of rounds, Agent 1 can achieve her intended inventory level and $o_t=d_t$. Therefore, by choosing $\delta=\frac{1}{T^2}$, we know that
\begin{align*}
    &\mathbb{E}\left[\Reg_{T,1}\right]\\
    &=\mathbb{E}\left[\sum_{t=1}^T\wt{H}_{t,1}^{\contract}\right]-\mathbb{E}\left[\sum_{t=1}^T\bench_{t,1}^{\contract}(\benchs_1^*)\right]\\
    &\leq \mathbb{E}\left[\sum_{m=1}^M\sum_{t\in I_m}\wh{H}_{t,1}^{\contract}(s_{m,1},s_{t,2})\right]-\min_{s_1}\mathbb{E}\left[\sum_{t=1}^T\wh{H}_{t,1}^{\contract}(s_{1}, s_{t,2})\right] +\otil(1) \tag{\pref{lem: real-demand-coupling}}\\
    &\leq \mathbb{E}\left[\sum_{m=1}^M\sum_{t\in I_m}\wh{H}_{t,1}^{\contract}(s_{m,1},s_{2}^*)\right]-\min_{s_1}\mathbb{E}\left[\sum_{t=1}^T\wh{H}_{t,1}^{\contract}(s_{1}, s_{2}^*)\right]+\mathbb{E}\left[\sum_{m}\sum_{t\in I_m}\otil(|s_{t,2}-s_2^*|)\right]+\otil(1) \tag{Lipschitzness of $\wh{H}_{t,1}^{\contract}(s_1,\cdot)$}\\
    &\leq \mathbb{E}\left[\sum_{m=1}^M\sum_{t\in I_m}H_{t,1}^{\contract}(s_{m,1},s_2^*)\right]-\min_{s_1}\mathbb{E}\left[\sum_{m=1}^M\sum_{t\in I_m}H_{t,1}^{\contract}(s_1,s_2^*)\right] + \otil(T^{\frac{3}{4}}) \tag{\pref{eqn: sum_diff_epoch}}\\
    &\leq \mathbb{E}\left[\sum_{m=1}^M\sum_{t\in I_m}H_{t,1}^{\contract}(s_{m,1},s_2^*)\right]-\mathbb{E}\left[\sum_{m=1}^M\sum_{t\in I_m}H_{t,1}^{\contract}(s_1^*,s_2^*)\right] + \otil(T^{\frac{3}{4}})
     \tag{$s_1^*$ is the minimizer of ${H}_{t,1}^{\contract}(s_1,s_{2}^*)$}\\
    &\leq \mathbb{E}\left[\sum_{m=1}^M\otil\left(2^m\cdot |s_{m,1}-s_1^*|\right)\right] +\otil(T^{\frac{3}{4}}) \tag{$\max\{h_1,p_1\}$-Lipschitzness of $H_{t,1}^{\contract}(\cdot,s_2^*)$}\\
    &\leq \mathbb{E}\left[\sum_{m=1}^M\otil\left(\sqrt{2^m}\right)\right] +\otil(T^{\frac{3}{4}})\tag{\pref{eqn: decentralized-s-1-convergence}}\\
    &\leq \otil(T^{\frac{3}{4}}),
\end{align*}
which finishes the proof.
\end{proof}

\subsection{Proof of~\pref{thm: coupling-overall-regret}}\label{app: coupling-overall-regret}
\begin{proof}
Note that from~\pref{eqn: reg_central} and the convexity of $H(s_1^*, s_2)$ in $s_2$, let $\bar{s}_{m,2}$ be the output of the contract maker~\pref{alg:ogd-third-party}, we know that
\begin{align}\label{eqn:reg-s-bar}
    \mathbb{E}\left[\sum_{t\in I_m}H(s_1^*, \bar{s}_{m,2})-\sum_{t\in I_m}H(s_1^*,s_2^*)\right] \leq \order\left(\sqrt{L_m\log T}\right).
\end{align}

Let $\bar{s}_{m,2}^*=\Phi^{-1}(\frac{\omega_m}{\omega_m+h_2})$. First, we bound $\sum_{t\in I_m}|s_{t,2}-\bar{s}_{m,2}^*|$ for each epoch $m$. Note that according to the analysis in~\pref{eqn: s_t_m_convergence}, we know that
\begin{align*}
    \mathbb{E}\left[\sum_{t\in I_m}H_{t,2}^{\contract}(s_{t,2})-\sum_{t\in I_m}H_{t,2}^{\contract}(\bar{s}_{m,2}^*)\right]\leq \order(\log^2 T).
\end{align*}
According to~\pref{eqn: sum_diff} and~\pref{eqn:s-2-epoch-m-bar-star}, we know that with probability $1-\delta$, for all $m\in[M]$,
\begin{align}\label{eqn:s-t-2-s-bar-star}
    \sum_{t\in I_m}\left|s_{t,2}-\bar{s}_{m,2}^*\right|\leq \order\left(\sqrt{L_m}\log\frac{T}{\delta}\right).
\end{align}

Next, we bound $\sum_{t\in I_m}|\bar{s}_{m,2}^*-\bar{s}_{m,2}|$ for each epoch $m$. Define $\wt{s}_{m,2} =\wh{\Phi}_{m-1}^{-1}(\frac{\omega_m}{\omega_m+h_2})$. Note that $\omega_m=\frac{h_2\wh{\Phi}_{m-1}(\bar{s}_{m,2})}{1-\wh{\Phi}_{m-1}(\bar{s}_{m,2})}$. Let $\{d_k\}_{k=1}^{L_{m-1}}$ be the demand samples realized in epoch $I_{m-1}$ and let $\{d_k'\}_{k=1}^{L_{m-1}}$ be the sorted sequence in non-decreasing order. Then, with probability at least $1-\delta$, for each $m\in[M]$,
\begin{align}
    &\sum_{t\in I_m}|\wt{s}_{m,2}-\bar{s}_{m,2}| = L_m\cdot \left|\wh{\Phi}_{m-1}^{-1}\left(\wh{\Phi}_{m-1}(\bar{s}_{m,2})\right)-\bar{s}_{m,2}\right| \nonumber\\
    &\leq L_m\cdot\max_{k\in [L_{m-1}]}|d_k'-d_{k-1}'| \nonumber\\
    &\leq L_m\cdot\frac{1}{\philow} \max_{k\in [L_{m-1}]}\left|\Phi(d_k')-\Phi(d_{k-1}')\right| \nonumber\\
    &\leq \frac{2}{\philow}\log \frac{L_{m-1} MT}{\delta},\label{eqn:s-tilde-s-bar}
\end{align}
where the last inequality is due to~\pref{eqn: order-stats-gap}.

Then, we bound the term $\sum_{t\in I_m}|\bar{s}_{m,2}^*-\wt{s}_{m,2}|$. According to~\pref{lem: hoeffding}, with probability at least $1-\delta$, for each $m\in[M]$,
\begin{align}\label{eqn:s-bar-star-s-tilde}
    \sum_{t\in I_m}|\bar{s}_{m,2}^*-\wt{s}_{m,2}| = L_m\left|\Phi^{-1}\left(\frac{\omega_m}{\omega_m+h_2}\right)-\wh{\Phi}_{m-1}^{-1}\left(\frac{\omega_m}{\omega_m+h_2}\right)\right|\leq \otil\left(\frac{L_m}{\gamma}\sqrt{\frac{\log(1/\delta)}{L_{m-1}}}\right).
\end{align}

Therefore, according to the Lipschitzness of $H(s_1,s_2)$ in both parameters, picking $\delta=\frac{1}{T^3}$, we can obtain that
\begin{align*}
    &\mathbb{E}\left[\sum_{t\in I_m}H(s_1^*, s_{t,2})-\sum_{t\in I_m}H(s_1^*,s_2^*)\right]\\
    &= \mathbb{E}\left[\sum_{t\in I_m}H(s_1^*, s_{t,2})-\sum_{t\in I_m}H(s_1^*,\bar{s}_{m,2}^*)\right] + \mathbb{E}\left[\sum_{t\in I_m}H(s_1^*, \bar{s}_{m,2}^*)-\sum_{t\in I_m}H(s_1^*,\wt{s}_{m,2})\right] \\
    &\qquad + \mathbb{E}\left[\sum_{t\in I_m}H(s_1^*, \wt{s}_{m,2})-\sum_{t\in I_m}H(s_1^*,\bar{s}_{m,2})\right] + \mathbb{E}\left[\sum_{t\in I_m}H(s_1^*, \bar{s}_{m,2})-\sum_{t\in I_m}H(s_1^*,s_2^*)\right]\\
    & \leq \sum_{t\in I_m}\left(\otil\left(|s_{t,2}-\bar{s}_{m,2}^*|\right) + \otil\left(|\bar{s}_{m,2}^*-\wt{s}_{m,2}|\right) + \otil\left(|\wt{s}_{m,2}-\bar{s}_{m,2}|\right)\right) + \order(1) \\
    &\qquad +
    \mathbb{E}\left[\sum_{t\in I_m}H(s_1^*,\bar{s}_{m,2})-\sum_{t\in I_m}H(s_1^*,s_2^*)\right] \\
    &\leq \otil(\sqrt{L_m}),
\end{align*}
where the last inequality is by combining~\pref{eqn:reg-s-bar},~\pref{eqn:s-t-2-s-bar-star},~\pref{eqn:s-tilde-s-bar} and~\pref{eqn:s-bar-star-s-tilde}. Finally, note that from~\pref{eqn: decentralized-s-1-convergence}, we know that for all $m\in[M]$, $\mathbb{E}[|s_{m,1}-s_1^*|]\leq \otil(1/\sqrt{L_m})$. Again using the Lipschitzness of $H(s_1,s_2)$, we can obtain that for all $m\in[M]$,
\begin{align*}
    \mathbb{E}\left[\sum_{t\in I_m}H(s_{m,1}, s_{t,2})-\sum_{t\in I_m}H(s_1^*,s_2^*)\right] \leq \otil(\sqrt{L_m}).
\end{align*}
Taking summation over all $m\in [M]$, we know that
\begin{align*}
    \mathbb{E}\left[\sum_{m=1}^M\sum_{t\in I_m}\left(H(s_{m,1},s_{t,2})-H(s_1^*,s_2^*)\right)\right]\leq \otil(\sqrt{T}).
\end{align*}
Finally, according to~\pref{lem: real-demand-coupling}, as both agents only changes their decision $\otil(1)$ number of rounds and within each epoch, we know that only constant number of round such that the desired inventory level can not be realized and $d_t\ne o_t$. Therefore, we can obtain that
\begin{align*}
    \mathbb{E}\left[\Reg_{T}\right]&= \mathbb{E}\left[\sum_{m=1}^M\sum_{t\in I_m}\left(\wt{H}_t-{H}(s_1^*,s_2^*)\right)\right]\\
    &\leq \mathbb{E}\left[\sum_{m=1}^M\sum_{t\in I_m}\left({H}(s_{m,1},s_{t,2})-{H}(s_1^*,s_2^*)\right)\right] + \otil(1) \leq \otil(\sqrt{T}),
\end{align*}
which finishes the proof.
\end{proof}
\section{Auxiliary lemmas}\label{app: auxiliary lemma}
In this section, we introduce several lemmas that are useful in the analysis. The first three lemmas show the properties of the empirical density function and the true density function. Suppose in epoch $I$ with $|I|=L$, we receive the demand $d_1,d_2,\dots,d_L$. 
Define the empirical cumulative density function $\hat{\Phi}_L(\cdot)$ constructed by $\{d_i\}_{i=1}^L$ as
\begin{align}\label{eqn: empirical density function L}
    \wh{\Phi}_L(x)=\frac{1}{L}\sum_{i=1}^L\mathbb{I}\{d_i \leq x\},
\end{align}
and the corresponding inverse cumulative density function $\hat{\Phi}_L^{-1}(\cdot)$:
\begin{align}\label{eqn: inverse-empirical-density}
    \hat{\Phi}_L^{-1}(\kappa) = \min\left\{z: \frac{1}{L}\sum_{i=1}^L\mathbb{I}\{d_i\leq z\}\geq \kappa\right\},
\end{align}
where $\kappa\in[0,1]$. The following Dvoretzky–Kiefer–Wolfowitz lemma shows the concentration between $\hat{\Phi}_L(a)$ and $\Phi(a)$ for any $a\in \mathbb{R}$.

\begin{lemma}{(Dvoretzky–Kiefer–Wolfowitz lemma)}\label{lem: DKW}
Let $\{d_i\}_{i=1}^T$ be $T$ i.i.d. samples drawn from distribution $\calD$ with cumulative density function $\Phi$. Define the empirical cumulative density function $\wh{\Phi}_L(\cdot)$ as shown in~\pref{eqn: empirical density function L}, $L\in[T]$. Then with probability at least $1-\delta$, for any $x\in \mathbb{R}$, 
\begin{align*}
    \left|\wh{\Phi}_L(x)-\Phi(x)\right|\leq \sqrt{\frac{1}{2L}\ln\frac{2}{\delta}}.
\end{align*}
Moreover, by applying a union bounded over all $L\in[T]$, with probability at least $1-\delta$, for any $x\in\mathbb{R}$ and $L'\in[T]$, it holds that
\begin{align*}
    \left|\wh{\Phi}_{L'}(x)-\Phi(x)\right|\leq \sqrt{\frac{1}{2L'}\ln\frac{2T}{\delta}}.
\end{align*}
\end{lemma}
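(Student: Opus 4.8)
The plan is to derive the first inequality directly from the classical Dvoretzky–Kiefer–Wolfowitz–Massart inequality, and then to upgrade it, via a union bound over sample sizes, to a statement that holds simultaneously for all $L'\in[T]$.

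First, recall the DKW–Massart inequality: if $\wh{\Phi}_L$ is the empirical CDF built from $L$ i.i.d.\ samples of a distribution with CDF $\Phi$, then for every $\epsilon>0$,
\begin{align*}
    \Pr\!\left[\sup_{x\in\mathbb{R}}\left|\wh{\Phi}_L(x)-\Phi(x)\right|>\epsilon\right]\le 2\exp(-2L\epsilon^2).
\end{align*}
Setting the right-hand side equal to $\delta$ and solving for $\epsilon$ yields $\epsilon=\sqrt{\tfrac{1}{2L}\ln\tfrac{2}{\delta}}$, which is exactly the first claim; note that the bound in fact holds uniformly over $x\in\mathbb{R}$, which is the form used in the applications of the lemma. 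This step needs no real work beyond invoking the inequality, so I would not reprove DKW from scratch here, as it is a standard result and the lemma is stated only as an auxiliary fact.

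For the second claim, fix $L'\in[T]$ and apply the first claim with failure probability $\delta/T$ in place of $\delta$: with probability at least $1-\delta/T$,
\begin{align*}
    \sup_{x\in\mathbb{R}}\left|\wh{\Phi}_{L'}(x)-\Phi(x)\right|\le\sqrt{\tfrac{1}{2L'}\ln\tfrac{2T}{\delta}}.
\end{align*}
Taking a union bound over the $T$ values $L'=1,2,\dots,T$, the total failure probability is at most $T\cdot(\delta/T)=\delta$, so with probability at least $1-\delta$ the displayed bound holds for all $L'\in[T]$ and all $x\in\mathbb{R}$ simultaneously.

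There is no substantive obstacle in this argument; the only point that requires care is that the first inequality must be used in its uniform-in-$x$ (supremum) form, so that the union bound is taken only over the finitely many sample sizes $L'\in[T]$ rather than over $x$ — otherwise the $\sqrt{\ln(2T/\delta)}$ factor would have to be inflated to account for a discretization of $\mathbb{R}$. Using DKW–Massart sidesteps this entirely.
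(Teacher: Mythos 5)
Your proposal is correct and matches the paper's treatment exactly: the paper states this lemma as a direct consequence of the standard DKW--Massart inequality (giving the uniform-in-$x$ bound with the $2\exp(-2L\epsilon^2)$ tail) and obtains the second display by the same union bound over the $T$ sample sizes with failure probability $\delta/T$ each. Your remark that the uniform (supremum) form is what makes the union bound over only $L'$ suffice is also the right observation.
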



The next lemma shows the stability of $\wh{\Phi}_L^{-1}(\cdot)$ on consecutive grids of length $\frac{1}{T}$ over $[0,1]$, which turns out to be important to prove our main lemma~\pref{lem: hoeffding}.

\begin{lemma}\label{lem: inverse-close}
Let $\{d_i\}_{i=1}^T$ be $T$ i.i.d. samples from distribution $\calD$ satisfying~\pref{assum:bounded-density}. Let $\hat{\Phi}_L(\cdot)$ be the empirical cumulative density function constructed by $\{d_i\}_{i=1}^L$ as shown in~\pref{eqn: empirical density function L}. The inverse of the empirical density function $\wh{\Phi}^{-1}(\cdot)$ is defined in~\pref{eqn: inverse-empirical-density}.
Then with probability at least $1-\delta$, for any $\kappa\in \left\{\frac{i}{T}\right\}_{i=0}^{T-1}$ and any $L\in [T]$,
\begin{align*}
    \wh{\Phi}_L^{-1}\left(\kappa+\frac{1}{T}\right)-\wh{\Phi}_{L}^{-1}\left(\kappa\right)\leq \frac{2}{\philow L}\ln\frac{LT}{\delta}.
\end{align*}
\end{lemma}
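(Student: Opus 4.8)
The plan is to reduce the statement to a bound on the maximal spacing between consecutive order statistics of the $L$ samples (together with the gap below the smallest sample). For $\kappa\in\{i/T\}_{i=0}^{T-1}$ with $\kappa>0$, the definition in \pref{eqn: inverse-empirical-density} gives $\wh{\Phi}_L^{-1}(\kappa)=d_{(\lceil\kappa L\rceil)}$, the $\lceil\kappa L\rceil$-th order statistic of $\{d_i\}_{i=1}^L$; since $L\le T$ we have $(\kappa+\tfrac1T)L\le \kappa L+1$, hence $\lceil(\kappa+\tfrac1T)L\rceil\le\lceil\kappa L\rceil+1$, so $\wh{\Phi}_L^{-1}(\kappa+\tfrac1T)-\wh{\Phi}_L^{-1}(\kappa)\le d_{(j+1)}-d_{(j)}$ with $j=\lceil\kappa L\rceil$ (note $\lceil(\kappa+\tfrac1T)L\rceil\le L$ because $\kappa+\tfrac1T\le1$, so this point is always a genuine order statistic). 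For $\kappa=0$, adopting the convention that the minimum in \pref{eqn: inverse-empirical-density} is taken over the support $[\dlow,\dup]$ from \pref{assum:bounded} (so $\wh{\Phi}_L^{-1}(0)=\dlow$), the left-hand side is $d_{(1)}-\dlow$. Thus it suffices to show that with probability $1-\delta$, for all $L\in[T]$ simultaneously, $\max\big\{\,d_{(1)}-\dlow,\ \max_{1\le j\le L-1}(d_{(j+1)}-d_{(j)})\,\big\}\le \tfrac{2}{\gamma L}\ln\tfrac{LT}{\delta}$.

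Next I would push the problem through the CDF. Setting $u_i=\Phi(d_i)$, the $u_i$ are i.i.d.\ uniform on $[0,1]$ (probability integral transform), and by \pref{assum:bounded-density} the density is at least $\gamma$ on $[\dlow,\dup]$, so $\gamma(b-a)\le\Phi(b)-\Phi(a)$ for any $a<b$ in $[\dlow,\dup]$; applied to consecutive order statistics this yields $d_{(j+1)}-d_{(j)}\le \tfrac1\gamma(u_{(j+1)}-u_{(j)})$ and $d_{(1)}-\dlow\le\tfrac1\gamma(u_{(1)}-u_{(0)})$ with $u_{(0)}:=0$. It then remains to bound the uniform spacings, which is classical: each spacing $u_{(i)}-u_{(i-1)}$ is $\mathrm{Beta}(1,L)$-distributed, so $\Pr[u_{(i)}-u_{(i-1)}>s]=(1-s)^L\le e^{-Ls}$. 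A union bound over the $L$ relevant spacings $i=1,\dots,L$ gives $\Pr[\max_i(u_{(i)}-u_{(i-1)})>s]\le Le^{-Ls}$; choosing $s=s_L:=\tfrac1L\ln\tfrac{LT}{\delta}$ makes this $\le\delta/T$, and a further union bound over $L\in[T]$ gives, with probability $1-\delta$, that $d_{(j+1)}-d_{(j)}\le \tfrac1\gamma s_L=\tfrac{1}{\gamma L}\ln\tfrac{LT}{\delta}\le\tfrac{2}{\gamma L}\ln\tfrac{LT}{\delta}$ for all $L\in[T]$ and all $j$ (and likewise for $d_{(1)}-\dlow$). Combining with the first paragraph proves the lemma.

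The routine ingredients are the ceiling arithmetic, the Beta tail for uniform spacings, and the two nested union bounds; none of this needs concentration machinery beyond the explicit spacing law, and the factor $2$ in the statement is comfortable slack. The only mildly delicate point is the bookkeeping at $\kappa=0$: one must pin down the domain over which the minimum in \pref{eqn: inverse-empirical-density} is taken so that $\wh{\Phi}_L^{-1}(0)$ is well defined and equal to $\dlow$, and observe that because $\kappa+\tfrac1T\le1$ the upper endpoint is never pushed past the largest sample, so only the \emph{lower} edge gap $d_{(1)}-\dlow$ (and not $\dup-d_{(L)}$) ever contributes. Everything else is a direct computation.
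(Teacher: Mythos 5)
Your proof is correct and follows essentially the same route as the paper's: reduce the increment to a single consecutive spacing of the order statistics via the ceiling/step structure of $\wh{\Phi}_L^{-1}$, push through $\Phi$ to turn it into a uniform spacing which is $\mathrm{Beta}(1,L)$-distributed, use the explicit tail $(1-s)^L\le e^{-Ls}$, and union-bound over the $L$ spacings and over $L\in[T]$. The one place where you are actually \emph{more} careful than the paper is the $\kappa=0$ edge case: the paper's proof implicitly takes $d_0=0$ and then writes $d_1-d_0\le\frac{1}{\philow}(\Phi(d_1)-\Phi(d_0))$, which is not valid because the density vanishes on $[0,\dlow)$; your convention $\wh{\Phi}_L^{-1}(0)=\dlow$ and the bound on $d_{(1)}-\dlow$ fixes this and is what the lemma actually requires to be true at $\kappa=0$.
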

\begin{proof}
    Fix any $L\in[T]$. Without loss of generality, we assume that $d_1\leq d_2\leq \ldots\leq d_L$ be the ordered realized demand and let $d_0=0$. According to the definition of $\wh{\Phi}_L^{-1}$, we know that for $a\in (\frac{i-1}{L},\frac{i}{L}]$, $i\in [L]$
    \begin{align*}
        \wh{\Phi}_L^{-1}(a)=d_i.
    \end{align*}
    Moreover, as $L\leq T$, meaning that $[\kappa,\kappa+\frac{1}{T}]\subseteq (\frac{i-1}{L}, \frac{i+1}{L}]$ for some $i\in [L]$, we know that for any $\kappa\in \left\{\frac{i}{T}\right\}_{i=0}^{T-1}$,
    \begin{align*}
        \wh{\Phi}_L^{-1}\left(\kappa+\frac{1}{T}\right)-\wh{\Phi}_L^{-1}(\kappa) \leq \max_{i\in [L]}\left|d_i-d_{i-1}\right|.
    \end{align*}
    Note that according to the property of cumulative density function, $\Phi(x)$ with $x\sim\calD$ follows the uniform distribution $\mathcal{U}[0,1]$. According to the property of the ordered statistics of $\mathcal{U}[0,1]$, let $\Delta_k=\Phi(d_{k})-\Phi(d_{k-1})$ be the gap between the $k-1$-th and the $k$-th ordered statistics, $k\in [L]$ and we have $\Delta_k$ follows the Beta distribution $\Delta_k\sim \text{Beta}(1,L)$. Therefore, for any $k\in [L]$,
    \begin{align*}
        \P\left[\Delta_k\geq r\right] = \int_{r}^1L(1-u)^{L-1}du = (1-r)^L.
    \end{align*}
    Let $r=\frac{1}{L}\ln \frac{L}{\delta}$. Then we have
    \begin{align*}
        \P\left[\exists k\in [L], \Delta_k\geq r\right] &\leq\sum_{k=1}^L\P\left[\Delta_k\geq r\right] \\
        &= L(1-r)^L\\
        &\leq L\left(1-\frac{1}{L}\ln\frac{L}{\delta}\right)^L \\
        &\leq L\left(\left(1-\frac{1}{L}\ln\frac{L}{\delta}\right)^{\frac{L}{\ln \frac{L}{\delta}}}\right)^{\ln \frac{L}{\delta}} \\
        &\leq \delta.
    \end{align*}
    Therefore, with probability at least $1-\delta$, we have $\Delta_k\leq \frac{1}{L}\ln \frac{L}{\delta}$, for all $k\in [L]$. According to the assumption that $\phi(d)\geq \philow$, we have with probability $1-\delta$,
    \begin{align}\label{eqn: order-stats-gap}
        \max_{i\in [L]}|d_i-d_{i-1}|\leq \max_{i\in [L]}\frac{1}{\philow}\cdot\left|\Phi(d_i)-\Phi(d_{i-1})\right|\leq \frac{1}{\philow L}\ln\frac{L}{\delta}.
    \end{align}
    Taking a union bound over all possible choices of $\kappa$ and $L\in[T]$ gives the conclusion.
\end{proof}

Now we are ready to prove~\pref{lem: hoeffding}. Note that different from the concentration result which holds for a specific known $\kappa$ (e.g. Proposition $3$ in~\citep{MOR21:nonparametric}), with the help of~\pref{lem: inverse-close},~\pref{lem: hoeffding} proves that with high probability, for \emph{all} $\kappa\in [0,1]$, we have the difference between $\hat{\Phi}_L^{-1}(\kappa)$ and $\Phi^{-1}(\kappa)$ bounded by $\otil(1/\sqrt{L})$, which is what we require in the decentralized setting with contract in the coupling model introduced in~\pref{sec: coupling} as $\frac{\beta_m}{\beta_m+h_2}$ can take arbitrary values between $[0,1]$. 

\begin{lemma}\label{lem: hoeffding}
    Let $\{d_i\}_{i=1}^T$ be $T$ i.i.d. samples from distribution~$\calD$ which satisfies~\pref{assum:bounded-density}. Let $\wh{\Phi}_L(\cdot)$ be the empirical cumulative density function constructed by $\{d_i\}_{i=1}^L$ as shown in~\pref{eqn: empirical density function L}. The inverse of the empirical density function $\wh{\Phi}^{-1}(\cdot)$ is defined in~\pref{eqn: inverse-empirical-density}. Then with probability at least $1-\delta$, for any $\kappa\in [0,1]$ and any $L\in[T]$, it holds that
    \begin{align*}
        &\left|\wh{\Phi}_L^{-1}(\kappa)-\Phi^{-1}(\kappa)\right|\leq  C_0\sqrt{\frac{\ln\frac{TD}{\delta}}{L}},\\
        &\left|\Phi\left(\wh{\Phi}_L^{-1}(\kappa)\right)-\kappa\right|\leq C_0\sqrt{\frac{\ln \frac{TD}{\delta}}{L}},
    \end{align*}
    where $C_0>0$ are some universal constants.
\end{lemma}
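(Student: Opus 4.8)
The plan is to condition on a single high-probability event and then argue deterministically, combining a uniform Glivenko--Cantelli-type deviation bound with the two lemmas already proved. First I would invoke the Dvoretzky--Kiefer--Wolfowitz bound (\pref{lem: DKW}): with probability at least $1-\delta/2$, for every $x\in\mathbb{R}$ and every $L\in[T]$ we have $|\wh{\Phi}_L(x)-\Phi(x)|\le \epsilon_L$, where $\epsilon_L:=\sqrt{\tfrac{1}{2L}\ln\tfrac{4T}{\delta}}$. (If one instead re-derives this uniform bound by a plain Hoeffding inequality and a union bound over a $1/T$-net of the support $[\dlow,\dup]$ together with monotonicity, the net has size $\order(T\dup)$, which is presumably the origin of the $\dup$ inside the logarithm in the statement; in any case this factor is harmless since $C_0$ may depend on the problem constants.) Simultaneously I would condition on the event of \pref{lem: inverse-close}, which holds with probability at least $1-\delta/2$ and controls the oscillation of $\wh{\Phi}_L^{-1}$ between consecutive points of the grid $\{i/T\}_{i=0}^{T-1}$ by $\tfrac{2}{\gamma L}\ln\tfrac{LT}{\delta}$. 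A union bound places us on an event of probability at least $1-\delta$, on which everything below is deterministic.

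Second, I would prove the inverse bound on the grid by a standard sandwich. Fix a grid point $\kappa$ and an $L$, and set $x^\star=\Phi^{-1}(\kappa)\in[\dlow,\dup]$ (using \pref{assum:bounded}, with the usual convention that $\Phi^{-1}$ takes values in the support). For the upper side, let $\kappa^+=\min\{\kappa+\epsilon_L,1\}$ and $z^+=\Phi^{-1}(\kappa^+)$; then $\wh{\Phi}_L(z^+)\ge\Phi(z^+)-\epsilon_L\ge\kappa$ (when $\kappa+\epsilon_L\ge1$ we take $z^+=\dup$ and use $\wh{\Phi}_L(\dup)=1\ge\kappa$), so by definition of the empirical inverse in \pref{eqn: inverse-empirical-density}, $\wh{\Phi}_L^{-1}(\kappa)\le z^+$. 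For the lower side, any $z$ with $\wh{\Phi}_L(z)\ge\kappa$ satisfies $\Phi(z)\ge\wh{\Phi}_L(z)-\epsilon_L\ge\kappa-\epsilon_L$, hence $z\ge z^-:=\Phi^{-1}(\max\{\kappa-\epsilon_L,0\})$, so $\wh{\Phi}_L^{-1}(\kappa)\ge z^-$. Since $\phi\ge\gamma$ on $[\dlow,\dup]$ by \pref{assum:bounded-density}, the map $\Phi^{-1}$ is $\tfrac1\gamma$-Lipschitz, giving $z^+-x^\star\le\epsilon_L/\gamma$ and $x^\star-z^-\le\epsilon_L/\gamma$, so $|\wh{\Phi}_L^{-1}(\kappa)-\Phi^{-1}(\kappa)|\le\epsilon_L/\gamma$ at every grid point.

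Third, I would extend to arbitrary $\kappa\in[0,1]$ by interpolation. Choose the nearest grid point $\kappa_0$, so $|\kappa-\kappa_0|\le1/T$. Then $|\wh{\Phi}_L^{-1}(\kappa)-\wh{\Phi}_L^{-1}(\kappa_0)|\le\tfrac{2}{\gamma L}\ln\tfrac{LT}{\delta}$ by \pref{lem: inverse-close}, while $|\Phi^{-1}(\kappa)-\Phi^{-1}(\kappa_0)|\le\tfrac{1}{\gamma T}$ by Lipschitzness, so the triangle inequality with the grid bound yields
\[
\bigl|\wh{\Phi}_L^{-1}(\kappa)-\Phi^{-1}(\kappa)\bigr|\ \le\ \frac{\epsilon_L}{\gamma}+\frac{2}{\gamma L}\ln\frac{LT}{\delta}+\frac{1}{\gamma T}.
\]
Using $L\le T$, the middle term is $\order\bigl(\tfrac1L\ln(LT/\delta)\bigr)$, which is at most $\order(\epsilon_L/\gamma)$ whenever $L\gtrsim\ln(T/\delta)$, and for smaller $L$ the asserted bound $C_0\sqrt{\ln(T\dup/\delta)/L}$ is already $\ge\dup$ and hence vacuous (since $|\wh{\Phi}_L^{-1}(\kappa)-\Phi^{-1}(\kappa)|\le\dup$); absorbing $\gamma,\dup$ into the universal constant $C_0$ gives the first claim for all $\kappa$ and $L$ at once. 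The second claim follows immediately: $|\Phi(\wh{\Phi}_L^{-1}(\kappa))-\kappa|=|\Phi(\wh{\Phi}_L^{-1}(\kappa))-\Phi(\Phi^{-1}(\kappa))|\le\Gamma\,|\wh{\Phi}_L^{-1}(\kappa)-\Phi^{-1}(\kappa)|$ by the upper bound on the density in \pref{assum:bounded-density}, again absorbing $\Gamma$ into $C_0$.

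The main obstacle is precisely the uniformity over all $\kappa\in[0,1]$: for a fixed $\kappa$ this is a one-line Hoeffding/DKW estimate, but because $\wh{\Phi}_L^{-1}$ is a step function a crude $\epsilon$-net argument is not self-evidently enough. The resolution is that \pref{lem: inverse-close} bounds the inter-grid oscillation of $\wh{\Phi}_L^{-1}$ at the scale $\order\bigl(\tfrac1{\gamma L}\ln(LT/\delta)\bigr)$, which is dominated by the statistical fluctuation $\epsilon_L/\gamma$, so a net of size $T$ over $\kappa$ (matched to a net of size $\order(T\dup)$ over $x$ in the deviation bound) already suffices — and this is also where the $\ln(T\dup/\delta)$ dependence comes from. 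The only remaining care is the degenerate behaviour near $\kappa\in\{0,1\}$, handled by clipping $\Phi^{-1}$ to the bounded support $[\dlow,\dup]$ as above.
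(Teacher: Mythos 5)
Your proof is correct, but it takes a genuinely different route from the paper's. The paper does \emph{not} go through DKW on $|\wh{\Phi}_L-\Phi|$: it applies Hoeffding's inequality pointwise to bound $|\Phi(\wh{\Phi}_L^{-1}(\kappa))-\kappa|$ for each grid point $\kappa\in\{i/T\}_{i=0}^{T}$ (by rewriting the event $\wh{\Phi}_L^{-1}(\kappa)\le\Phi^{-1}(\kappa-\xi)$ as a tail event for the empirical CDF at the fixed threshold $\Phi^{-1}(\kappa-\xi)$), then takes a union bound over the grid and over $L$, and finally extends to arbitrary $\kappa$ through the oscillation bound of \pref{lem: inverse-close}. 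The paper also establishes the $|\Phi(\wh{\Phi}_L^{-1}(\kappa))-\kappa|$ bound first and then deduces the quantile bound via the density lower bound $\gamma$, whereas you prove the quantile bound first and derive the other via the density upper bound $\Gamma$; both directions are fine. One observation worth making explicit: your DKW-plus-sandwich argument is already uniform over \emph{all} $\kappa\in[0,1]$, because DKW controls $|\wh{\Phi}_L(x)-\Phi(x)|$ simultaneously for every $x$; nothing in your step 2 actually uses that $\kappa$ is a grid point. So your step 3 (discretize to the nearest grid point and appeal to \pref{lem: inverse-close}) is redundant in your route — it only adds an extra $\order\!\left(\tfrac{1}{\gamma L}\ln\tfrac{LT}{\delta}\right)$ term that you then have to argue is harmless when $L$ is small. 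In contrast, the paper's route genuinely needs \pref{lem: inverse-close}, since its pointwise Hoeffding bound only holds on the finite grid. Dropping your step 3 would give a cleaner proof and would also decouple the lemma from \pref{lem: inverse-close}, which the paper keeps around because it reuses the inter-order-statistic gap bound \pref{eqn: order-stats-gap} elsewhere.
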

\begin{proof} For any fixed $\kappa\in \{\frac{i}{T}\}_{i=0}^{T}$ and $L\in[T]$, we know that
\begin{align}
    &\P\left[\Phi\left(\hat{\Phi}_L^{-1}\left(\kappa\right)\right)-\kappa\leq -\xi\right] \nonumber \\
    &\leq \P\left[\hat{\Phi}_L^{-1}\left(\kappa\right)\leq \Phi^{-1}\left(\kappa-\xi\right)\right] \nonumber \\
    &\leq \P\left[\frac{1}{L}\sum_{i=1}^L\left\{d_i\leq \Phi^{-1}\left(\kappa-\xi\right)\right\}\geq \kappa\right] \tag{according to the definition of $\wh{\Phi}_L^{-1}(\cdot)$}\\
    &\leq \P\left[\frac{1}{L}\sum_{i=1}^L\left\{d_i\leq \Phi^{-1}\left(\kappa-\xi\right)\right\} - \left(\kappa-\xi\right)\geq \xi \right] \leq \exp(-2L\xi^2) \label{eqn: Phi diff},
\end{align}
where the last inequality is by Hoeffding's inequality. On the other hand,
\begin{align*}
    &\P\left[\Phi\left(\hat{\Phi}_L^{-1}\left(\kappa\right)\right)-\kappa\geq \xi\right] \\
    &\leq \P\left[\hat{\Phi}_L^{-1}\left(\kappa\right)\geq \Phi^{-1}\left(\kappa+\xi\right)\right] \\
    &\leq \P\left[\frac{1}{L}\sum_{i=1}^L\left\{d_i\leq \Phi^{-1}\left(\kappa+\xi\right)\right\}< \kappa\right] \tag{according to the definition of $\wh{\Phi}_L^{-1}(\cdot)$}\\
    &\leq \P\left[\frac{1}{L}\sum_{i=1}^L\left\{d_i\leq \Phi^{-1}\left(\kappa+\xi\right)\right\} - \left(\kappa+\xi\right)< -\xi \right] \leq \exp(-2L\xi^2),
\end{align*}
Therefore, we conclude that
\begin{align*}
    \P\left[\left|\Phi\left(\wh{\Phi}_L^{-1}(\kappa)\right)-\kappa\right|\geq \xi\right]\leq 2\exp(-2L\xi^2).
\end{align*}
Therefore, with probability at least $1-\delta$,
\begin{align*}
    \left|\Phi\left(\wh{\Phi}_L^{-1}(\kappa)\right)-\kappa\right|\leq \sqrt{\frac{\ln \frac{2}{\delta}}{2L}}.
\end{align*}
Taking a union bound over all $\kappa\in \{\frac{i}{T}\}_{i=0}^{T}$, with probability at least $1-\delta$, we can obtain that for all $\kappa\in \{\frac{i}{T}\}_{i=0}^{T}$,
\begin{align}\label{eqn: phi-bound}
    \left|\Phi\left(\wh{\Phi}_L^{-1}(\kappa)\right)-\kappa\right|\leq \sqrt{\frac{\ln \frac{2TD}{\delta}}{2L}}.
\end{align}
Next, for any $\kappa\in[0,1]$, let $\kappa_0\geq \kappa, \kappa_1\leq \kappa$ be the real number such that $\kappa_0-\kappa$ and $\kappa-\kappa_1$ is minimized and $\kappa_0,\kappa_1\in \{\frac{i}{T}\}_{i=0}^{T}$, $\kappa_0-\kappa_1=\frac{1}{T}$. Then, according to~\pref{lem: inverse-close}, with probability at least $1-\frac{\delta}{2}$, we have
\begin{align*}
    &\left|\wh{\Phi}_L^{-1}(\kappa)-\Phi^{-1}(\kappa)\right| \\
    &= \left|\wh{\Phi}_L^{-1}(\kappa)-\wh{\Phi}_L^{-1}(\kappa_1)+\wh{\Phi}_L^{-1}(\kappa_1)-\Phi^{-1}(\kappa_1)+\Phi^{-1}(\kappa_1)-\Phi^{-1}(\kappa)\right| \\
    &\leq \left|\wh{\Phi}_L^{-1}(\kappa)-\wh{\Phi}_L^{-1}(\kappa_1)\right|+\left|\wh{\Phi}_L^{-1}(\kappa_1)-\Phi^{-1}(\kappa_1)\right|+\left|\Phi^{-1}(\kappa_1)-\Phi^{-1}(\kappa_0)\right| \tag{$\Phi^{-1}(\kappa_1)\leq \Phi^{-1}(\kappa)\leq \Phi^{-1}(\kappa_0)$} \\
    &\leq \left|\wh{\Phi}_L^{-1}(\kappa_0)-\wh{\Phi}_L^{-1}(\kappa_1)\right|+\left|\wh{\Phi}_L^{-1}(\kappa_1)-\Phi^{-1}(\kappa_1)\right|+\left|\Phi^{-1}(\kappa_1)-\Phi^{-1}(\kappa_0)\right| \\
    &\leq \frac{2}{\philow L}\ln\frac{4LT}{\delta}+\frac{1}{\philow}\left|\Phi\left(\wh{\Phi}_L^{-1}(\kappa_1)\right)-\kappa_1\right|+\left|\Phi^{-1}(\kappa_1)-\Phi^{-1}(\kappa_0)\right| \tag{\pref{lem: inverse-close}}\\
    &\leq \frac{1}{\philow L}\ln\frac{LT}{\delta}+\frac{1}{\philow}\sqrt{\frac{\ln \frac{8TD}{\delta}}{2L}}+\frac{1}{\philow T} \tag{\pref{eqn: phi-bound}}\\
    &\leq C'\sqrt{\frac{\ln \frac{TD}{\delta}}{L}},
\end{align*}
where $C'>0$ is some universal constant.

For $\left|\Phi(\wh{\Phi}_L^{-1}(\kappa))-\kappa\right|$, define $\kappa_0$ and $\kappa_1$ the same as before and we know that with probability at least $1-\frac{\delta}{2}$,
\begin{align*}
    &\left|\Phi\left(\wh{\Phi}_L^{-1}(\kappa)\right)-\kappa\right| \\
    &= \left|\Phi\left(\wh{\Phi}_L^{-1}(\kappa)\right)-\Phi\left(\wh{\Phi}_L^{-1}(\kappa_1)\right)+\Phi\left(\wh{\Phi}_L^{-1}(\kappa_1)\right)-\kappa_1+\kappa_1-\kappa\right|\\
    &\leq \left|\Phi\left(\wh{\Phi}_L^{-1}(\kappa_0)\right)-\Phi\left(\wh{\Phi}_L^{-1}(\kappa_1)\right)\right|+\left|\Phi\left(\wh{\Phi}_L^{-1}(\kappa_1)\right)-\kappa_1\right|+\left|\kappa_1-\kappa\right|\\
    &\leq \phiup\left|\wh{\Phi}_L^{-1}(\kappa_0)-\wh{\Phi}_L^{-1}(\kappa_1)\right|+\sqrt{\frac{\ln \frac{2TD}{\delta}}{2L}}+\frac{1}{T} \tag{\pref{eqn: phi-bound} and Lipschitzness of $\Phi$}\\
    &\leq \frac{2\phiup}{\philow L}\ln\frac{4LT}{\delta}+\sqrt{\frac{\ln\frac{8TD}{\delta}}{2L}}+\frac{1}{T} \tag{\pref{lem: inverse-close}}\\
    &\leq C''\sqrt{\frac{\ln \frac{TD}{\delta}}{L}},
\end{align*}
where $C''>0$ is some universal constant. Taking a union bound over all $\kappa$ and $L\in[T]$ and choosing $C_0=\max\{4C',4C''\}$ finish the proof.
\end{proof}

The last lemma shows the strong convexity of the expectation of the loss function introduced in~\pref{eqn: loss-decouple-agent-2}.
\begin{lemma}\label{lem: strong-convex-loss}
For any $h>0, p>0$, let $f(s)=h\mathbb{E}_{x\sim \calD}\left[(s-x)^+\right]+p\mathbb{E}_{x\sim \calD}\left[(s-x)^-\right]$ where $\calD$ satisfies~\pref{assum:bounded} and~\pref{assum:bounded-density}. Then $f(s)$ is strongly convex in $s$ with strongly convex parameter $\sigma=(h+p)\gamma$ where $\gamma$ is defined in~\pref{assum:bounded-density}.
\end{lemma}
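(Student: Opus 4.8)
The plan is to compute the second derivative of $f$ in closed form and lower bound it by $(h+p)\philow$, then invoke the standard second-order characterization of strong convexity. First I would rewrite the expectations as integrals against the density $\phi$ of $\calD$, which by~\pref{assum:bounded} is supported on $[\dlow,\dup]$: $f(s)=h\int_{\dlow}^{\dup}(s-x)^+\phi(x)\diff x+p\int_{\dlow}^{\dup}(x-s)^+\phi(x)\diff x$. Splitting each integrand at the kink $x=s$ and differentiating, I expect to obtain $f'(s)=h\int_{\dlow}^{s}\phi(x)\diff x-p\int_{s}^{\dup}\phi(x)\diff x=h\Phi(s)-p\bigl(1-\Phi(s)\bigr)=(h+p)\Phi(s)-p$ for $s\in[\dlow,\dup]$, and then $f''(s)=(h+p)\phi(s)$.

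Next I would apply~\pref{assum:bounded-density}, which gives $\phi(s)\geq\philow$ on $[\dlow,\dup]$, hence $f''(s)\geq(h+p)\philow$ there; the standard fact that $f''\geq\sigma$ pointwise implies $\sigma$-strong convexity then yields the claim with $\sigma=(h+p)\philow=(h+p)\gamma$, as $\gamma=\philow$.

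The one place that needs care rather than a routine calculation is justifying the interchange of differentiation and expectation across the kink of $t\mapsto(t)^+$ at $t=0$. The plan is to note that $(\cdot)^+$ is $1$-Lipschitz and that the measure $\phi(x)\diff x$ is atomless, so the endpoint contributions $(s-s)^+\phi(s)$ vanish and the Leibniz rule applies to each one-sided piece $\int_{\dlow}^{s}$ and $\int_{s}^{\dup}$ (equivalently, dominated convergence applies to the difference quotients). Outside $[\dlow,\dup]$ the density lower bound fails, but this is immaterial: $f$ is only ever used on the support in the downstream applications (e.g.\ in~\pref{lem: high-prob-concentration-lemma} and~\pref{thm: decentralized-non-decoupling}), so stating strong convexity on $[\dlow,\dup]$ suffices.
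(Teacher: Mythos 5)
Your proposal is correct and follows the same route as the paper: the paper likewise just computes $\nabla^2 f(s) = (h+p)\phi(s)$ and lower bounds by $(h+p)\philow$ via~\pref{assum:bounded-density}. You simply make explicit the intermediate step $f'(s)=(h+p)\Phi(s)-p$, the justification for differentiating under the integral, and the restriction to the support $[\dlow,\dup]$, which the paper leaves implicit.
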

\begin{proof}
    Taking the second order gradient of $f(s)$, we know that
    \begin{align*}
        &\nabla^2f(s)=(h+p)\phi(x)\geq (h+p)\philow,
    \end{align*}
    where the last inequality is due to~\pref{assum:bounded-density}. This finishes the proof.
\end{proof}

\section{Experiments}\label{app: coupling-exp}
In this section, we show our empirical results for our designed algorithms. Specifically, we verify the empirical performance of~\pref{alg:central-lazy-three} and~\pref{alg:ons-lazy-three-non-decoupling} in our model. We construct different bounded demand distributions listed as follows: 1) Gaussian distribution $\calN(3,1)$ clipped on support $[1,4]$; 2) uniform distribution over $[1,4]$; 3) exponential distribution with mean $3$ clipped on support $[1,4]$. We also set the number of round to be $T=800000$ and choose the cost configuration to be $(h_1,h_2,p_1)=\{(0.3,0.1,0.5),(0.4,0.25,0.6),(0.5,0.35,0.75),(0.6,0.4,0.85)\}$. For each demand distribution, $128$ trials are processed and we calculate the mean and the standard deviation of the regret over the $128$ trials. The results are shown in~\pref{fig:coupling}. The results show the effectiveness of our proposed algorithm in both the centralized and decentralized setting.

\begin{figure}[!t]
\centering
\includegraphics[width=0.327\textwidth]{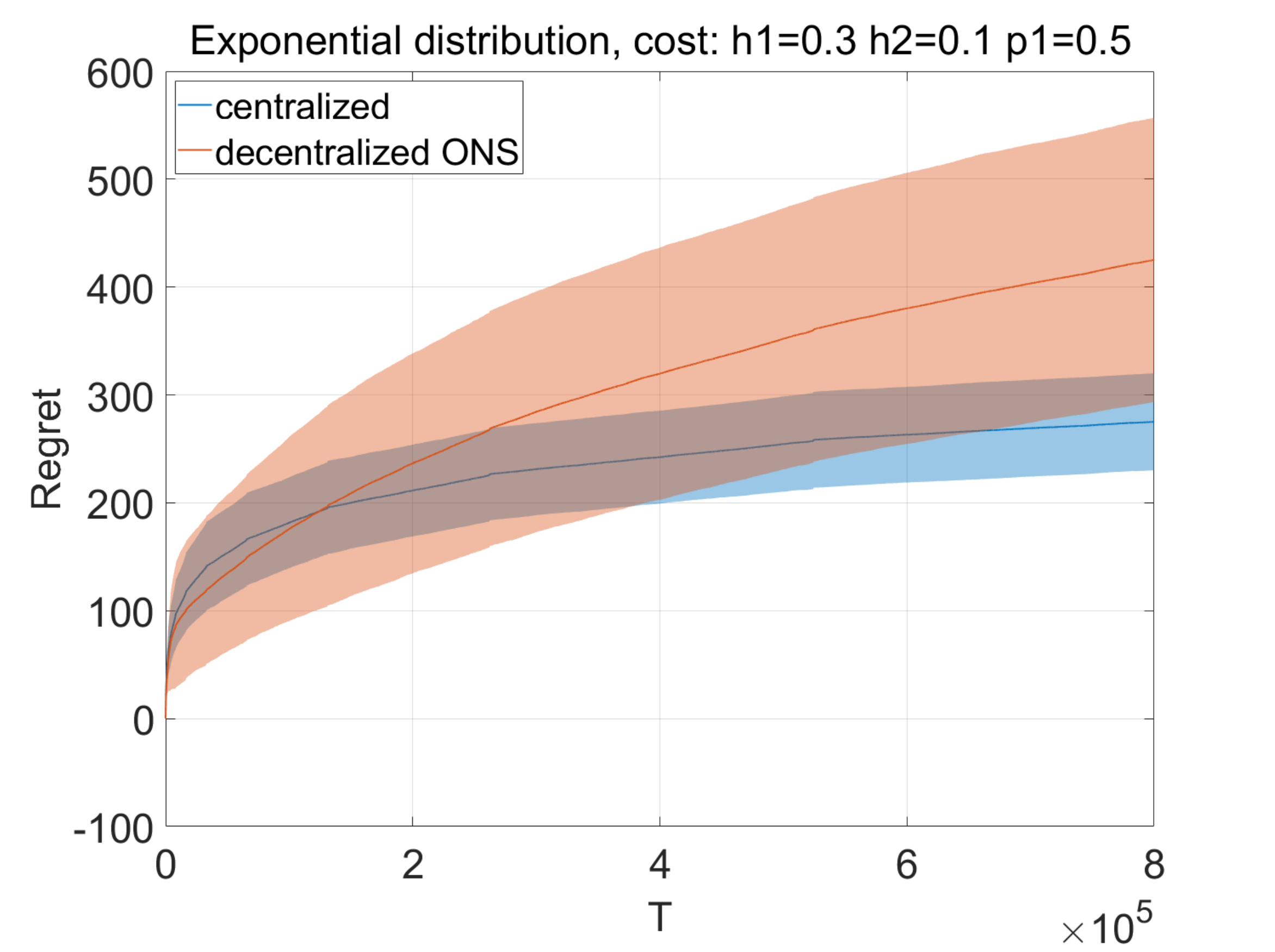}
\includegraphics[width=0.327\textwidth]{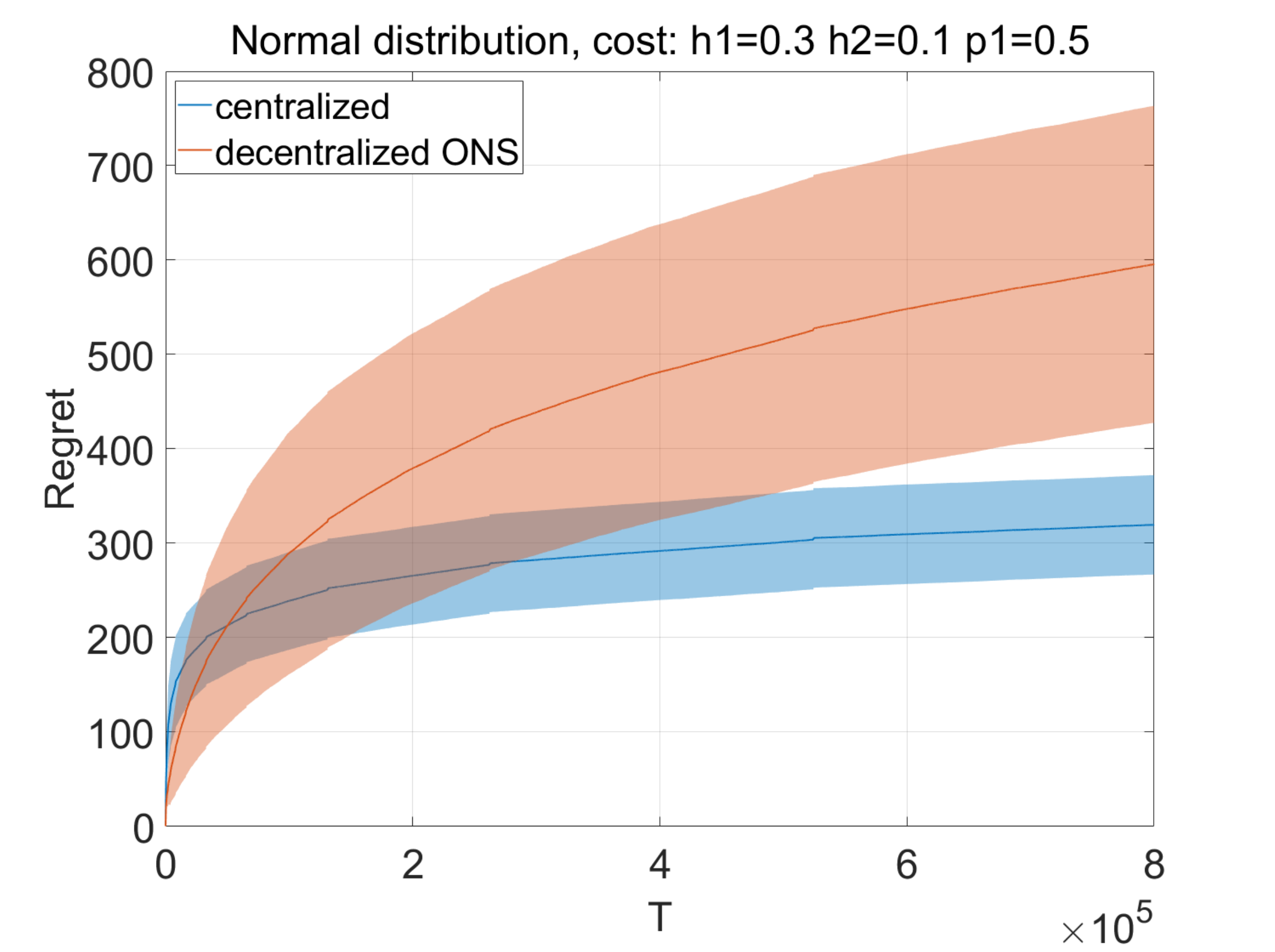}
\includegraphics[width=0.327\textwidth]{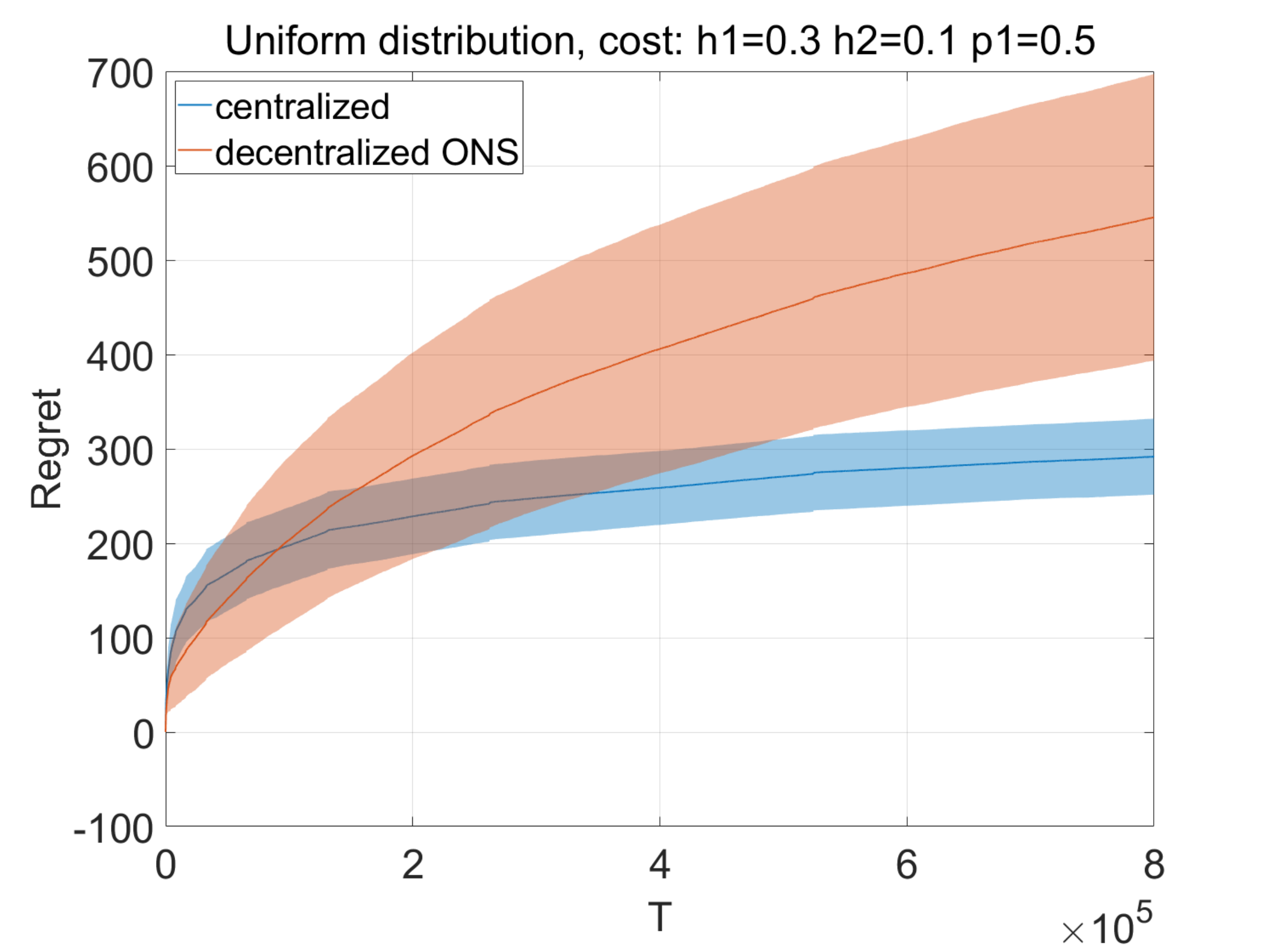}
\includegraphics[width=0.327\textwidth]{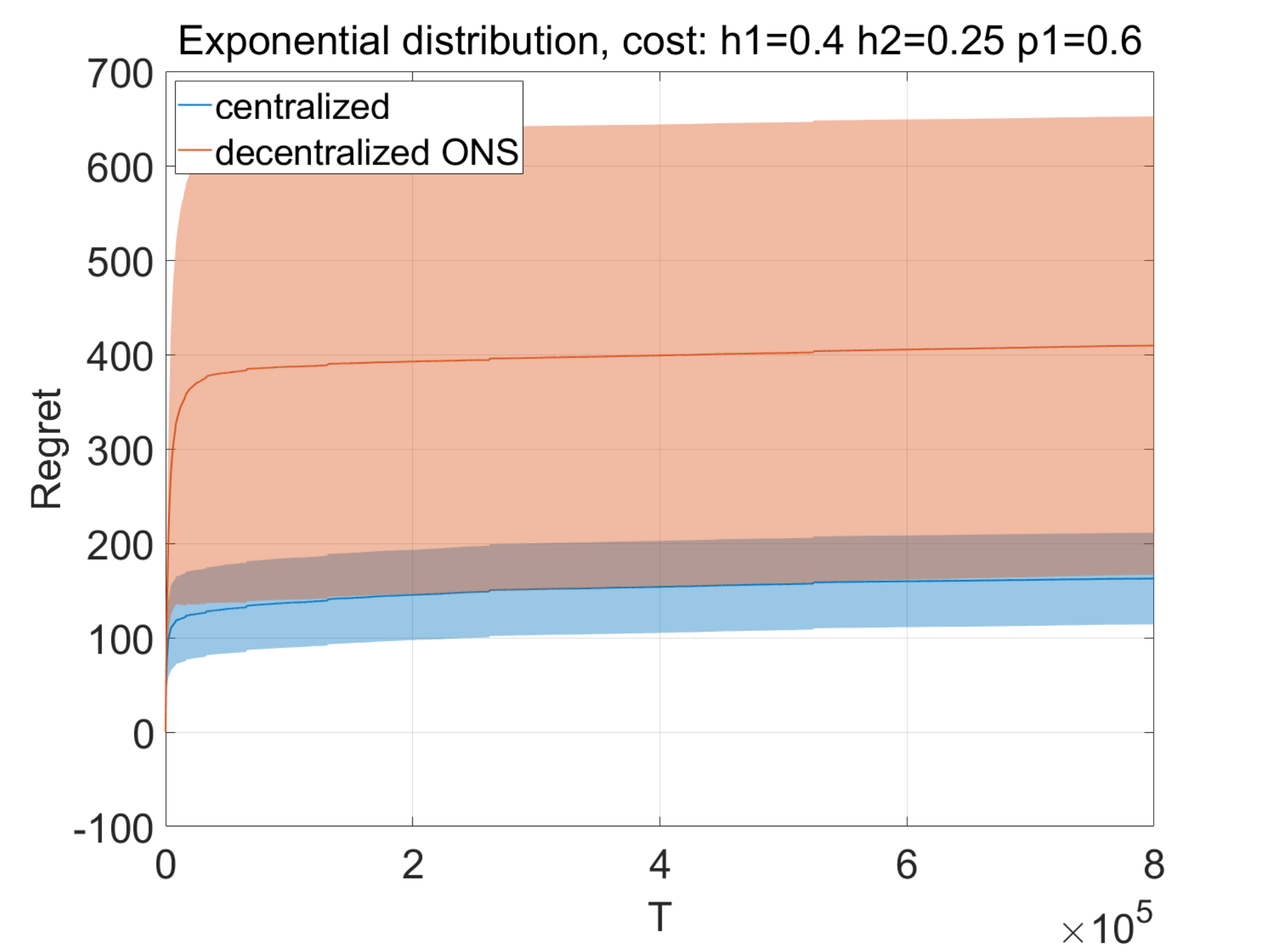}
\includegraphics[width=0.327\textwidth]{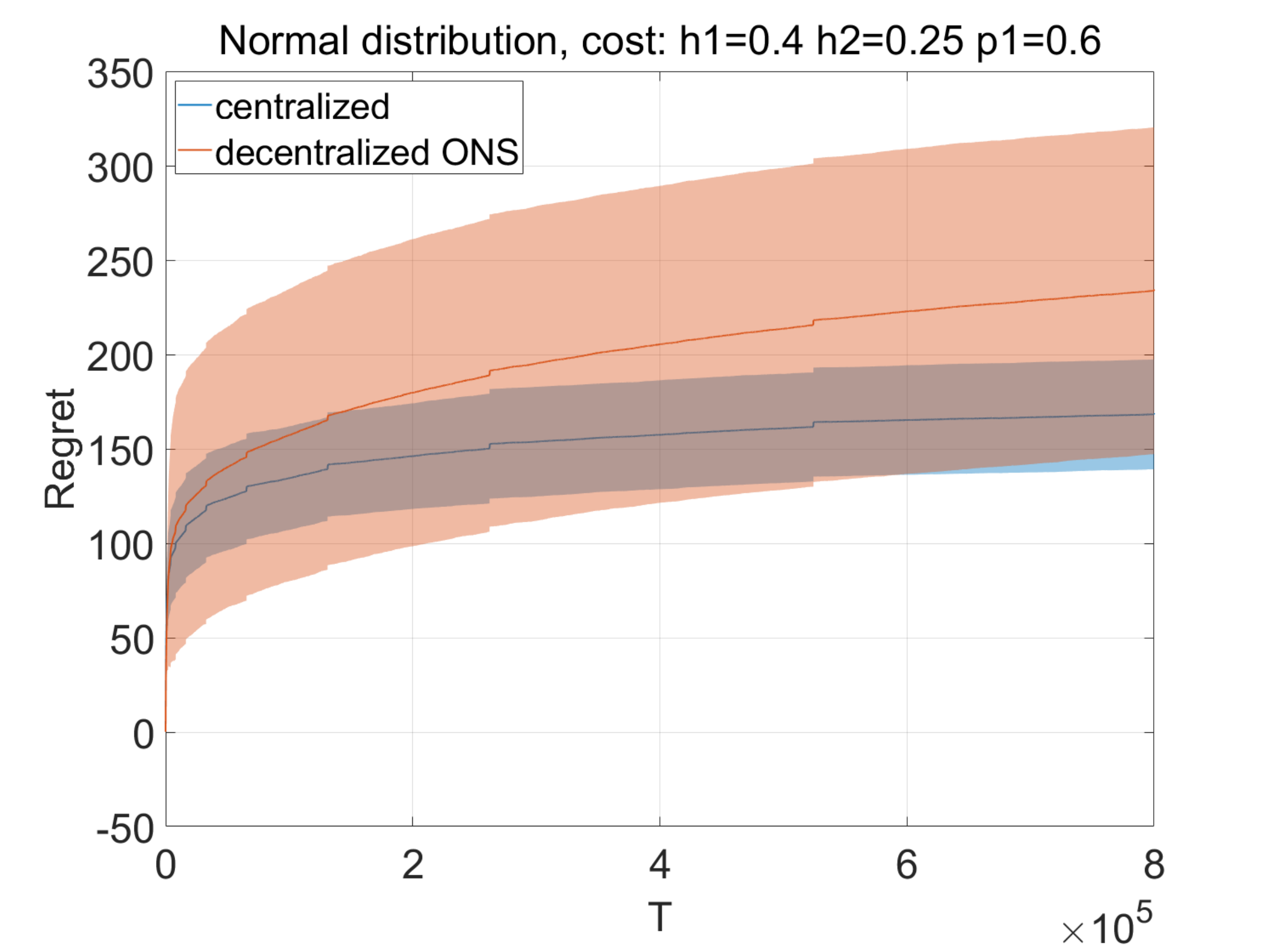}
\includegraphics[width=0.327\textwidth]{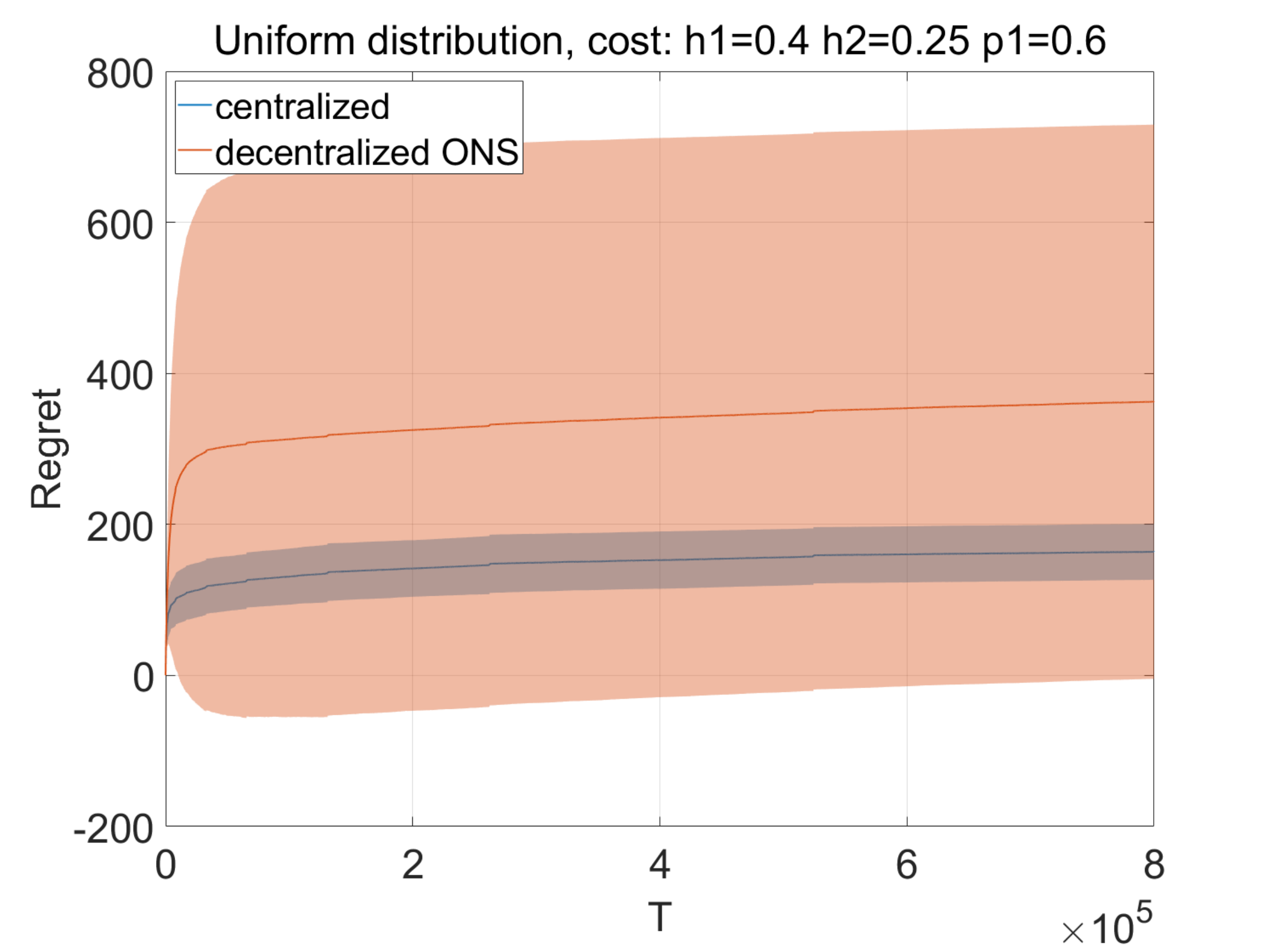}
\includegraphics[width=0.327\textwidth]{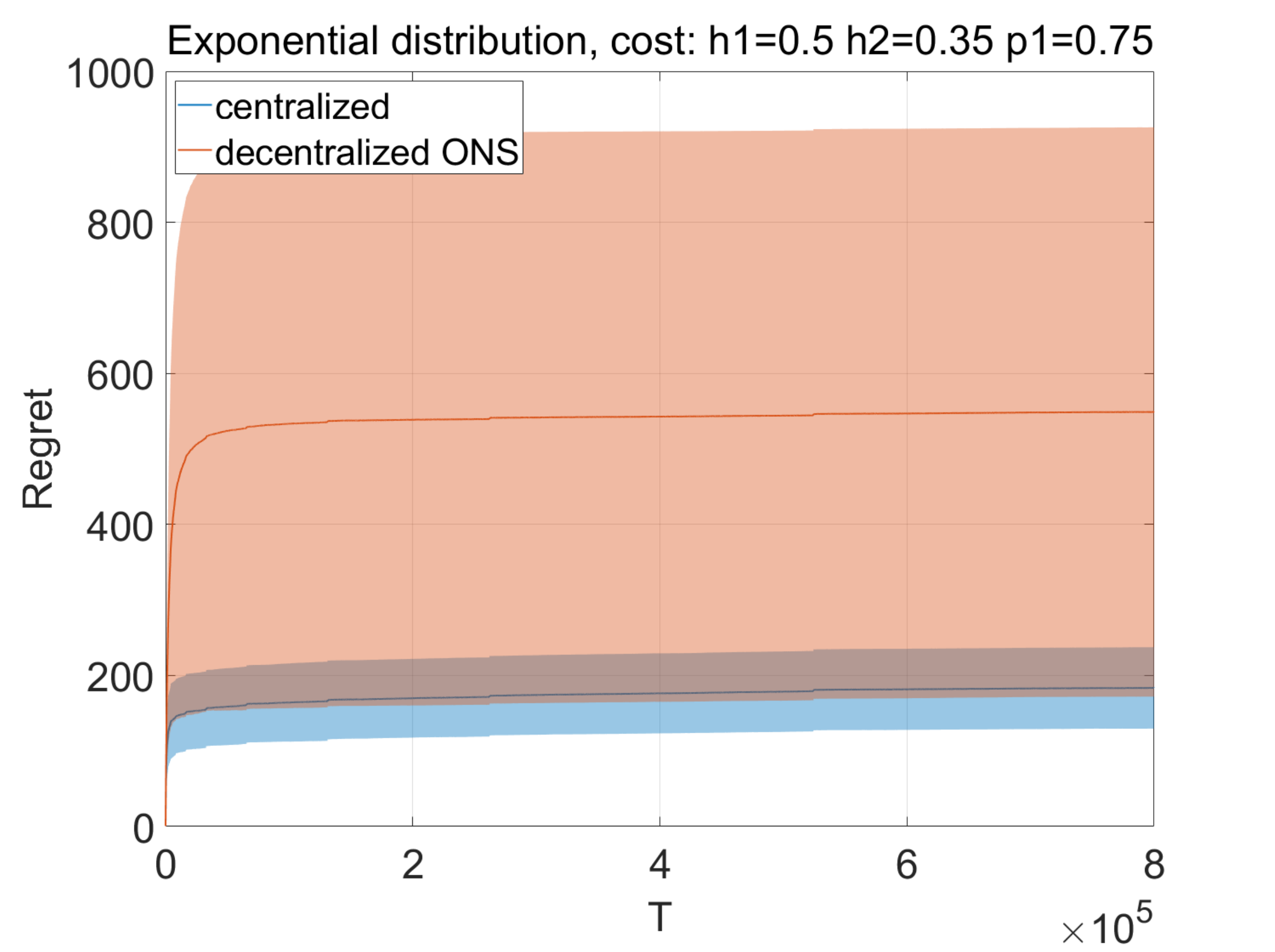}
\includegraphics[width=0.327\textwidth]{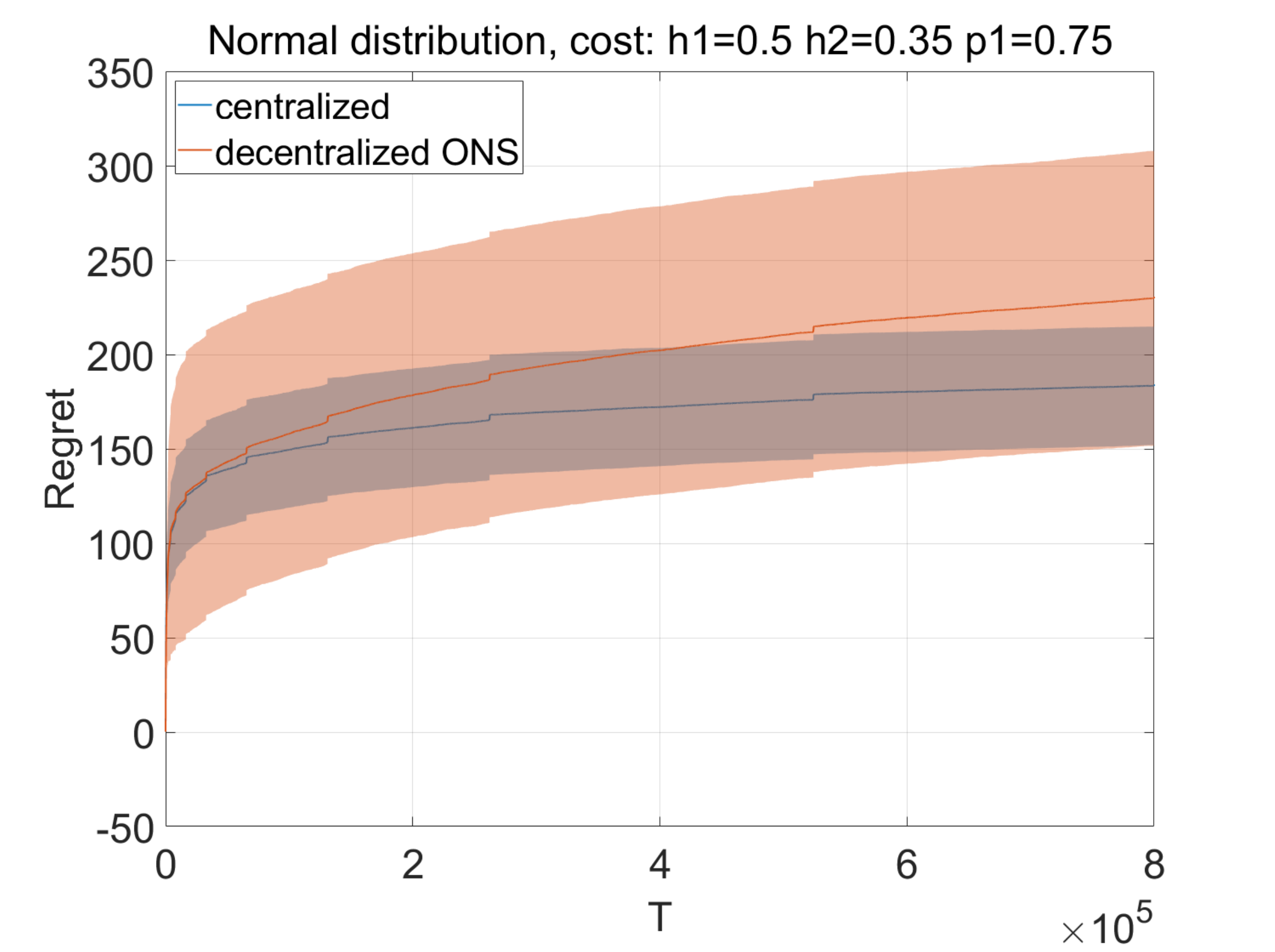}
\includegraphics[width=0.327\textwidth]{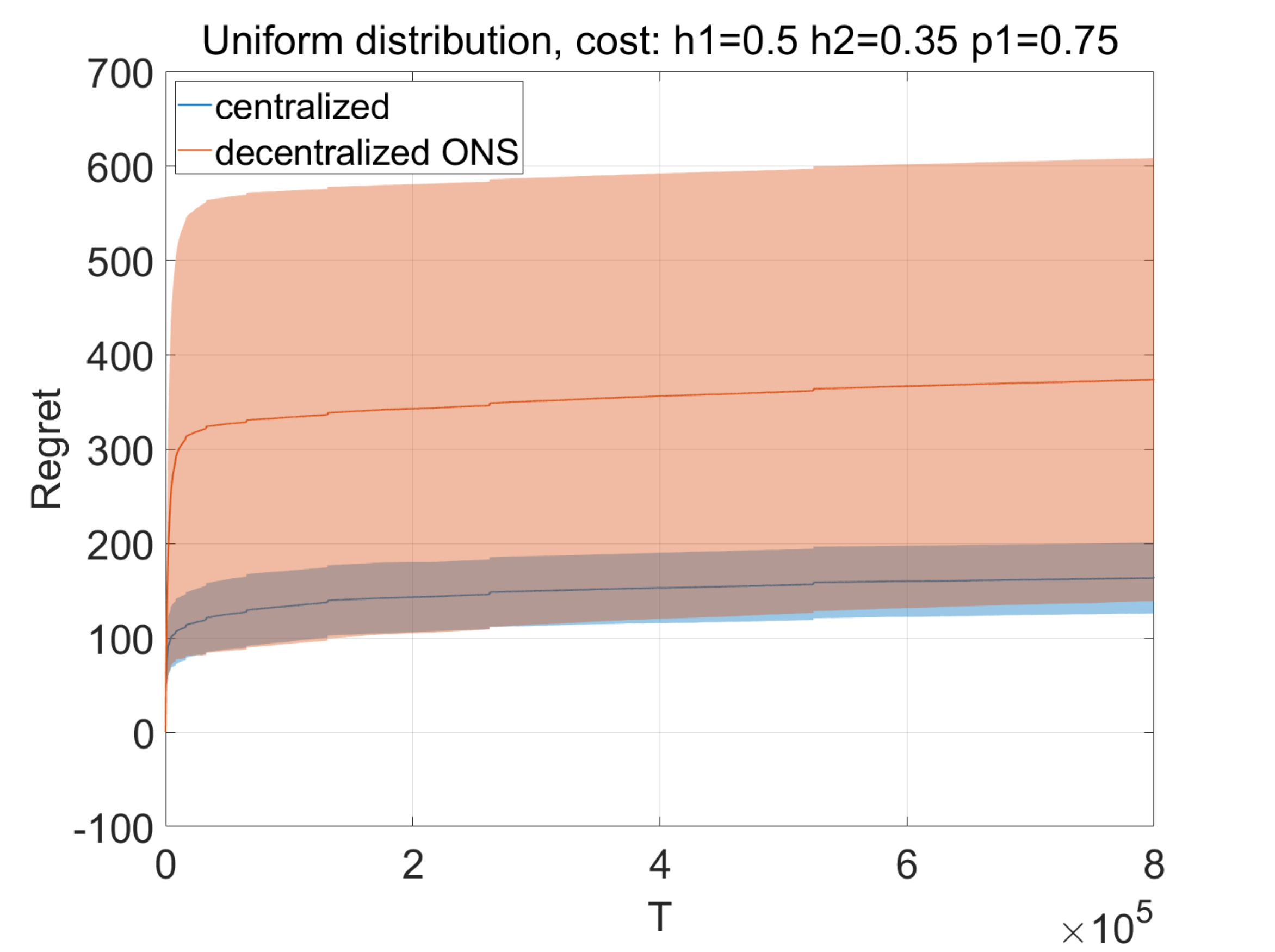}
\includegraphics[width=0.327\textwidth]{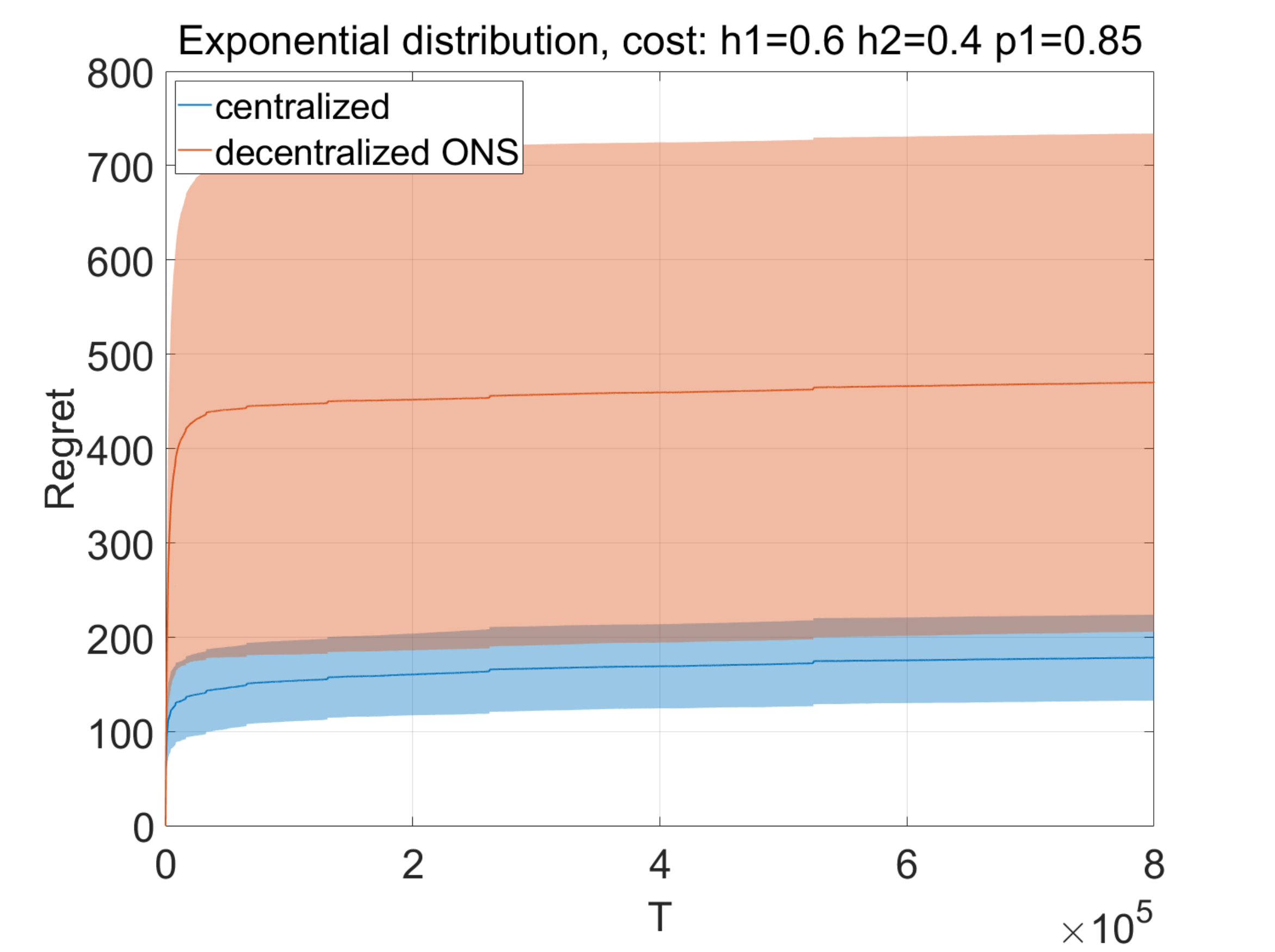}
\includegraphics[width=0.327\textwidth]{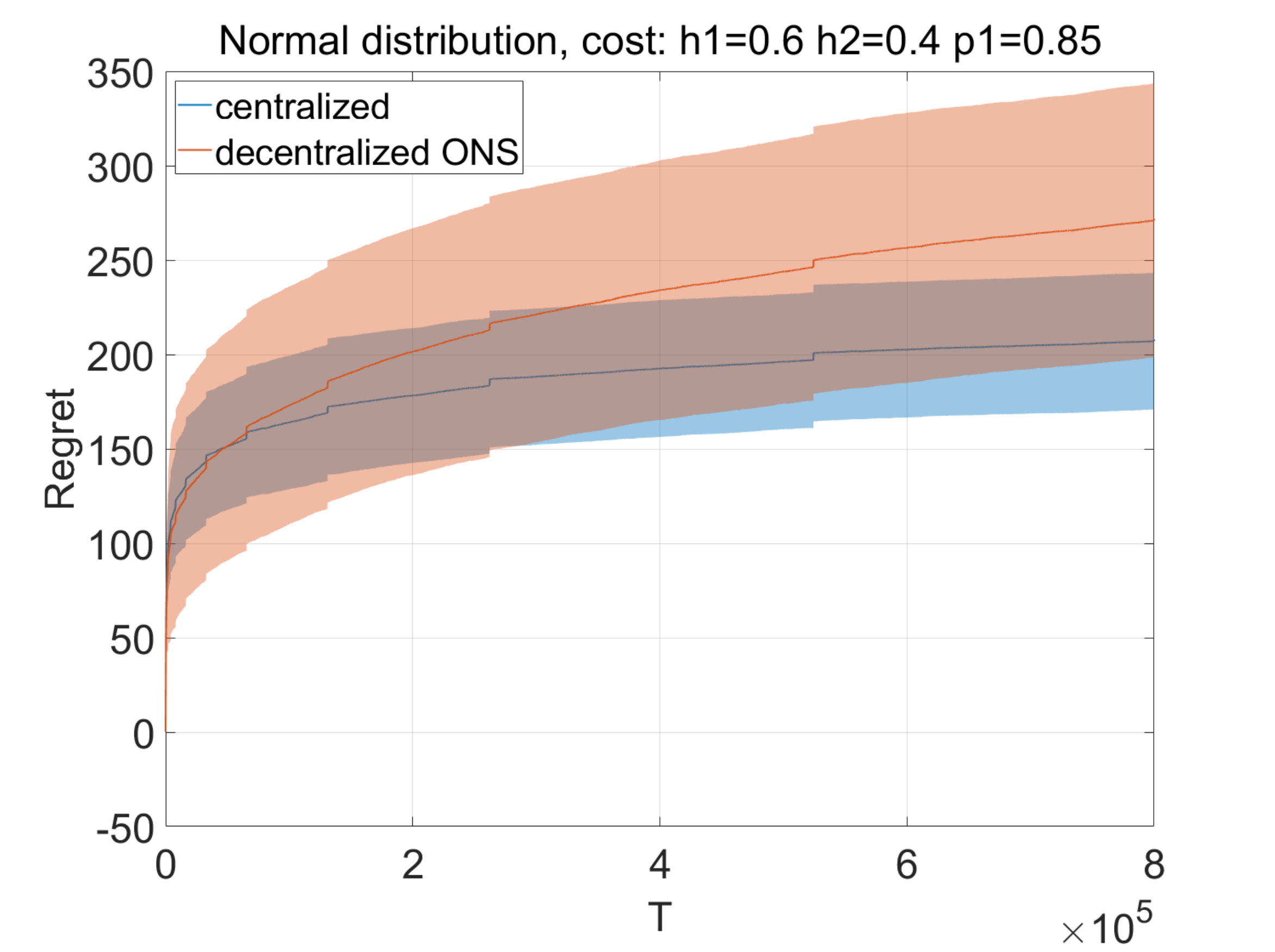}
\includegraphics[width=0.327\textwidth]{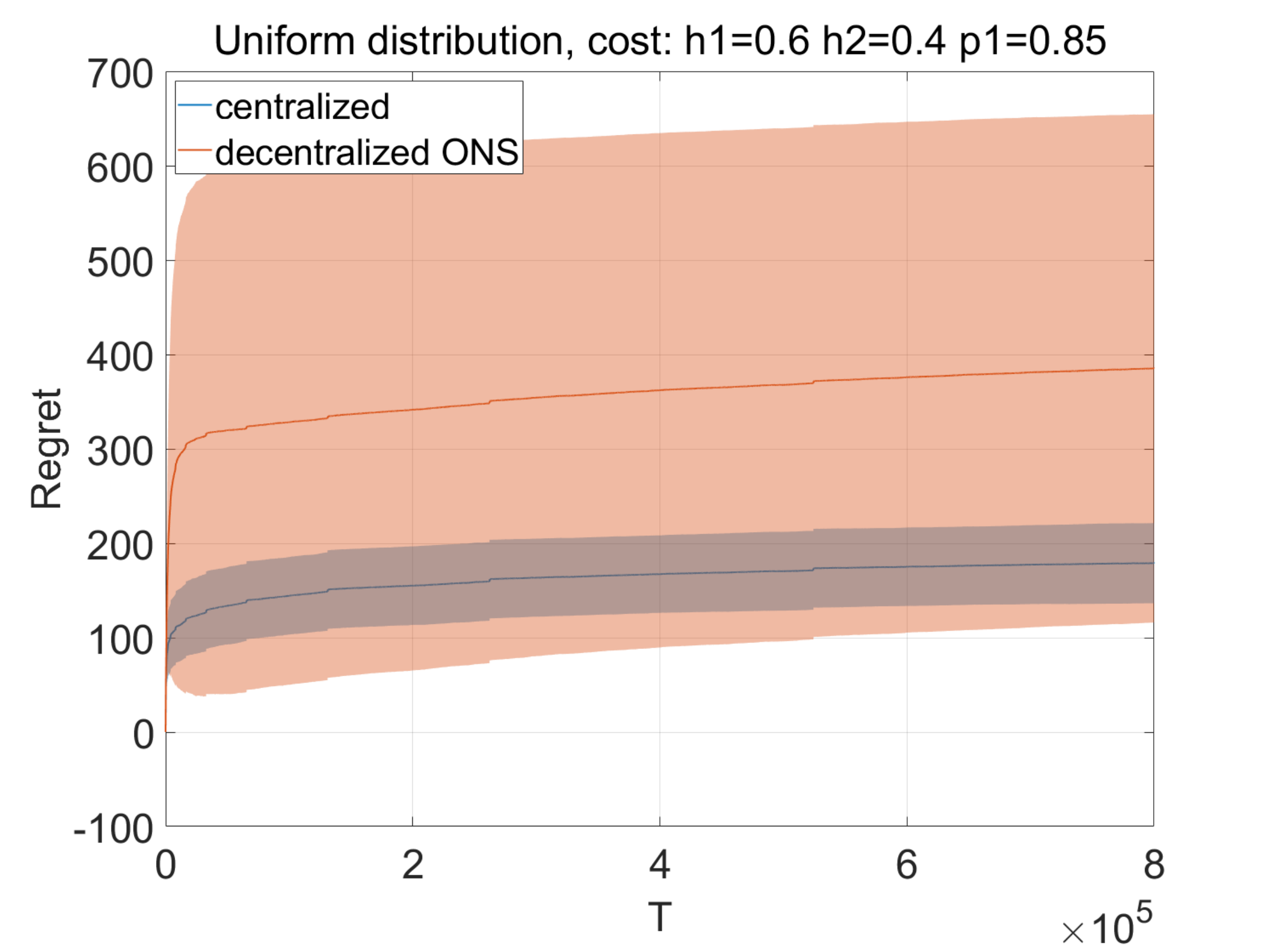}
\caption{
Empirical results of our algorithms applied to our model with cost parameters $(h_1,h_2,p_1)=\{(0.3,0.1,0.5),(0.4,0.25,0.6),(0.5,0.35,0.75),(0.6,0.4,0.85)\}$ and $T=800000$. Each column shows the results of a specific demand distribution with different cost parameter configurations. The algorithm is processed over $128$ trials of demand sequences drawn from the four distributions. The solid line is the mean over $128$ trials and the shaded area is mean $\pm$ std. The performance of of~\pref{alg:central-lazy-three} is shown in the blue curve (``centralized'') and the one of~\pref{alg:ons-lazy-three-non-decoupling} is shown in the orange curve (``decentralized ONS''). The results show the effectiveness of our proposed algorithms.}
\label{fig:coupling}
\end{figure}

\end{document}